\documentclass[10pt,twocolumn,letterpaper]{article}

\usepackage{cvpr}              
\definecolor{cvprblue}{rgb}{0.21,0.49,0.74}
\usepackage[pagebackref,breaklinks,colorlinks,allcolors=cvprblue]{hyperref}

\usepackage{amsmath}
\usepackage{amsthm}
\newtheorem{proposition}{Proposition}
\newtheorem{corollary}{Corollary}[proposition] 

\usepackage[accsupp]{axessibility}

\usepackage{graphicx}
\usepackage{amsmath}
\usepackage{amssymb}
\usepackage{booktabs}
\usepackage{indentfirst}

\usepackage[capitalize]{cleveref}
\crefname{section}{Sec.}{Secs.}
\Crefname{section}{Section}{Sections}
\Crefname{table}{Table}{Tables}
\crefname{table}{Tab.}{Tabs.}
\usepackage{epsfig}
\usepackage{multirow}
\usepackage{makecell}
\usepackage{amsfonts}
\usepackage{enumitem}
\usepackage{array}
\usepackage{algorithm}
\usepackage{algpseudocode}
\usepackage[symbol]{footmisc}
\usepackage[accsupp]{axessibility}
\usepackage{blindtext}
\usepackage{graphicx}
\usepackage{mathtools}
\usepackage{arydshln}

\usepackage{tabularx}
\usepackage{adjustbox}
\usepackage{multirow}
\usepackage{booktabs}
\usepackage{enumitem}
\usepackage{float} 

\usepackage{colortbl}
\usepackage{amssymb}
\usepackage{pifont}
\newcommand{\cmark}{\ding{51}} 
\newcommand{\xmark}{\ding{55}} 

\definecolor{darkgreen}{RGB}{0,150,0}

\algrenewcommand\algorithmicindent{0.75em}
\def\httilde{\mbox{\tt\raisebox{-.5ex}{\symbol{126}}}}

\def\eqref#1{(\ref{eq:#1})}
\def\eqlabel#1{\label{eq:#1}}
\def\figref#1{\ref{fig:#1}}
\def\figlabel#1{\label{fig:#1}}
\def\pparagraph#1{\par{\bf #1}~~}

\def\x{{\mathbf x}}
\def\L{{\cal L}}

\def\NoNumber#1{{\def\alglinenumber##1{}\State #1}\addtocounter{ALG@line}{-1}}

\def\eqref#1{(\ref{eq:#1})}
\def\eqlabel#1{\label{eq:#1}}
\def\figref#1{\ref{fig:#1}}
\def\figlabel#1{\label{fig:#1}}
\def\pparagraph#1{\par{\bf #1}~~}
\def\httilde{\mbox{\tt\raisebox{-.5ex}{\symbol{126}}}}
\def\xcomment#1{\textcolor[rgb]{.3,.3,.1}{\text{$/\!\!/$ {\em #1}}}}
\def\comment#1{\kern-1cm\xcomment{#1}}
\def\eqcomment#1{\kern-1cm\xcomment{#1}}

\def\xx#1{\textcolor{red}{ #1}}

\def\m#1{\ensuremath{\mathtt{#1}}}
\def\mt#1{\ensuremath{\mathtt{\tilde{#1}}}}
\def\v#1{\ensuremath{\mathbf{#1}}}

\def\colspace#1{\mathrm{col}(#1)}
\def\nullspace#1{\mathrm{null}(#1)}
\def\localmin{\widehat{\min} }
\def\localargmin{{\arg\widehat{\min}}}

\DeclareMathOperator*{\argmin}{arg\,min}
\def\symmx#1{{\mathbb S}^{#1}}
\def\Rmx#1#2{{\mathbb R}^{{#1}\times{#2}}}

\def\real{\mathbb{R}}
\def\tr{^\top}
\def\trinv{^{-\top}}
\def\inv#1{#1^\mathsf{-1}}
\def\pinv#1{#1^\mathsf{\dagger}}
\def\mupinvof#1{{{#1}^{-\lambda}}}
\def\expm{\mathrm{expm}}
\def\logm{\mathrm{logm}}

\def\hadamard{\odot}
\def\kron{\otimes}
\def\vec{\operatorname{vec}}
\def\unvec{\operatorname{unvec}}

\def\sym{\operatorname{sym}}

\def\norm#1{\left\lVert#1\right\rVert}
\def\fro#1{\norm{#1}_F}
\def\l2#1{\norm{#1}_2}
\def\nuclear#1{\norm{#1}_*}

\def\err{f}
\def\erru{f^*}

\def\vDu{{\v\Delta \vu}}
\def\vDv{{\v\Delta \vv}}
\def\vDx{{\v\Delta \v x}}
\def\vDy{{\v\Delta \v y}}

\def\vx{\v x}
\def\vp{\v p}

\def\vhx{\hat{\v x}}
\def\vtx{\hat {\v x}}
\def\vy{\v y}
\def\vf{\v f}
\def\vp{\v p}
\def\vdx{\v {\Delta x}}
\def\vg{\v g}
\def\vgu{\vg^*}
\def\vgutr{\vg^{*\top}}
\def\vgup{\vgu_p}
\def\vgur{\vgu_r}
\def\vw{\v w}
\def\vm{\v m}
\def\vr{\v r}
\def\vru{\vr^*}
\def\vu{\v u}
\def\vup{\vu_p}
\def\vuperp{\vu_\perp}
\def\vuv{\v u^*}
\def\vv{\v v}
\def\vvu{\v v^*}
\def\hatvvu{{\hat{\v v}}^*}
\def\vx{\v x}
\def\vs{\v s}
\def\verr{{\boldsymbol\varepsilon}}
\def\verru{{\boldsymbol\varepsilon}^*}
\def\vdu{\partial \vu}
\def\vdv{\partial \vv}
\def\vdvu{\partial \vvu}
\def\dell{{\boldsymbol\delta}}

\def\set#1{\mathcal{#1}}

\def\mJu{\mJ_\vu}
\def\mJuu{\mJ_\vu^*}
\def\mJv{\m J_\vv}
\def\tmJv{\tilde{\m J}_\vv}
\def\mPv{\m P_\vv}
\def\mQv{\m Q_\vv}

\def\mA {\m A}
\def\tildemJu{\tilde\mJ^*}
\def\mJutr{\mJ^{*\top}}
\def\mU{\m U}
\def\mUperp{\mU_{\perp}}
\def\mUp{\mU_p}
\def\mUr{\mU_r}
\def\mUu{\mU^*}
\def\mVu{\mV^*}
\def\mVutr{\mV^{*\top}}
\def\mW{\m W}
\def\mM{\m M}
\def\mS{\m S}
\def\dvdu{\frac{d\vvu}{d\vu}}
\def\mdU{\partial\mU}
\def\mdV{\partial\mV}
\def\mdVu{\partial\mVu}

\def\cU{{\cal U}}
\def\cV{{\cal V}}

\def\twiddle#1{{\tilde{#1}}}
\newcommand{\centered}[1]{\begin{tabular}{l} #1 \end{tabular}}

\def\paperID{3334} 
\def\confName{CVPR}
\def\confYear{2026}

\title{Revisiting Geometric Obfuscation with Dual Convergent Lines for Privacy-Preserving Image Queries in Visual Localization}

\author{Jeonggon Kim$^1$ \quad Heejoon Moon$^{2}$ \quad Je Hyeong Hong$^{1,2}$\footnotemark[2]\\
$^1$Dept. Electronic Engineering, Hanyang University$ \quad ^2$Dept. Artificial Intelligence, Hanyang University}

\begin{document}
\maketitle


\begin{abstract} 
Privacy-Preserving Image Queries (PPIQ) are an emerging mechanism for cloud-based visual localization, enabling pose estimation from obfuscated features instead of private images or raw keypoints.
However, the main approaches for PPIQ, primarily geometry-based and segmentation-based obfuscation, both suffer from vulnerabilities to recent privacy attacks.
In particular, a fundamental limitation of geometry-based obfuscation is that the spatial distribution of obfuscated neighboring lines still effectively surrounds the original keypoint location, providing exploitable cues for recovering the original points.
We revisit this geometric paradigm and introduce Dual Convergent Lines (DCL), a novel keypoint obfuscation method demonstrating strong resilience against such attack.
DCL places two fixed anchors on a central partition line and lifts each keypoint to a line originating from one of them, with the active anchor determined by the keypoint's location.
This arrangement invalidates the geometry-recovery attack~\cite{chelani2024obfuscation} by making its optimization ill-posed:
Neighboring lines either misleadingly converge to one anchor, yielding a trivial solution, or become near-parallel at the partition boundary, yielding an unstable high-variance solution. Both outcomes thwart point recovery.
DCL is also compatible with an existing line-based solver, enabling deployment in traditional localization pipelines.
Experiments on both indoor and large-scale outdoor datasets demonstrate DCL's robustness against privacy attacks, efficiency, and scalability, while achieving practical localization performance. 
\end{abstract}
\footnotetext[2]{Corresponding author}
\footnotetext{E-mail: \{drgon22, wilko97, jhh37\}$@$hanyang.ac.kr}


\vspace{-6mm}
\section{Introduction}
\label{sec:intro}

Visual Localization (VL) is the task of estimating the 6-DoF (Degrees of Freedom) camera pose for a given query image and is a key component for applications, such as Augmented Reality (AR)~\cite{azure_spatial_anchors, google_vps, niantic_vps_docs}, autonomous driving, and robotics~\cite{lynen2015get, mur2015orb, davison2007monoslam}. 
Modern VL systems widely employ a cloud-based architecture~\cite{azure_spatial_anchors, google_vps, niantic_vps_docs}, where a client device sends an ``image query", either the raw image or a set of extracted local features to the server for localization.
This benefits the client by offloading both the storage of the large-scale 3D map and the computation process for the pose estimation.
However, privacy concerns have been raised in cloud-based systems, where man-in-the-middle attackers can intercept private image queries during transmission.
While sending a raw image is obviously prone to the privacy attack, Pittaluga~\etal~\cite{pittaluga2019revealing} demonstrated that even local features can be inverted to reconstruct high-fidelity RGB images, thereby highlighting the privacy concerns known as an \emph{``Inversion attack"}.
    
\begin{figure}[t]
    \center{
    \includegraphics[width=0.90\linewidth]{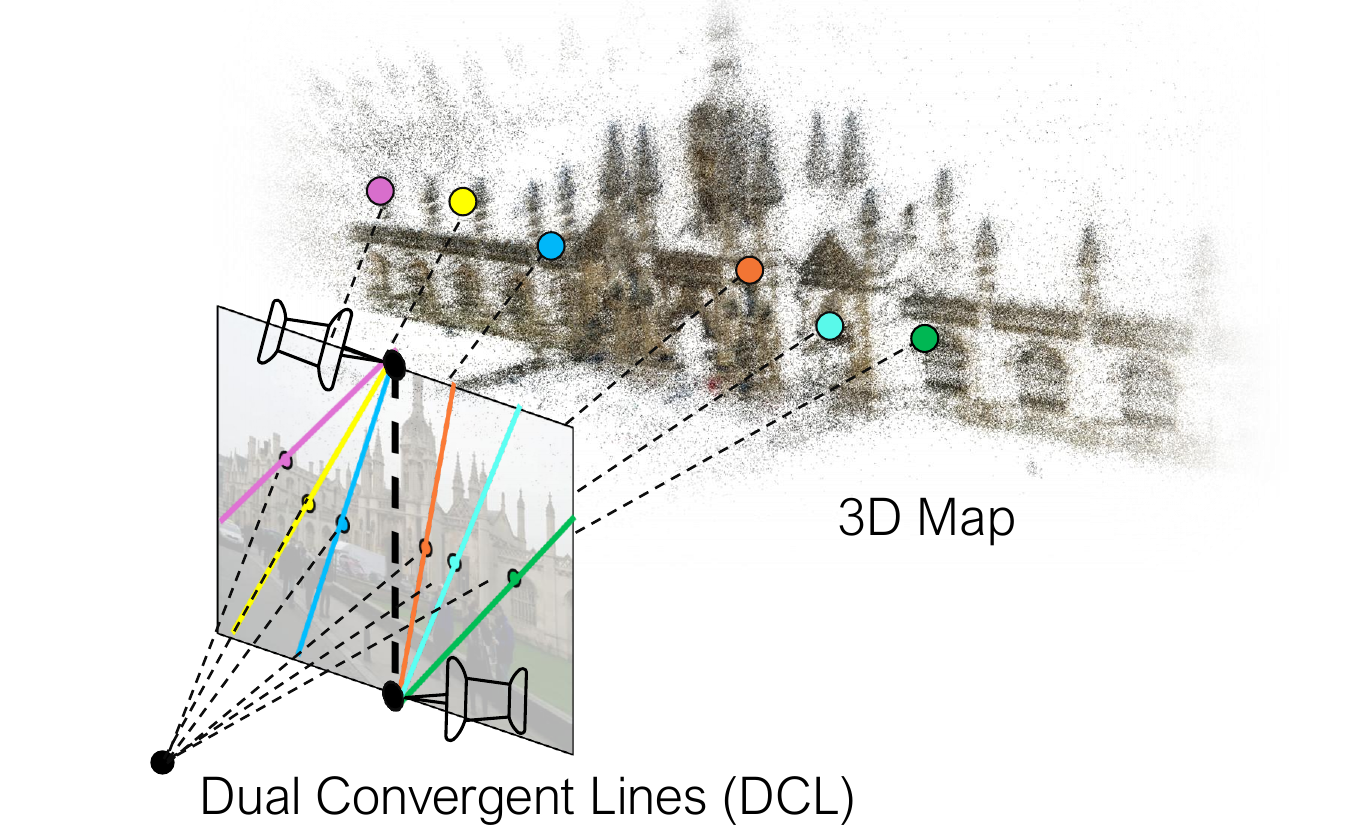}\\
    \scriptsize{~~~~~Original~~~~~~~~~~~~~~~~~~~~~~~~~~~~~~~~Recovered via privacy attack~\cite{chelani2024obfuscation, pittaluga2019revealing}~~~~}\\
    \includegraphics[width=0.32\linewidth]{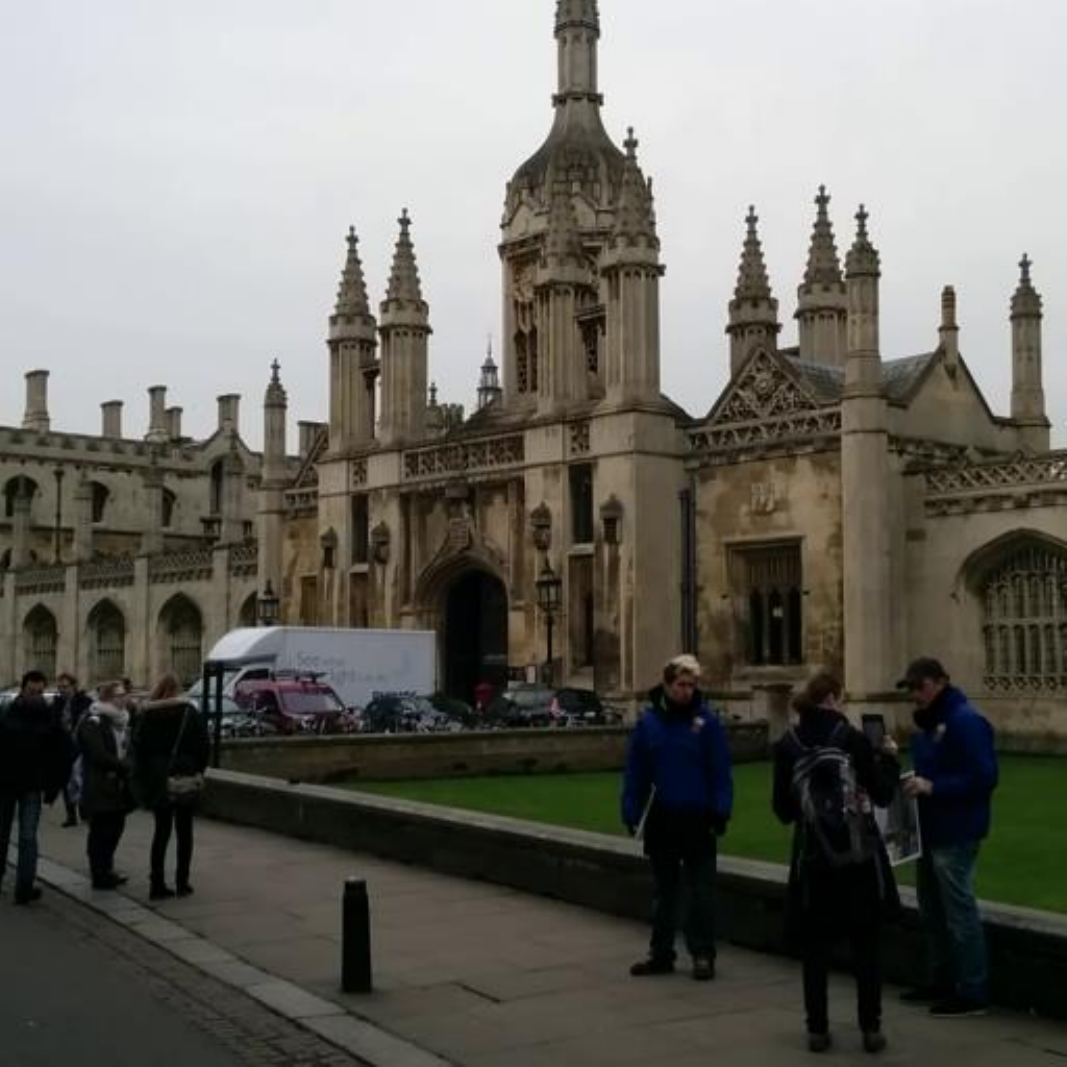}
    ~~
    \includegraphics[width=0.32\linewidth]{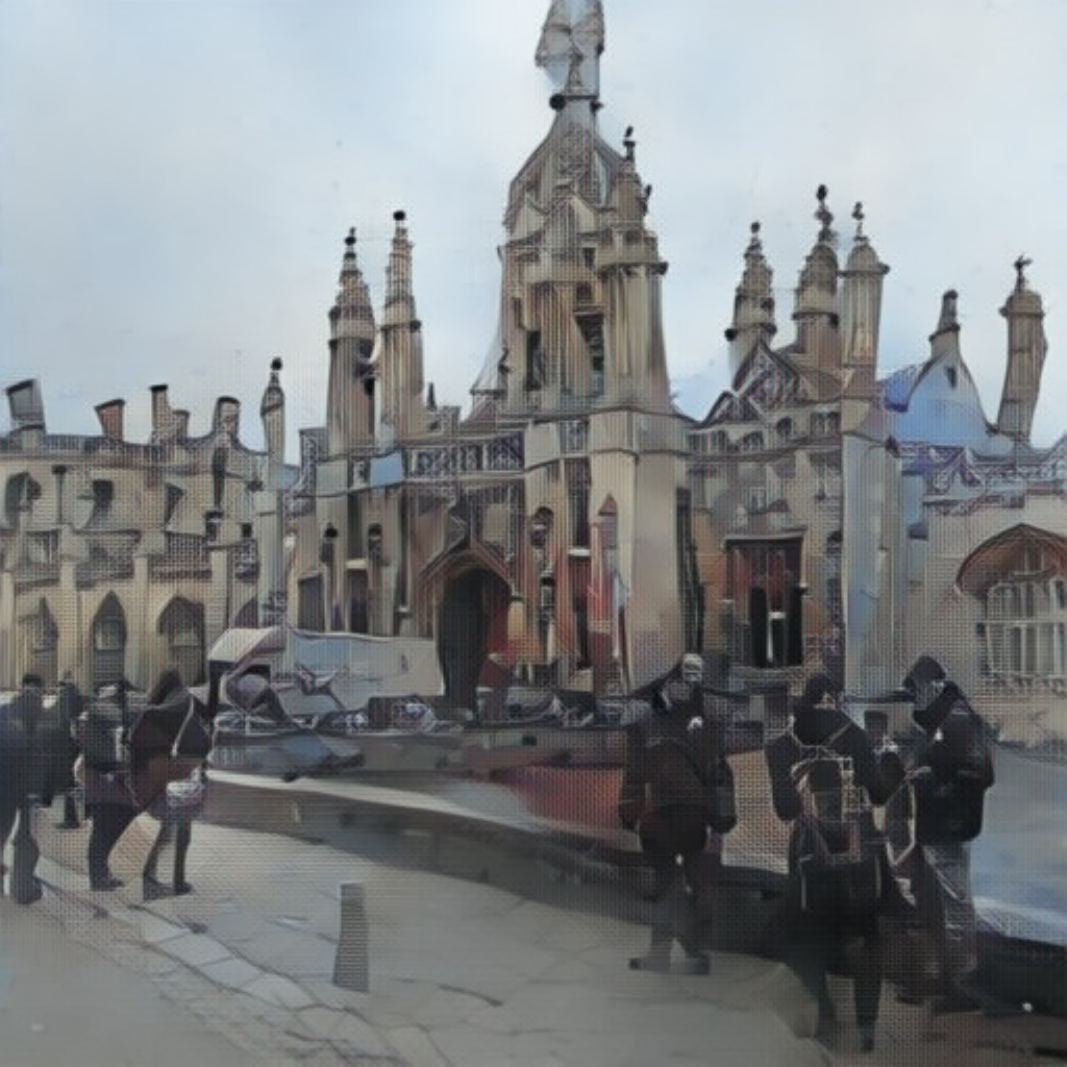}
    \includegraphics[width=0.32\linewidth]{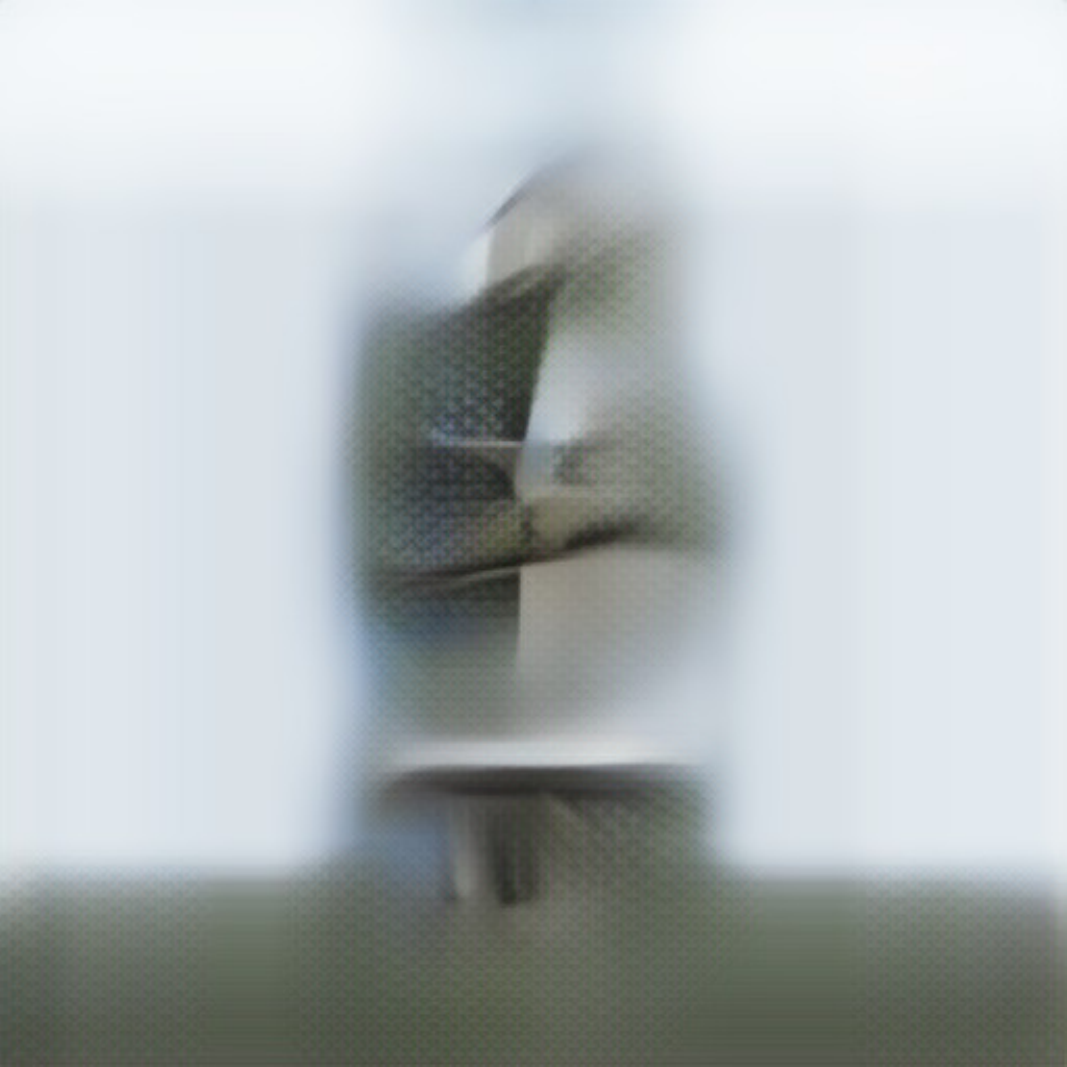}\\ \scriptsize{~~~~~~~~~~~~~~~~~~~~~~~~~~~~~~~~~~~~~~~~Random Lines~\cite{speciale2019queries}~~~~~~~~~~~~~~~\textbf{DCL (ours)}}
    \\}
    \vspace{-2mm}
    \caption{
    Existing privacy-preserving schemes such as Random Lines~\cite{speciale2019queries}, are shown to be prone to a recent attack that approximately recovers the keypoint positions~\cite{chelani2024obfuscation}, leading to successful image recovery. In contrast, our \emph{Dual Convergent Lines} show robustness to the attack and allow practical localization.
    }
    \figlabel{teaser}
    \vspace{-2mm}
\end{figure}

This concern spurred the research community to develop methods to ensure the privacy preservation of image queries against inversion attack while enabling localization, known as \emph{Privacy-Preserving Image Queries (PPIQ)}.
Out of many PPIQ strategies~\cite{speciale2019queries, dusmanu2021privacy, pittaluga2023ldp, ng2022ninjadesc, pan2023permut, pietrantoni2023segloc, pietrantoni2025gaussian}, two main paradigms have been remarked for allowing accurate localization: i) geometry obfuscation-based and ii) segmentation-based approaches.
Geometry obfuscation-based methods aim to obfuscate the 2D keypoint locations (\eg 2D random lines~\cite{speciale2019queries}, coordinate permutation~\cite{pan2023permut}) with minimal obfuscation overhead in the client side. 
On the other hand, segmentation-based methods~\cite{pietrantoni2023segloc, pietrantoni2025gaussian} use semantic maps as feature descriptors, which contain fewer visual details and thereby mitigate the risk of direct inversion attacks. However, their drawbacks include slow localization and incompatibility with traditional pipelines~\cite{schoenberger2016sfm, sarlin2019coarse}.
        
While both representations were considered as privacy-preserving, recent studies~\cite{chelani2023privacy, anonymous2025vulnerability} have demonstrated the limitations in each approach and pose privacy risks again.
Chelani~\etal~\cite{chelani2024obfuscation} showed 
that keypoints can be recovered by identifying the neighboring obfuscations through a learning-based search over their descriptors~\cite{lowe2004sift, detone2018superpoint}, followed by optimization that minimizes the distance between neighboring obfuscations and the estimated point location.
Very recently, \cite{anonymous2025vulnerability} has shown that the diffusion model is capable of reconstructing detailed images even from the high-level semantic representations.
Consequently, guaranteeing privacy in image queries remains an open question again.

In this work, considering the efficiency in localization and minimal obfuscation overhead on the client-side, we revisit the geometric obfuscation approach and introduce a novel line obfuscation method called \emph{Dual Convergent Lines (DCL)}.
Specifically, DCL constructs obfuscated lines based on a spatial partitioning of the image relative to two fixed anchors.
This specific arrangement invalidates the core assumption of the geometry-recovery attack~\cite{chelani2024obfuscation} 
and leads to two distinct failure results:
1) If neighboring lines originate from the same anchor, the recovered points collapse to a trivial solution at that anchor, and
\noindent 2) as shown in Fig.~\ref{fig:main_pipeline}, for keypoints located near the boundary region, neighboring lines originating from the opposing anchor become near-parallel. 
This geometry yields an unstable, high-variance solution, far from the original location.
Beyond its robustness against privacy attacks, DCL can be efficiently integrated into conventional localization pipelines, where the line-based solver~\cite{speciale2019queries} offers favorable runtime and accuracy.
To summarize, our key contributions are as follows:
\begin{itemize}
    \item We introduce \textbf{Dual Convergent Lines (DCL)}, a novel line obfuscation method that, to the best of our knowledge, is the only method resilient to the privacy attack~\cite{chelani2024obfuscation}.
    \item We provide a theoretical analysis of DCL’s robustness, showing how its dual-anchor based line lifting induces ill-posed solutions in geometry-recovery attacks, causing either trivial collapse or unstable, high-variance solutions.
\end{itemize}
Finally, experimental results on both indoor~\cite{shotton2013scene} and two large-scale outdoor~\cite{kendall2015posenet, sattler2018aachen} datasets demonstrate the DCL's robustness against the advanced privacy attack~\cite{chelani2024obfuscation} while achieving practical localization performance.

\begin{table}[t]
\centering
\begin{tabular}{c}
    \includegraphics[width=0.97\linewidth]{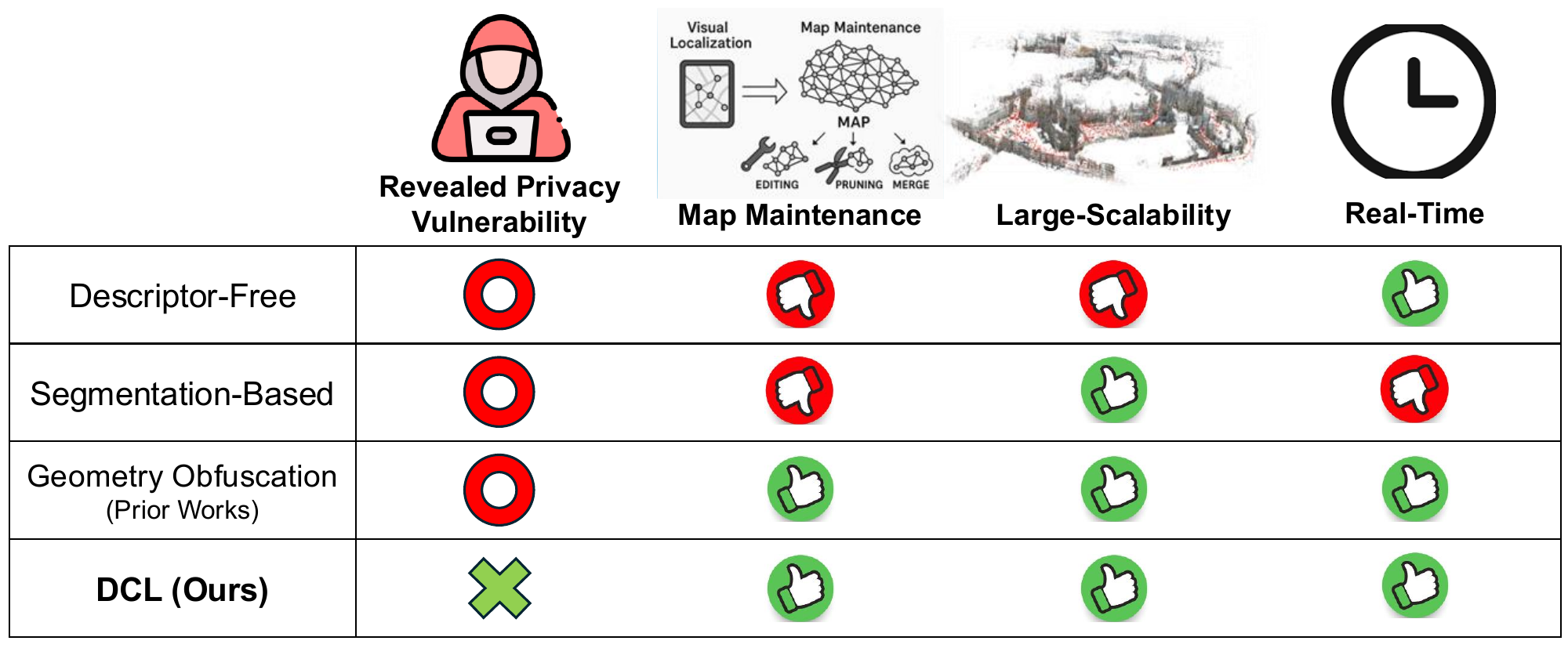}
\end{tabular}
\vspace{-4mm}
\caption{
Comparison of privacy-preserving image queries methods.
Recently, elevated privacy attacks~\cite{chelani2024obfuscation, anonymous2025vulnerability, pittaluga2019revealing} have highlighted the vulnerabilities of existing schemes.
In contrast, our method offers advantages in privacy preservation, map maintenance, scalability, and pose estimation runtime over other schemes.
}
\vspace{-4mm}
\label{tab:ppiq-comparison}
\end{table}

\section{Related Work}
\label{sec:related_work}

\paragraph{Privacy threats in visual localization.}
It was Pittaluga \etal\cite{pittaluga2019revealing}, who  first demonstrated that high-fidelity scene images can be recovered from sparse keypoints and their descriptors, raising real privacy concern in the research community.
Their work (\emph{InvSfM}) employs a cascaded U-Net~\cite{ronneberger2015u} to generate RGB reconstructions of the scene using only 2D keypoints or projected 3D point cloud and respective descriptors.
This soon prompted research efforts focused on obfuscating either image queries~\cite{speciale2019queries, pan2023permut, dusmanu2021privacy,ng2022ninjadesc,pittaluga2023ldp} or the 3D points maps~\cite{speciale2019privacy, geppert2021privacy, geppert2022privacy, lee2023paired, moon2024raycloud, moon2024sphere, pietrantoni2023segloc, pietrantoni2025gaussian} to prevent sensitive information from being revealed without overly sacrificing localization performance.

\vspace{-4mm}
\paragraph{Approaches for privacy-preserving image queries.}
Privacy-Preserving Image Queries (PPIQ) can be largely divided into two categories of methods: i) methods obfuscating the 2D keypoint locations~\cite{speciale2019queries, pan2023permut}, and  ii) those obfuscating the keypoint descriptors~\cite{dusmanu2021privacy,ng2022ninjadesc,pittaluga2023ldp, pietrantoni2023segloc, pietrantoni2025gaussian, zhou2022geometry, wang2024dgc}.

\begin{figure*}[t]
\centering
\includegraphics[width=0.99\linewidth]{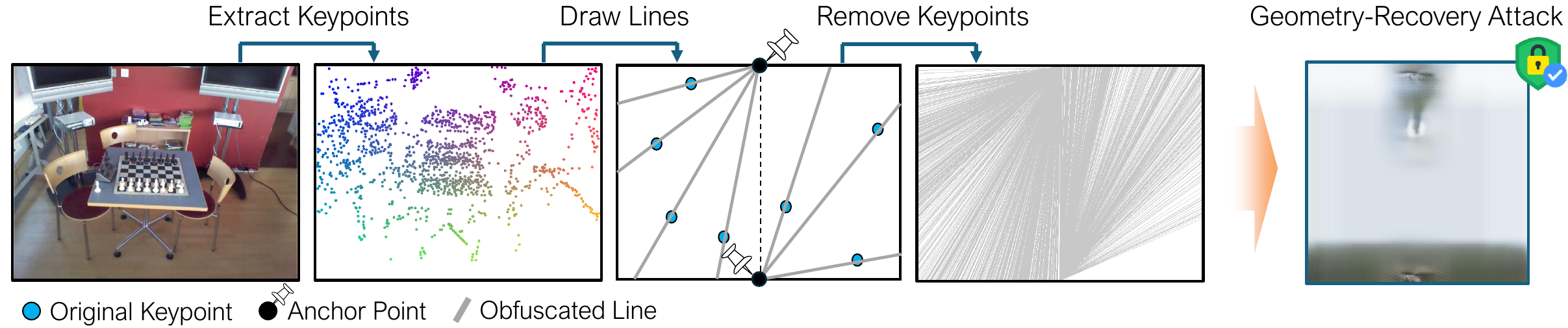}
\vspace{-2mm}
\caption{
Illustration of our DCL construction procedure. We begin by extracting keypoints~\cite{lowe2004sift, detone2018superpoint} from the query image. Next, lines are generated by connecting each keypoint to one of the two predefined anchor points located at the top and bottom of the central midline. Finally, the positions of the original keypoints are removed. We show in Sec.~\ref{sec:robustness_analysis} that DCL effectively prevents the geometry-recovery attack \cite{chelani2024obfuscation} by encouraging erroneous recovery of keypoint locations, subsequently successfully thwarting the inversion attack \cite{pittaluga2019revealing}.
}
\label{fig:main_pipeline}
\vspace{-4mm}
\end{figure*}

i) \emph{Geometric obfuscation}: Speciale~\etal~\cite{speciale2019privacy} first introduced geometric obfuscation to conceal 2D keypoint locations, where each point is replaced by a randomly oriented line passing through it. 
They also proposed the \texttt{l6p} solver~\cite{speciale2019queries}, leveraging line-point distances for robust pose estimation. 
Pan~\etal~\cite{pan2023permut} introduced a coordinate permutation scheme, randomly pairing keypoints and permuting one of their components. 
While geometric approaches offer accurate pose estimation and flexibility with conventional localization pipelines, they have been shown to be vulnerable to a neighborhood-based geometry-recovery attack~\cite{chelani2024obfuscation}. 

ii) \emph{Descriptor obfuscation}: 
Approaches for obfuscating descriptors include affine-subspace lifting~\cite{dusmanu2021privacy}, adversarial descriptor learning~\cite{ng2022ninjadesc} and local differential privacy protocol~\cite{pittaluga2023ldp}, but these suffer from reduced localization accuracy.
A more recent line of research is to use semantic maps for extracting feature descriptors~\cite{pietrantoni2023segloc, pietrantoni2025gaussian}, which is based on the intuition that they contain less private information compared to original features derived from RGB query images.
Unfortunately, this type of approaches is shown to be vulnerable to a new type of privacy attack based on diffusion models~\cite{anonymous2025vulnerability}, raising privacy concerns once more.
Other related research includes ``descriptor-free" approaches~\cite{zhou2022geometry, wang2024dgc}, which proposes localization schemes just involving geometric correspondences without image-based descriptors~\cite{lowe2004sift, detone2018superpoint}, but they show poor localization accuracy compared to traditional pipelines.
Most importantly, descriptor obfuscation-based approaches commonly do not conceal keypoint positions, leaving the 2D keypoint layout vulnerable to an inversion attack~\cite{pittaluga2019learning}.
Also, these methods require the same obfuscation on the server-side, which can add additional (repetitive) burden whenever the map is updated for maintenance.

In this paper, among many strategies for PPIQ, we revisit the geometric obfuscation scheme due to its superiority in map maintenance, scalability to large scenes, and favorable localization runtime, as shown in Table~\ref{tab:ppiq-comparison}. 
While RayCloud~\cite{moon2024raycloud} is closely related to our study in its use of dual-anchor line obfuscation, \cite{moon2024raycloud} determines anchor positions via $K$-means clustering and assigns 3D points to anchors randomly, thereby neighboring lines remain close to original 3D points and remain susceptible to~\cite{chelani2024obfuscation}.
In contrast, DCL's strategically optimized anchor positions induce ill-posed optimization, thereby effectively thwarting~\cite{chelani2024obfuscation}.

    
\section{Preliminaries}
\label{sec:preliminaries}

\paragraph{Geometry obfuscation via 2D line lifting.} 
To prevent the direct image inversion~\cite{pittaluga2019revealing} from the keypoints in the image query, line-based obfuscation~\cite{speciale2019queries} conceals their positions by replacing each 2D keypoint $\mathbf{x} \in \mathbb{R}^2$ with a randomly oriented 2D line $\mathbf{l} \in \mathbb{P}^2$ passing through the point.
Note, a point $\hat{\v x}_i \in \real^2$ on the obfuscated line $\mathbf{l}_i$ can be parameterized as $\mathbf{\hat x}_i = \v x_i + t_i \v v_i$, where $t_i \in \real$ is the offset along the line and $\v v_i\in\real^2$ is the direction vector.

\vspace{-3mm}
\paragraph{Camera pose estimation with lifted 2D lines.}
Given 2D line-3D point correspondences via descriptor matching, the underlying camera pose can be estimated by employing the $\texttt{l6P}$ minimal solver~\cite{kukelova2016efficient, speciale2019queries} equipped with RANSAC~\cite{fischler1981random}. 

The \texttt{l6P} solver estimates the 6-DoF camera pose $(\mathbf{R} \in SO(3), \mathbf{t} \in \mathbb{R}^3$) by using the geometric constraint \cite{ramalingam2010p, ramalingam2013theory}:
\vspace{-2mm}
\begin{equation}
    \mathbf{n}_i^\top (\mathbf{R} \mathbf{X}_i + \mathbf{t}) = 0,
    \eqlabel{constraint}
\end{equation}
where $\v n_i\in\real^3$ is the normal vector of the 3D plane $\Pi_i$ created by back-projecting the line $\v l_i$ (from the camera center), and $\v X_i\in\real^3$ is the respective 3D point.
Intuitively, the above constraint dictates that the normal of the unprojected plane ($\Pi_i$) must be orthogonal to the 3D point $\mathbf{X}_i$ in camera-centered coordinates.
Since each 2D line-3D point correspondence yields only one constraint, at least six independent correspondences are required for pose estimation. 


\vspace{-3mm}
\paragraph{Geometry-recovery attack.} 
Chelani~\etal~\cite{chelani2024obfuscation} proposed to recover the original keypoint $\v x$ from its obfuscated representation $\mathcal O(\v x)$. The core assumption of this attack is that obfuscated neighbors still remain geometrically close to the original point and recovered by finding $\mathbf{x}^*$ that minimizes the sum of squared distances to its neighbors as follows:
\vspace{-3mm}
\begin{align}
\mathbf{x}^* &= \argmin_{\hat {\v x} \in \mathcal O(\v x)} \sum_{\mathcal O(\v x_j) \in \mathcal{N}(\mathcal O(\v x))} d(\mathcal O(\v x_j), \hat{\v x})^2,
\eqlabel{new_chelani}
\end{align}
where $\mathcal{N}(\mathcal O(\v x))$ is the set of $K$ obfuscated neighbors of $\v x$ and $d(\cdot,\cdot)$ is the shortest Euclidean distance between the two inputs.
Since the true neighborhood $\mathcal{N}(\mathcal O(\v x))$ is inaccessible, \cite{chelani2024obfuscation} approximates it using an attention-based network that learns co-occurrence patterns from the local descriptors~\cite{lowe2004sift, detone2018superpoint}.
They also employed RANSAC-like loops to filter false-positive neighborhood predictions.
    

For our case of line-based geometric obfuscation, $\mathcal O(\v x_i)$ directly turns into $\mathbf{l}_i$ and the attack~\cite{chelani2024obfuscation} estimates the keypoint location ($\mathbf{x}_i^*$) along line $\mathbf{l}_i$ that minimizes the sum of squared distances to its neighborhood lines $\{\v l_j\}$.
In terms of equation, we seek to find $\mathbf{x}_i^* = \argmin_{\mathbf{\hat x}_i \in \mathbf{l}_i} f(\mathbf{\hat{x}}_i)$, where
\begin{equation}
    f(\mathbf{\hat{x}}_i) = \sum_{\mathbf{l}_j \in  \mathcal{N}(\mathbf{l}_i)} d(\mathbf{l}_j, \mathbf{\hat{x}}_i)^2
    \eqlabel{cost_function}
.
\end{equation}
If the neighboring lines are uniformly distributed as in previous work~\cite{speciale2019queries},
the recovered point $\mathbf{x}_i^*$ converges to the  true point $\mathbf{x}_i$ as more lines surround the original point. 
In contrast, our method (DCL) avoids this attack by essentially deviating from uniformly distributed lines, preventing the recovered point $\mathbf{x}_i^*$  from converging to the true point $\mathbf{x}_i$.


\section{Proposed Method}
\label{sec:method}
We first introduce the motivation of DCL and its construction process in Sec.~\ref{sec:dcl_definition}, followed by an analysis of its resilience against the geometry-recovery attack in Sec.~\ref{sec:robustness_analysis}, and finally, the localization methodology in Sec.~\ref{sec:localization}.

\subsection{Dual Convergent Lines}
\label{sec:dcl_definition}

\paragraph{Motivation.}
Our key insight for preventing the geometry-recovery attack \cite{chelani2024obfuscation} is to deliberately induce ill-posed optimization, ensuring the recovered point deviates significantly from its true location.
The simplest way to achieve this is by making all obfuscated lines intersect at a single point which ensures the solution of Eq.~\eqref{cost_function} collapse to that intersection.
Unfortunately, this introduces a degeneracy for pose estimation (details in Sec.~\ref{sec:localization}), and
consequently, motivates the design of our Dual Convergent Lines, maintaining resilience to privacy attacks while enabling localization.
   
\begin{figure}[t]
\centering
    \vspace{-2mm}
    \subfloat{
    \includegraphics[width=1.0\linewidth]{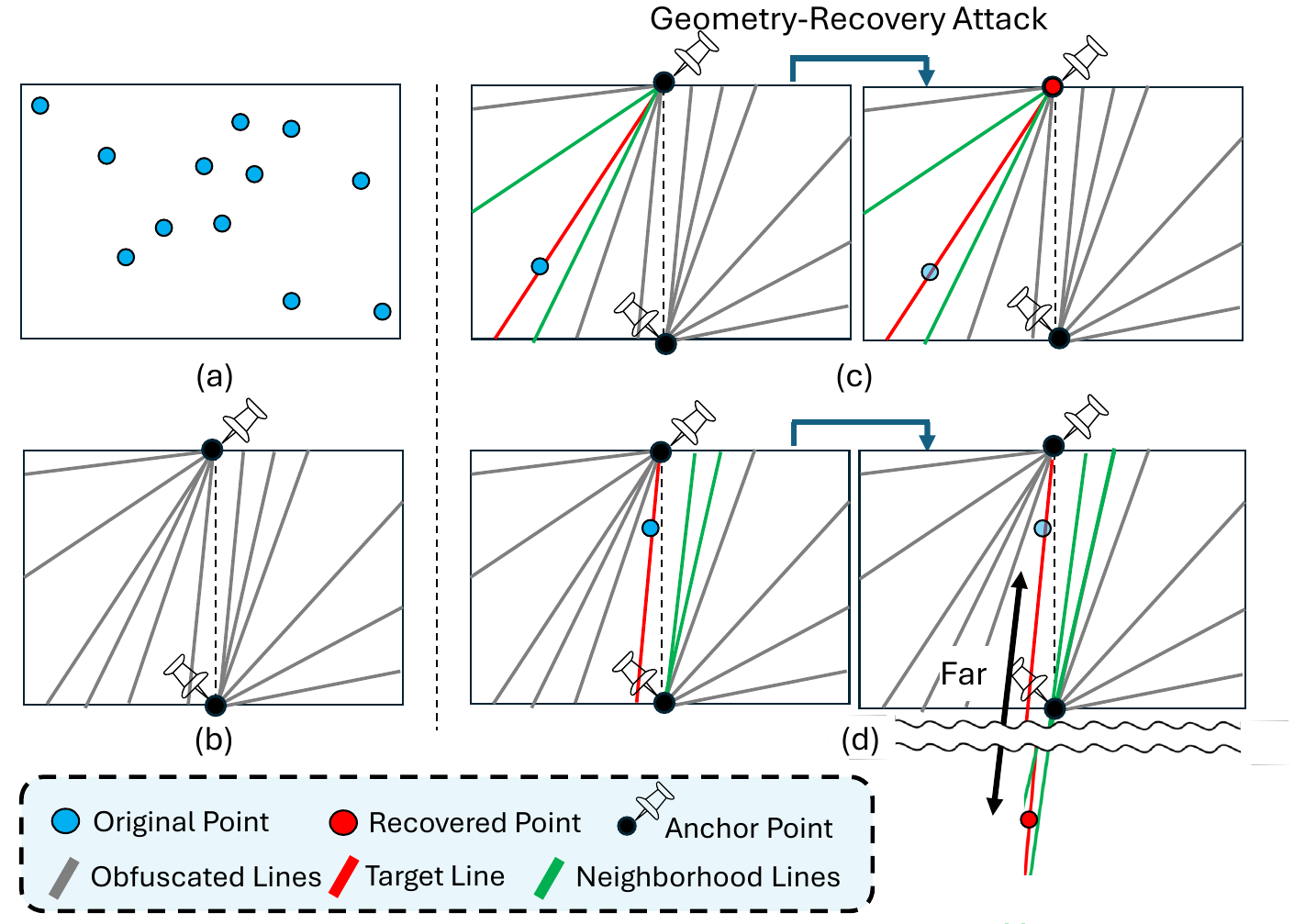}
    }

    \vspace{-2mm}
    \caption{
    Visualization of DCL's two failure modes against the geometry-recovery attack. (a) Original keypoints. (b) Our DCL obfuscation of the keypoints in (a). (c) Mode 1 (intra-anchor convergence): Keypoints trivially converge to the anchor location. (d) Mode 2 (inter-anchor instability): Near-parallel geometry forces optimization to high variance instability.
    }
    \figlabel{attack_scenario}
    \vspace{-4mm}
    \label{fig:invMode}
\end{figure}

\vspace{-4mm}
\paragraph{Construction procedure.}
For the detailed construction of DCL, we first partition the image space into two disjoint vertical regions, $\mathcal{R}_1$ and $\mathcal{R}_2$, separated by a central dividing line: $\mathcal{R}_1$$=$$\{(u,v)\mid0<u<W/2\}$ and $\mathcal{R}_2$$=$$\{ (u, v) \mid W/2 < u < W \}$. Note that while we use a vertical partition, the scheme is applicable to any dividing line (\eg, horizontal or diagonal).
We then define two fixed anchor points ($\mathbf{a}_j \in \real^2, j = 1,2$) as the top and bottom of the image center given an image with width $W$ and height $H$: $\mathbf{a}_1$$=$$(W/2, 0)$ and $\mathbf{a}_2$$=$$(W/2, H)$.
Each region is then assigned one of these anchors.
After setting the anchors, we extract keypoints from the query image and obfuscate each keypoint by connecting it with the assigned anchor, as shown in Fig.~\ref{fig:main_pipeline}. Specifically, for any given keypoint $\mathbf{x}_i = (u_i, v_i)$, its obfuscated line $\mathbf{l}_i$ is generated as:
\begin{align}
    \mathbf{l}_i = \begin{cases}
    \text{line}(\mathbf{x}_i, \mathbf{a}_1) & \text{if } \mathbf{x}_i \in \mathcal{R}_1 \\
    \text{line}(\mathbf{x}_i, \mathbf{a}_2) & \text{if } \mathbf{x}_i \in \mathcal{R}_2
    \end{cases}
\end{align}
This particular line distribution ensures that neighboring lines either intersect at the same anchor point or become parallel (near the boundary). 
        
\subsection{Robustness to Point-Recovery Attacks}
\label{sec:robustness_analysis}

We now analyze DCL's robustness against the known geometry-recovery attack~\cite{chelani2024obfuscation}, which uses descriptor-based neighbor identification followed by geometric optimization. Our DCL framework is designed to make this optimization fundamentally ill-posed. We show that even with perfect neighbor identification, the Dual Convergent Line distribution induces two distinct and unavoidable failure modes.

\begin{figure}[t]
\centering
    \setlength{\tabcolsep}{0.1pt}
    
\centered{\includegraphics[width=1.0\linewidth]
    {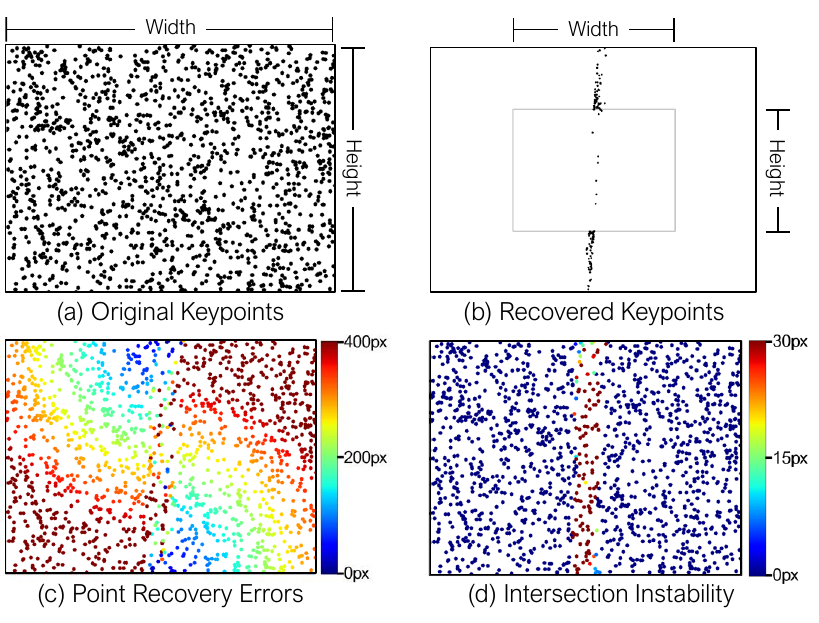}}
    \hfil
    \vspace{-4mm}
    \caption{
        Overview of our DCL method and its robustness in our synthetic environment. 
        We generated 1,500 uniformly distributed keypoints within a 640x480 resolution.
        (b) shows the recovered keypoints, while the error map (c) confirms the attack's overall failure, with only 8 keypoints having error below 30 pixels.
        (d) visualizes the standard deviation of the estimated parameter $t_{i,j}$ in Eq.~\eqref{t_star_weighted}, showing high instability at the partition boundary.
        }
    \figlabel{2d_rep}
    \vspace{-3mm}
    \label{fig:DCLsynthesis}
\end{figure}

\begin{figure*}[t]
\centering
    \subfloat[][Original Keypoints~\cite{detone2018superpoint}]{
    \setlength{\fboxsep}{0pt}
    \fbox{\includegraphics[width=0.23\linewidth]{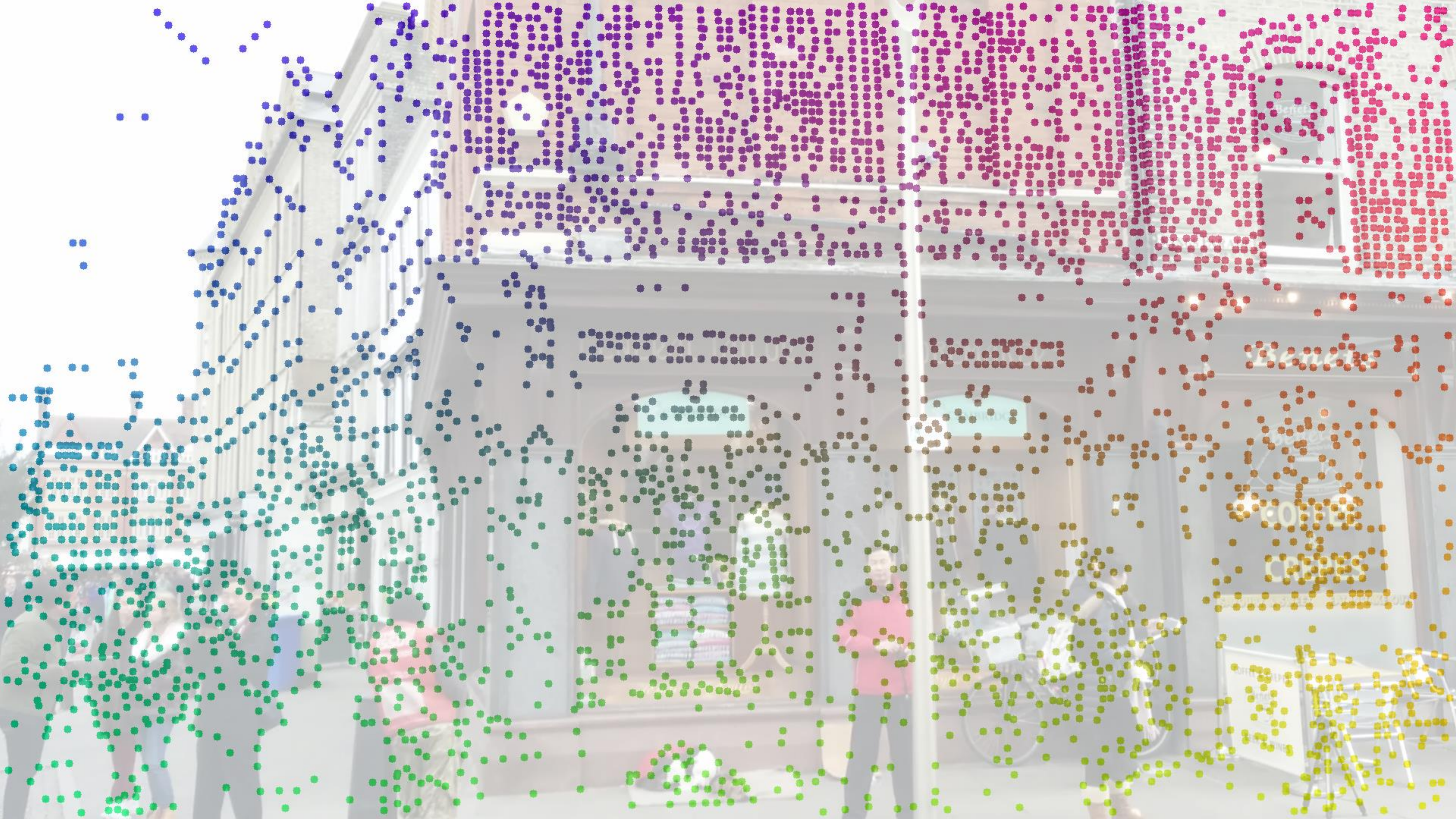}}
    }
    \subfloat[][Random Lines~\cite{speciale2019queries}]{
    \setlength{\fboxsep}{0pt}
    \fbox{\includegraphics[width=0.23\linewidth]{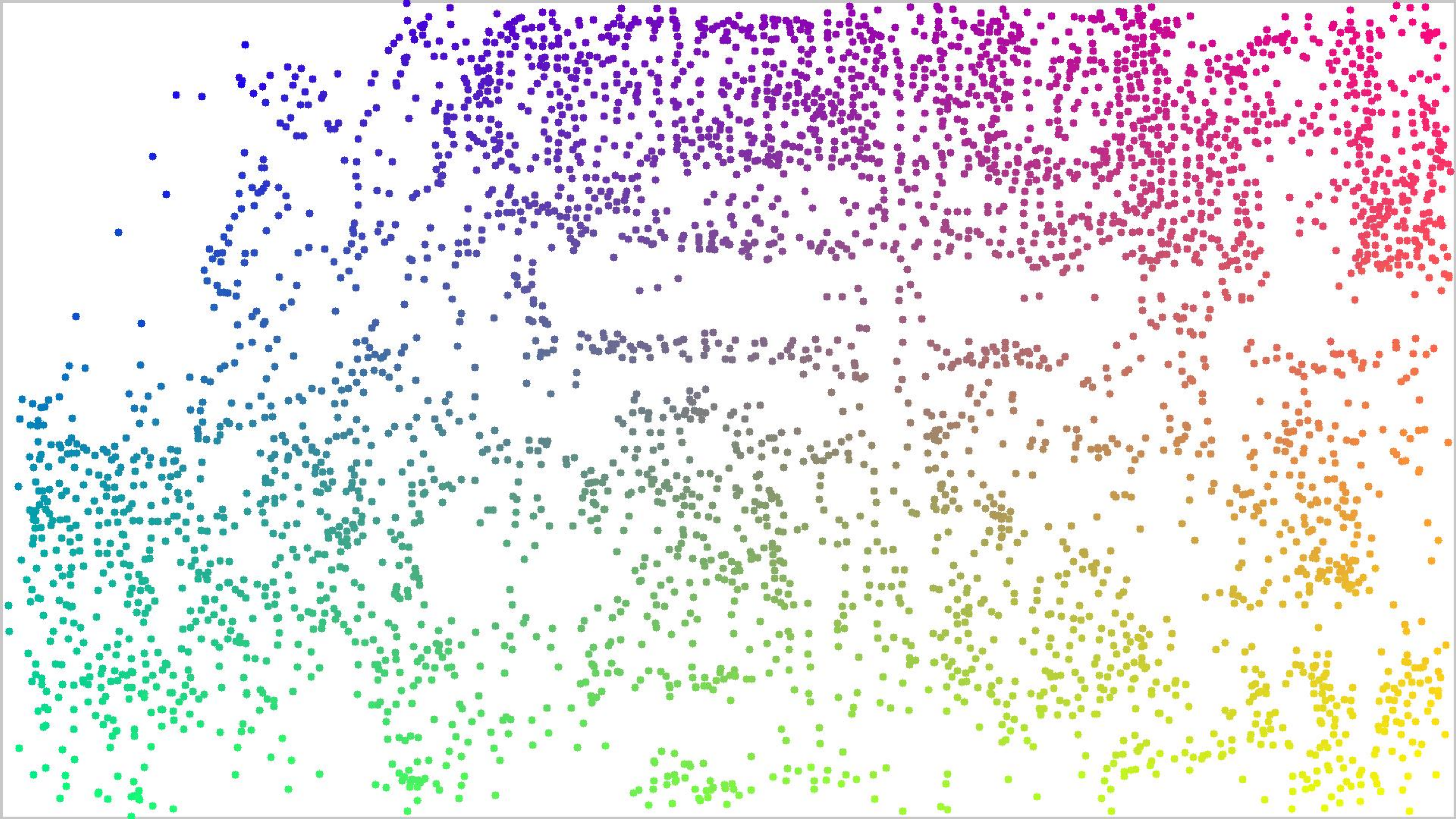}}
    } 
    \subfloat[][Coordinate Permutation~\cite{pan2023permut}]{
    \setlength{\fboxsep}{0pt}
    \fbox{\includegraphics[width=0.23\linewidth]{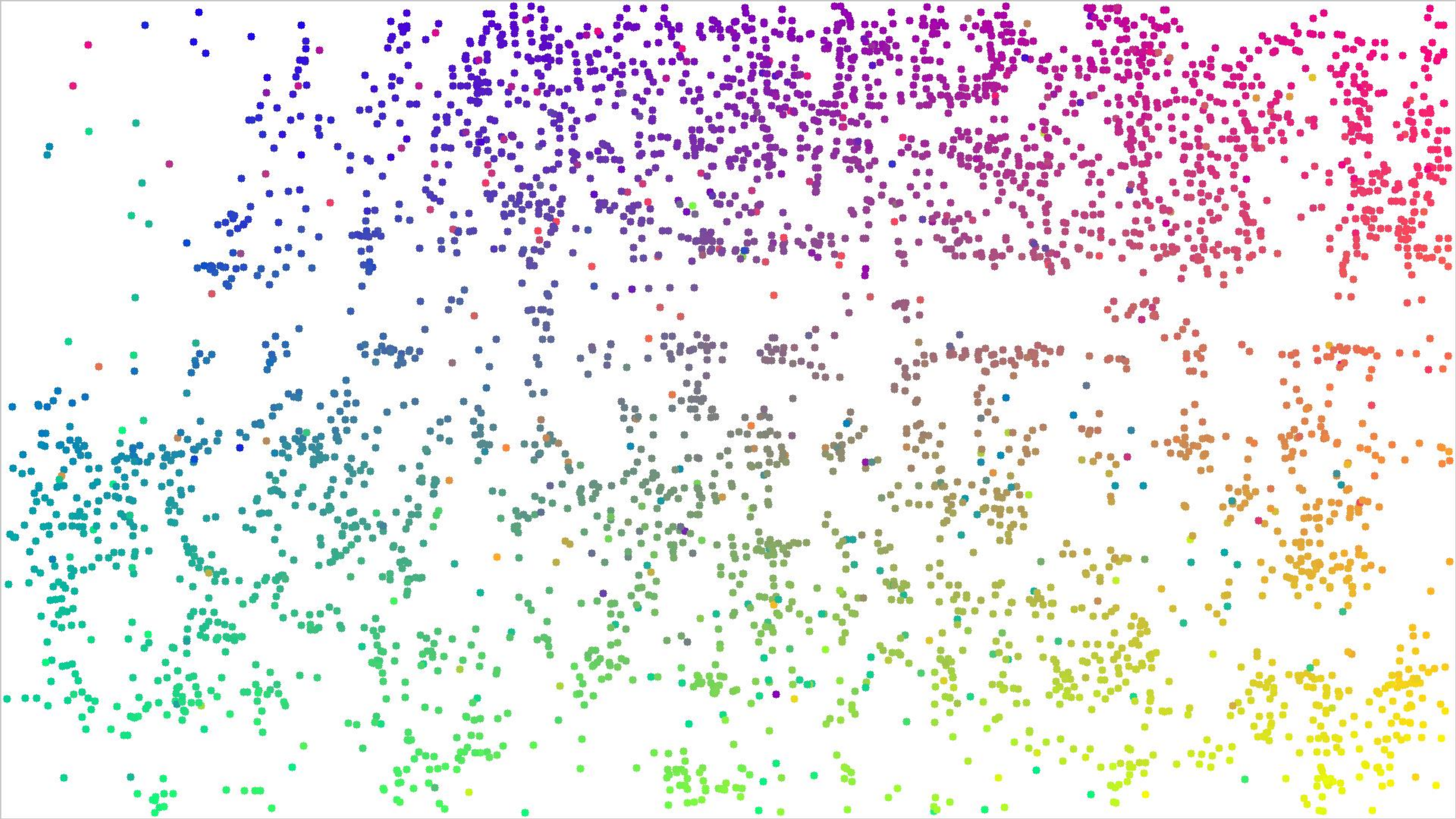}}
    } 
    \subfloat[][DCL(ours)~\figlabel{ibv_brute}]{
    \setlength{\fboxsep}{0pt}
    \fbox{\includegraphics[width=0.23\linewidth]{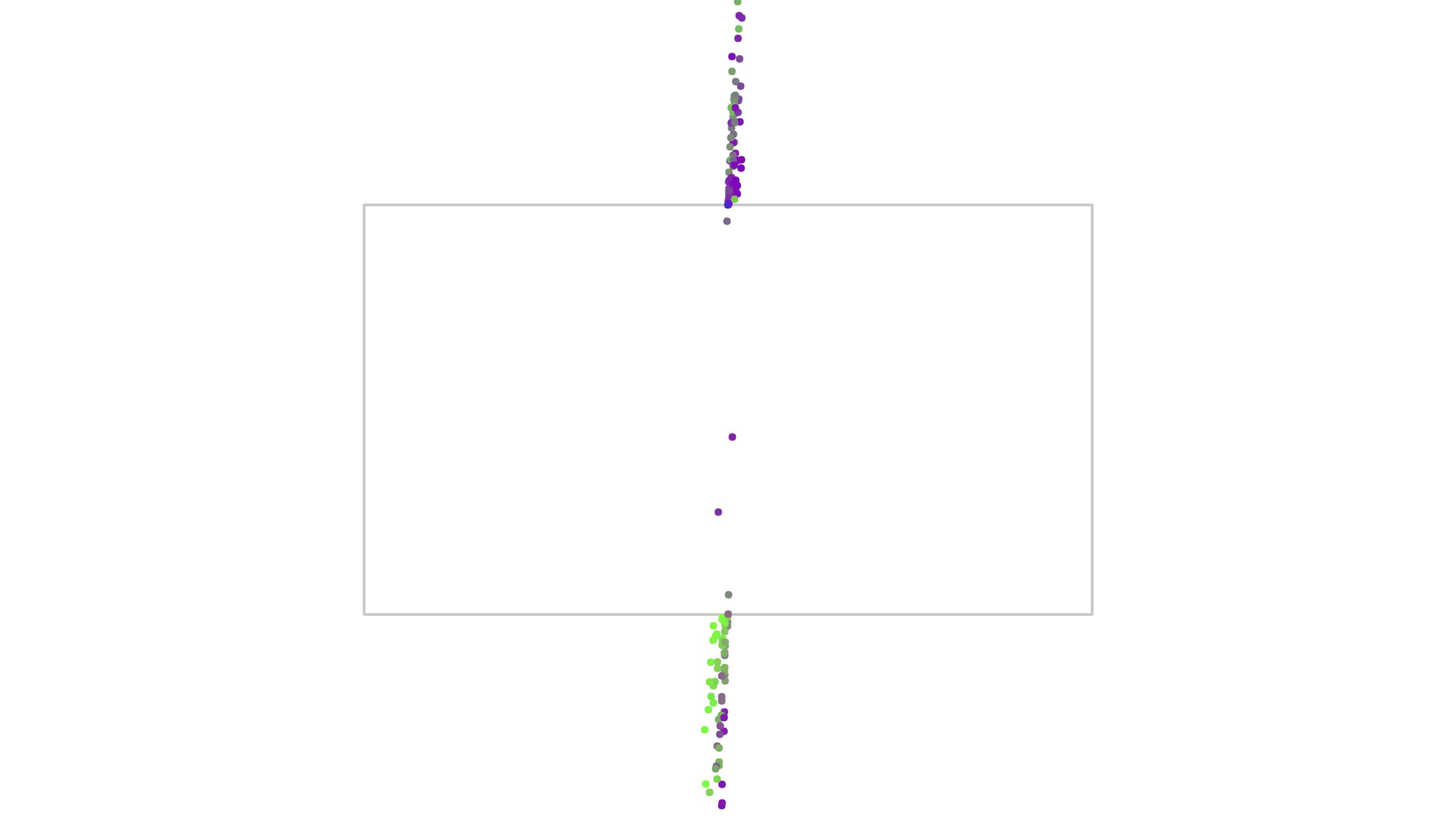}}
    } 
    \vspace{-2mm}
    \caption{
    Results of revealed Superpoint~\cite{detone2018superpoint} keypoints from the geometry-recovery attck~\cite{chelani2024obfuscation}. In (d), the view is zoomed out by $\times$2 to illustrate that the recovered points from DCL largely fall outside the original image region.
    }
    \vspace{-4mm}
    \label{fig:recovered_points}
\end{figure*}

\vspace{-3mm}
\paragraph{Mode 1: Intra-anchor convergence.}
As shown in Fig.~\ref{fig:invMode}, consider a keypoint $\mathbf{x}_i$ located in $\mathcal{R}_1$ far from the boundary such that all neighboring lines $\{\mathbf{l}_j\} = \mathcal{N}(\mathbf{l}_i)$ originate from the same anchor $\mathbf{a}_1$ of the obfuscated line $\mathbf{l}_i$. 
As all neighboring lines $\{\mathbf{l}_j\}$ intersect at $\mathbf{a}_1$, the cost function $f(\mathbf{\hat x}_i)$ in Eq.~\eqref{cost_function} has a global minimum of 0 at $\mathbf{\hat x}_i = \mathbf{a}_1$. 
Therefore, the optimization converges to the anchor $\mathbf{a}_1$, which is far from the true point $\mathbf{x}_i$ (see Fig.~\ref{fig:invMode}~(c)).

\vspace{-3mm}
\paragraph{Mode 2: Inter-anchor instability.}
This failure mode arises when keypoints lie near the partition boundary.
In such cases, a keypoint's true neighbors may reside in the opposite region, $\mathcal{R}_2$, and only these neighboring lines avoid trivially converging at the anchor $\mathbf{a}_1$. However, as keypoints from $\mathcal{R}_1$ and $\mathcal{R}_2$ approach the central boundary, their corresponding DCL lines (pointing to $\mathbf{a}_1$ and $\mathbf{a}_2$, respectively) become near-parallel. 
As shown below, attempting to find a minimal-distance intersection from a set of near-parallel lines is a classic example of an ill-posed problem.

Suppose we have a single line $\v l_j$ being anticipated to be in the vicinity of the (obfuscated) $i$-th keypoint. Then, assuming $\v l_i$ and $\v l_j$ are non-parallel, minimizing $d(\cdot,\cdot)^2$ in Eq.~\eqref{cost_function} for a single neighbor trivially leads to their intersection point.
We term the offset from $\v x_i$ to intersection point along $\v l_i$ as $t^*_{i,j}$.
Suppose now, we have multiple lines $\{\v l_j \}$ in the vicinity of $\v x_i$. Then, we yield the following results, whose full derivations are in the supplementary material.
%


\begin{proposition}
\label{prop:weighted_average}
For the inter-anchor scenario, the minimized cost function $f(\mathbf{\hat{x}}_i(t_i))$ is convex quadratic. The recovered point parameter $t_i^*$ is the weighted average of the individual parameters $\{t_{i,j}^*\}$ of the intersection points between the target line $\mathbf{l}_i$ and each neighbor line $\mathbf{l}_j$:
\begin{equation}
    t_i^* = \frac{\sum_j (w_{i,j} t_{i,j}^*)}{\sum_j w_{i,j}}.
    \label{eq:t_star_weighted}
\end{equation}
\end{proposition}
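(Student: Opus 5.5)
I will parameterize the candidate point on the target line as $\mathbf{\hat x}_i(t_i) = \mathbf{x}_i + t_i \mathbf{v}_i$, with $\mathbf{v}_i$ a unit direction vector, and represent each neighbor line $\mathbf{l}_j$ in Hessian normal form: $\mathbf{n}_j\tr \mathbf{p} = c_j$, with unit normal $\mathbf{n}_j\in\real^2$. The point-to-line distance is then affine in $t_i$:
\begin{equation*}
d(\mathbf{l}_j,\mathbf{\hat x}_i(t_i)) = \left|\mathbf{n}_j\tr\mathbf{x}_i - c_j + t_i\,\mathbf{n}_j\tr\mathbf{v}_i\right| .
\end{equation*}
Writing $a_{j} = \mathbf{n}_j\tr\mathbf{v}_i$ and $b_{j} = \mathbf{n}_j\tr\mathbf{x}_i - c_j$, the cost becomes
\begin{equation*}
f(t_i) = \sum_j (a_j t_i + b_j)^2 ,
\end{equation*}
a one-dimensional quadratic in $t_i$ with leading coefficient $\sum_j a_j^2 \ge 0$.

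\textbf{Convexity.} The inter-anchor assumption is that every $\mathbf{l}_j$ is non-parallel to $\mathbf{l}_i$, even if only nearly so. This gives $a_j \neq 0$, hence $\sum_j a_j^2 > 0$, so $f$ is strictly convex quadratic. Its unique minimizer follows from $f'(t_i)=0$:
\begin{equation*}
t_i^* = -\frac{\sum_j a_j b_j}{\sum_j a_j^2}.
\end{equation*}

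\textbf{Weighted average.} For a single neighbor, the intersection of $\mathbf{l}_i$ with $\mathbf{l}_j$ is the zero of $a_j t + b_j$. This gives $t_{i,j}^* = -b_j/a_j$, consistent with the remark preceding the proposition that the single-neighbor minimizer is the intersection point. Substituting $b_j = -a_j t_{i,j}^*$ into the minimizer yields
\begin{equation*}
t_i^* = \frac{\sum_j a_j^2\, t_{i,j}^*}{\sum_j a_j^2},
\end{equation*}
which is Eq.~\eqref{t_star_weighted} with $w_{i,j} = a_j^2 = (\mathbf{n}_j\tr\mathbf{v}_i)^2 = \sin^2\theta_{i,j}$. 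Here $\theta_{i,j}$ is the angle between $\mathbf{l}_i$ and $\mathbf{l}_j$. I will state this interpretation explicitly, since it is what links the result to Mode~2: near-parallel lines have tiny weights and intersection parameters $t_{i,j}^*$ of huge magnitude.

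\textbf{Main obstacle.} The argument is routine, so the main care point is the hypotheses rather than the algebra. I must make precise that ``inter-anchor scenario'' guarantees $a_j\neq0$ for every neighbor used, so that both $t_{i,j}^*$ and the weights are well defined. If some neighbors share the anchor $\mathbf{a}_1$ with $\mathbf{l}_i$, they are not parallel to $\mathbf{l}_i$ but intersect it at $\mathbf{a}_1$. Such neighbors are still covered by the same formula, with $t_{i,j}^*$ equal to the offset of $\mathbf{a}_1$ along $\mathbf{l}_i$. Only exactly parallel neighbors must be excluded, or handled by noting that they contribute $a_j=0$ and a constant term $b_j^2$ to $f$, which leaves the minimizer unchanged.
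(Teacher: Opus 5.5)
Your proof is correct and follows the paper's own argument. The paper writes each squared distance as $\|(\mathbf{\hat x}_i(t_i)-\mathbf{a}_2)\times\mathbf{v}_j\|^2 = A_{i,j}t_i^2+B_{i,j}t_i+C_{i,j}$, which is exactly your $(a_jt_i+b_j)^2$ with $A_{i,j}=a_j^2$ and $B_{i,j}=2a_jb_j$; it then sets $f'(t_i)=0$ and substitutes $B_{i,j}=-2A_{i,j}t_{i,j}^*$, just as you substitute $b_j=-a_jt_{i,j}^*$. Your Hessian-normal-form notation is only a cosmetic change, and your explicit requirement that $a_j\neq 0$ (so that $t_{i,j}^*$ is defined and $f$ is strictly convex) makes precise a point the paper leaves implicit.
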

\begin{corollary}
\label{cor:weights}
The weight $w_{i,j}$ for each offset contribution $t_{i,j}^*$ in Proposition~\ref{prop:weighted_average} is determined \textbf{only} by the angle $\theta_{i,j}$ between the target line $\mathbf{l}_i$ (with direction $\mathbf{v}_i$) and its neighboring line $\mathbf{l}_j$ (with direction $\mathbf{v}_j$) as follows:
\begin{equation}
    w_{i,j} = \| \mathbf{v}_i \times \mathbf{v}_j \|^2 = \sin^2(\theta_{i,j}).
    \label{eq:weights}
\end{equation}
\end{corollary}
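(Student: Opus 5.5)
We parameterize the target line as $\hat{\mathbf{x}}_i(t_i) = \mathbf{x}_i + t_i \mathbf{v}_i$ with $\|\mathbf{v}_i\|=1$. Each neighbor line $\mathbf{l}_j$ is written as the set of points $\mathbf{p}$ with $\mathbf{n}_j^\top(\mathbf{p}-\mathbf{p}_j)=0$, where $\mathbf{p}_j$ is any point on $\mathbf{l}_j$ (for DCL one can take the anchor of $\mathbf{l}_j$, or $\mathbf{x}_j$). The unit normal $\mathbf{n}_j$ is $\mathbf{v}_j$ rotated by $90^\circ$. The squared point-to-line distance is then
\begin{equation*}
d(\mathbf{l}_j,\hat{\mathbf{x}}_i(t_i))^2 = \bigl(\mathbf{n}_j^\top(\mathbf{x}_i-\mathbf{p}_j) + t_i\,\mathbf{n}_j^\top\mathbf{v}_i\bigr)^2 .
\end{equation*}
This is a square of an affine function of $t_i$. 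Summing over the neighbors gives $f(t_i)=\sum_j (a_j + b_j t_i)^2$ with $a_j=\mathbf{n}_j^\top(\mathbf{x}_i-\mathbf{p}_j)$ and $b_j=\mathbf{n}_j^\top\mathbf{v}_i$. So $f$ is a quadratic in $t_i$ with leading coefficient $\sum_j b_j^2\ge 0$, which makes it convex. It is strictly convex as soon as one $b_j\neq 0$.

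The key identity is that $b_j=\mathbf{n}_j^\top\mathbf{v}_i = \pm(\mathbf{v}_j\times\mathbf{v}_i)$, the scalar 2D cross product. Hence $b_j^2=\|\mathbf{v}_i\times\mathbf{v}_j\|^2=\sin^2\theta_{i,j}$. This is where the inter-anchor assumption enters. In that scenario at least one neighbor comes from the opposite anchor and is not parallel to $\mathbf{l}_i$, so $\sum_j b_j^2>0$. If every term had $b_j=0$, the minimizer would not be unique; that degenerate case is excluded by hypothesis.

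Next we identify the single-neighbor intersections. The line $\mathbf{l}_i$ meets $\mathbf{l}_j$ where $a_j+b_jt=0$, so $t_{i,j}^*=-a_j/b_j$ whenever $b_j\neq0$. Setting $f'(t_i)=2\sum_j b_j(a_j+b_jt_i)=0$ gives
\begin{equation*}
t_i^*=-\frac{\sum_j a_j b_j}{\sum_j b_j^2}.
\end{equation*}
Substituting $a_j=-b_j t_{i,j}^*$ turns the numerator into $\sum_j b_j^2 t_{i,j}^*$. This yields Eq.~\eqref{eq:t_star_weighted} with $w_{i,j}=b_j^2$. Combined with the previous paragraph, this also gives Corollary~\ref{cor:weights}: the weight depends only on the angle $\theta_{i,j}$, not on the position of $\mathbf{l}_j$, since position enters only through $a_j$ and hence through $t^*_{i,j}$.

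The main care point is the handling of neighbors parallel to $\mathbf{l}_i$ ($b_j=0$). For these, $t_{i,j}^*$ is undefined, yet they still contribute $a_j^2$ to $f$. We will note that they add only a constant to $f$ and carry zero weight, so they can be dropped from both sums, with the convention $w_{i,j}t^*_{i,j}=0$. We must also state explicitly that the sign ambiguity of $\mathbf{n}_j$ and the choice of $\mathbf{p}_j$ cancel, since $a_j/b_j$ is invariant under both.
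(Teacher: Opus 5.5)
Your proof is correct and is essentially the paper's own argument: your signed distance $a_j + b_j t_i$ is just the $z$-component of the paper's cross product $(\hat{\mathbf{x}}_i - \mathbf{a}_2)\times\mathbf{v}_j$, so $b_j^2 = A_{i,j}$, $2a_jb_j = B_{i,j}$ and $a_j^2 = C_{i,j}$, and your substitution $a_j = -b_j t^*_{i,j}$ and the identification $b_j^2 = \sin^2\theta_{i,j}$ are exactly the paper's $B_{i,j} = -2A_{i,j}t^*_{i,j}$ and $A_{i,j} = \sin^2\theta_{i,j}$. Your treatment of parallel neighbors ($b_j = 0$) and of the sign and base-point invariance adds care the paper omits; however, the nondegeneracy $\sum_j b_j^2 > 0$ does not really come from the inter-anchor assumption, since same-anchor neighbors cross $\mathbf{l}_i$ at the anchor, whereas opposite-anchor neighbors are precisely the ones that can become (near-)parallel to $\mathbf{l}_i$.
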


\noindent Established by Proposition~\ref{prop:weighted_average} and Corollary~\ref{cor:weights}, the estimated parameter $t_i^*$ is a weighted average of the individual intersection parameters $\{t_{i,j}^*\}$. The weight for each neighboring line $l_j$ is determined by $w_{i,j} = \sin^2(\theta_{i,j})$, where $\theta_{i,j}$ is the angle between the line $l_i$ and the neighbor $l_j$.



For the inter-anchor scenario (mode 2), when the target line $l_i$ and one of its neighboring lines, $l_j$, become nearly parallel, the angle $\theta_{i,j}$ and the corresponding weight $w_{i,j}$ both tends to zero, and subsequently the estimated point parameter $t_{i,j}^*$ grows very large. 
As a result, the weighted average $t_i^* = {\sum (w_{i,j} t_{i,j}^*)}/{\sum w_{i,j}}$ becomes numerically unstable and highly sensitive to noise and keypoint distribution, leading to unreliable point estimations as observed in synthetic experiments (see Fig.~\figref{DCLsynthesis} (d)).

\subsection{Camera Pose Estimation}
\label{sec:localization}

\begin{figure*}[t]
\centering

    \subfloat{
        \includegraphics[width=0.19\linewidth]{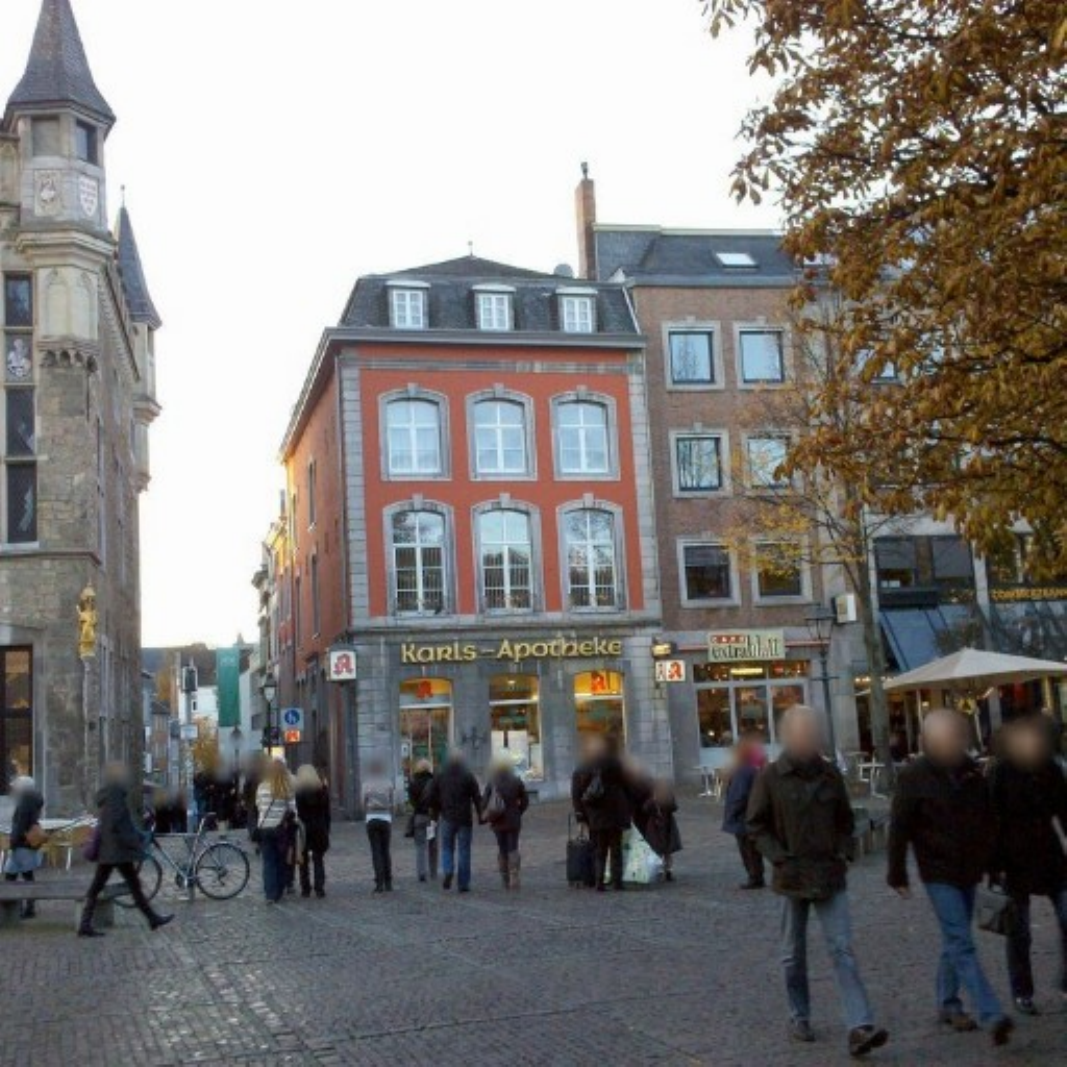}
    }
    \subfloat{
        \includegraphics[width=0.19\linewidth]{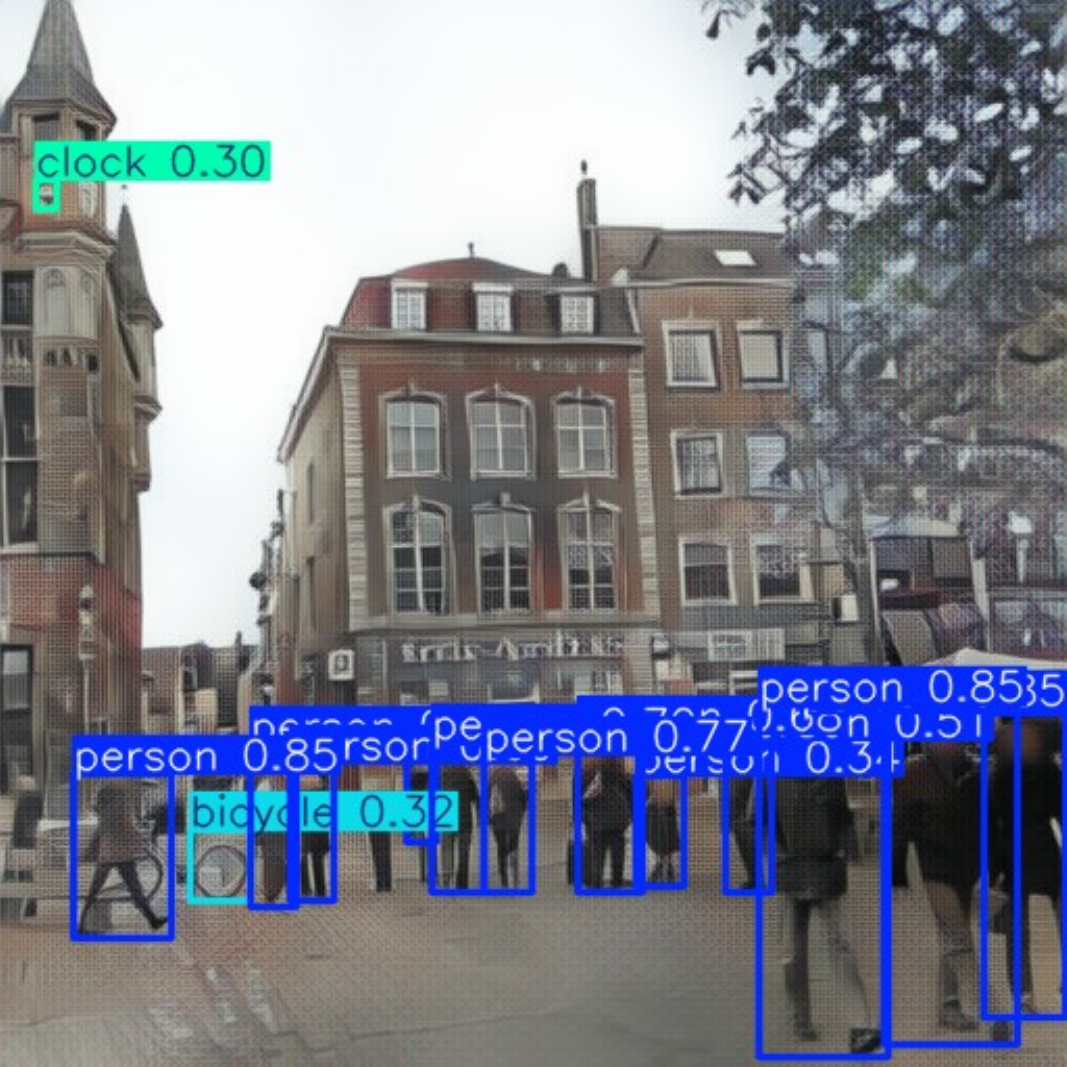}
    }
    \subfloat{
        \includegraphics[width=0.19\linewidth]{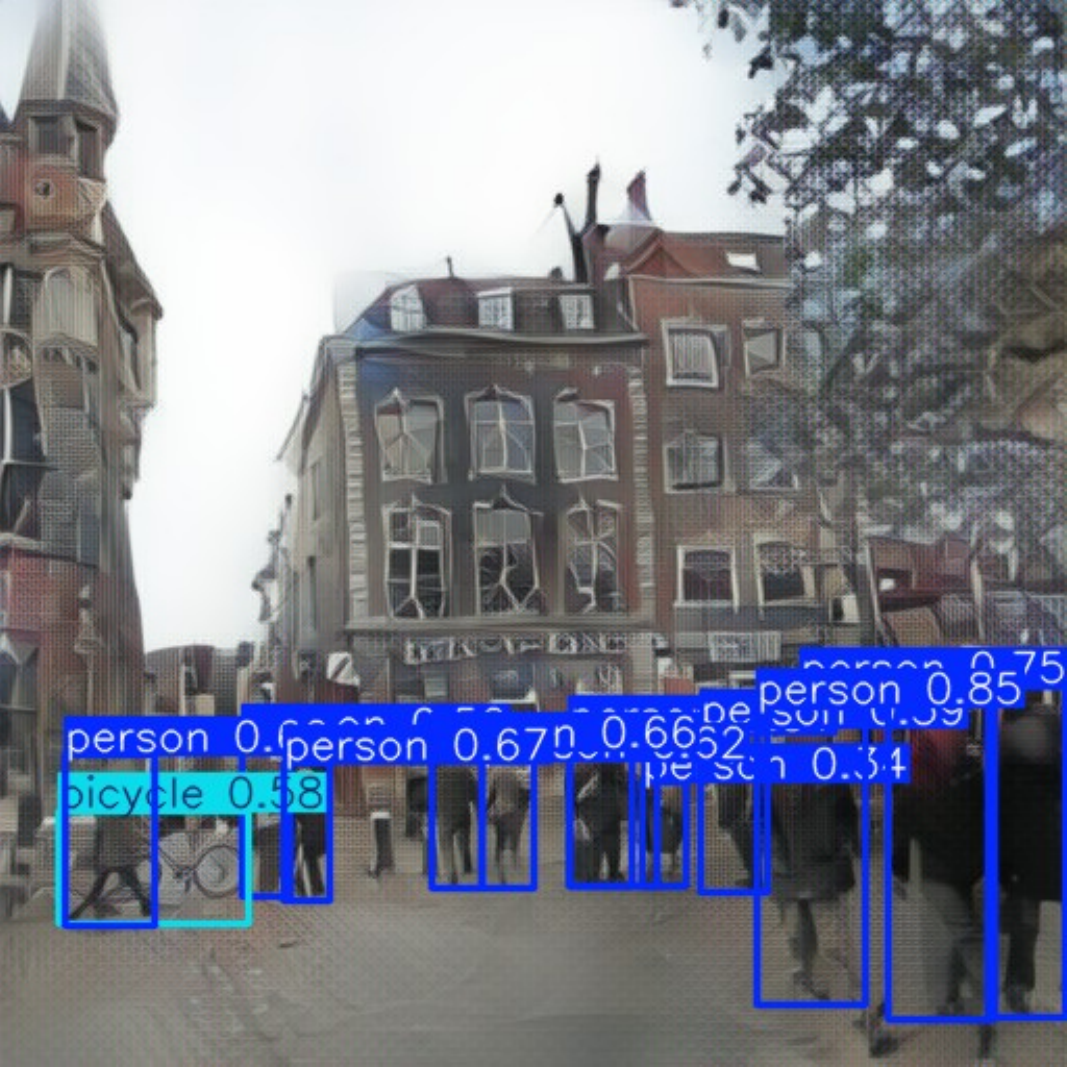}
    }
    \subfloat{
        \includegraphics[width=0.19\linewidth]{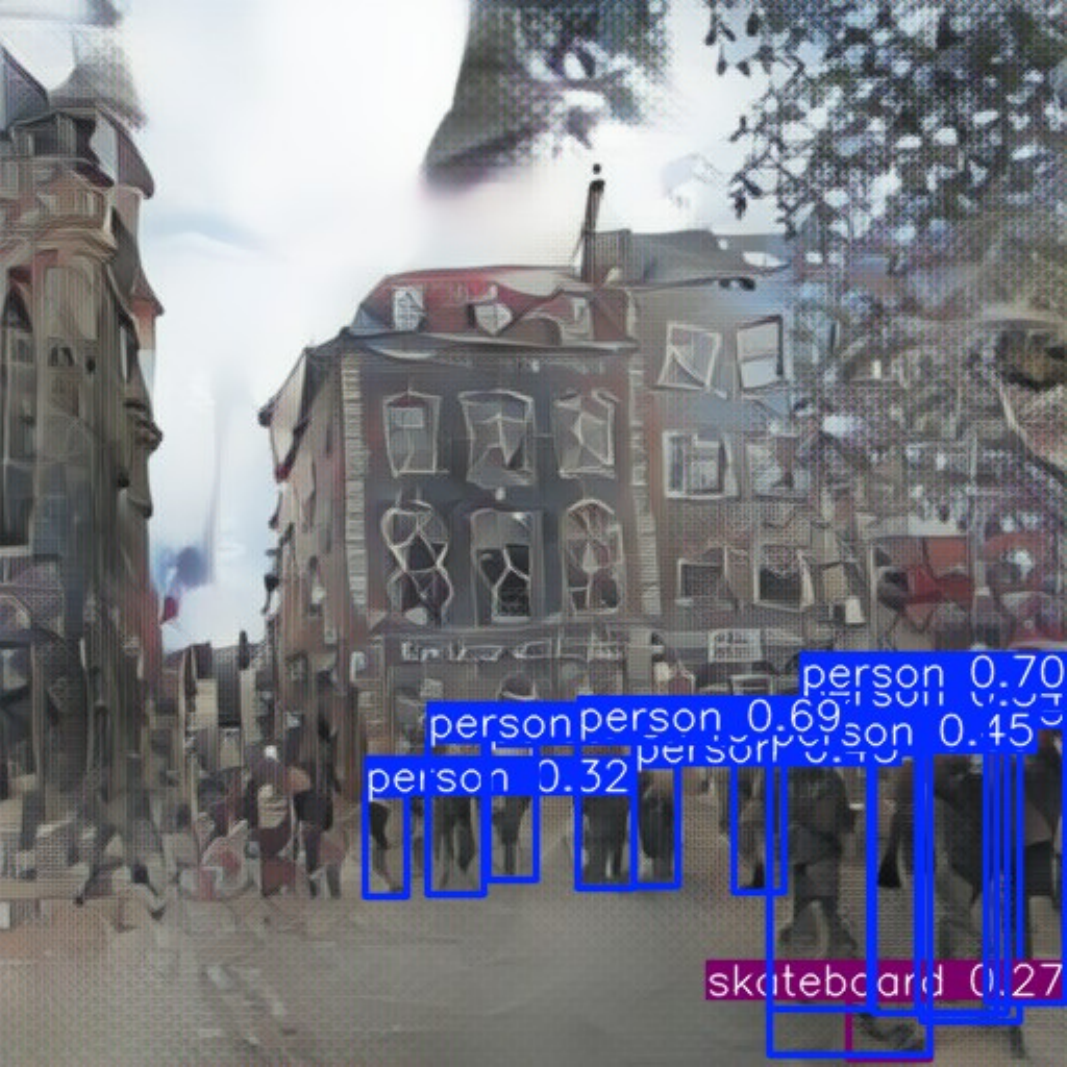}
    } 
    \subfloat{
        \includegraphics[width=0.19\linewidth]{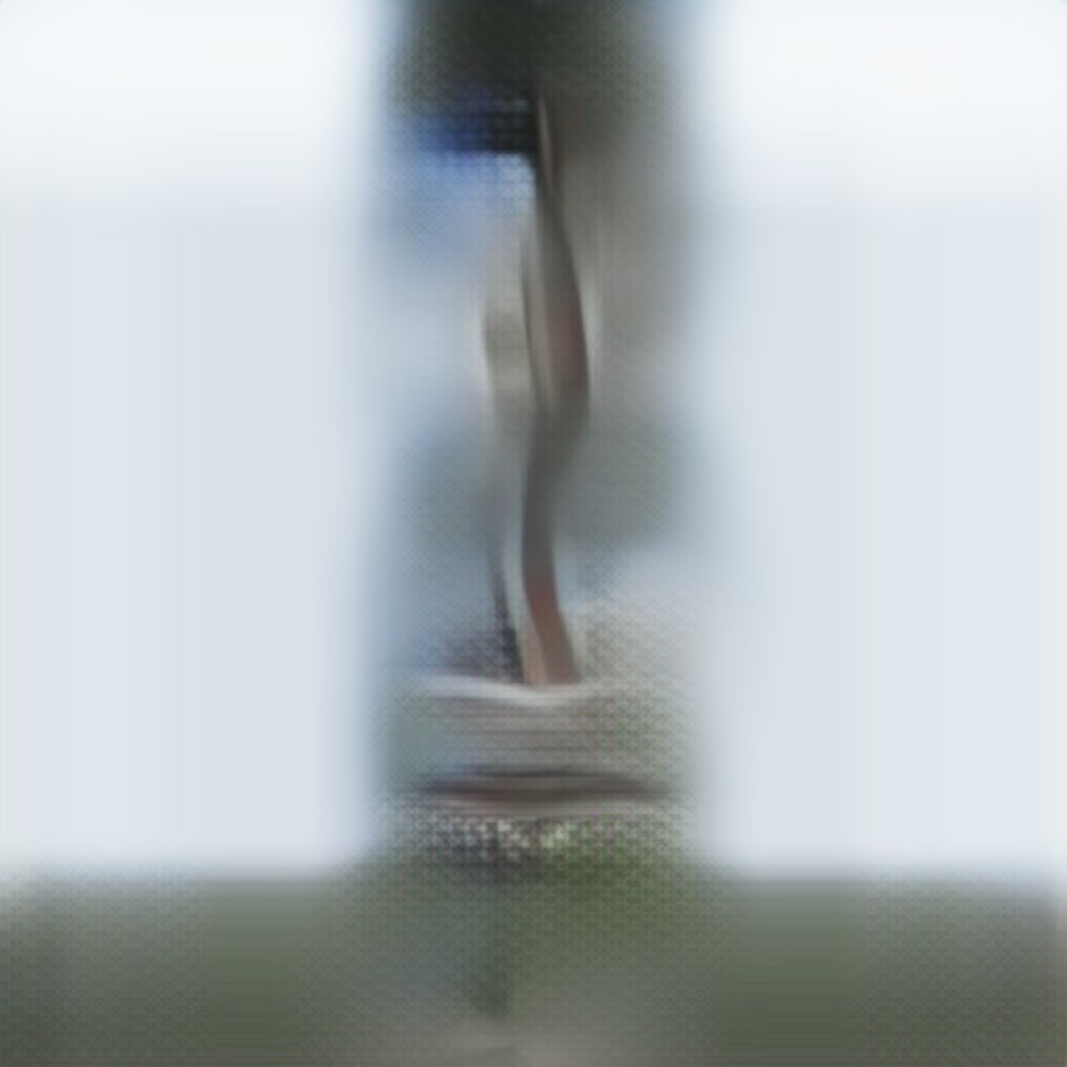}
    } 
    \\
    \vspace{1mm} 

    \setcounter{subfigure}{0}
    \subfloat{
        \includegraphics[width=0.19\linewidth]{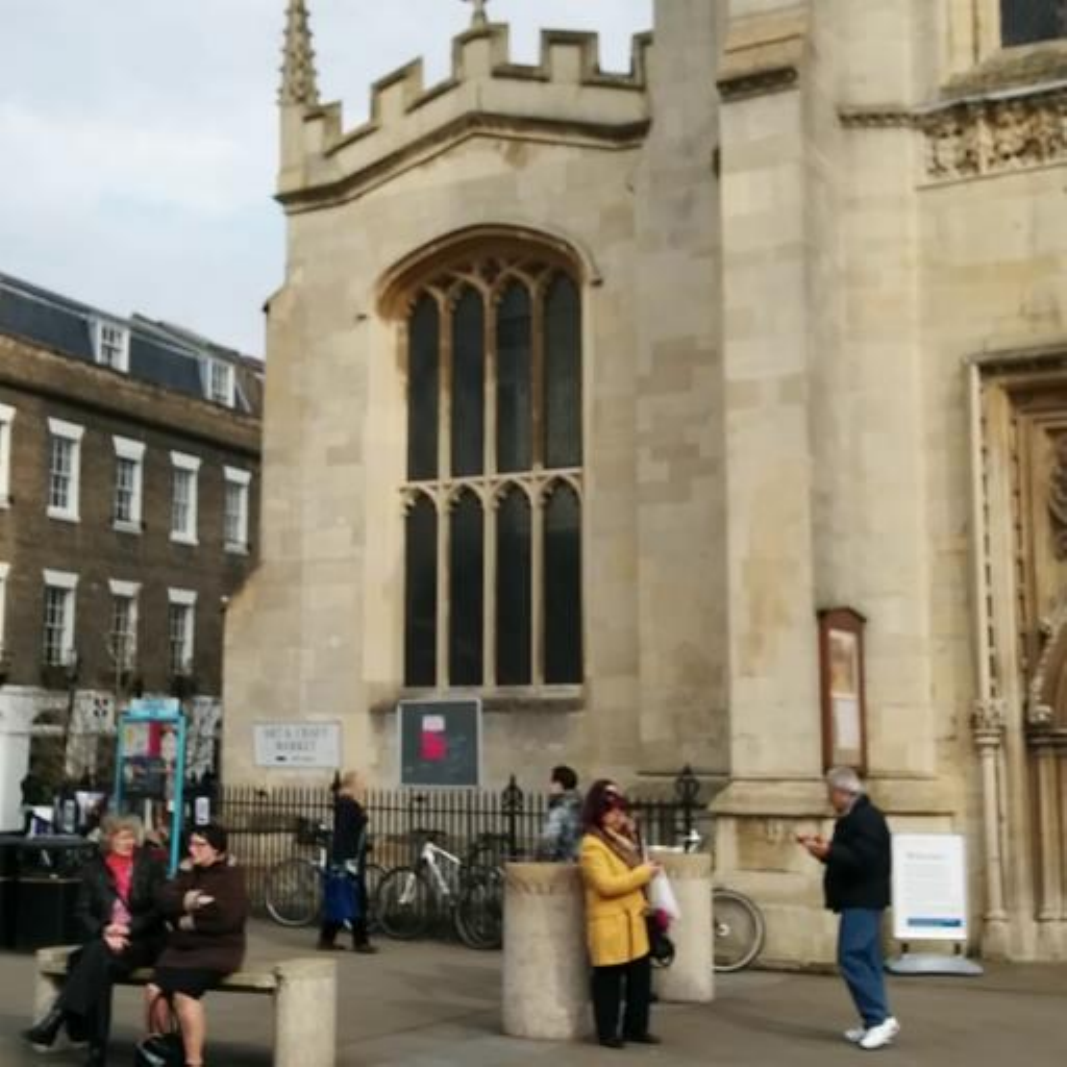}
    }
    \subfloat{
        \includegraphics[width=0.19\linewidth]{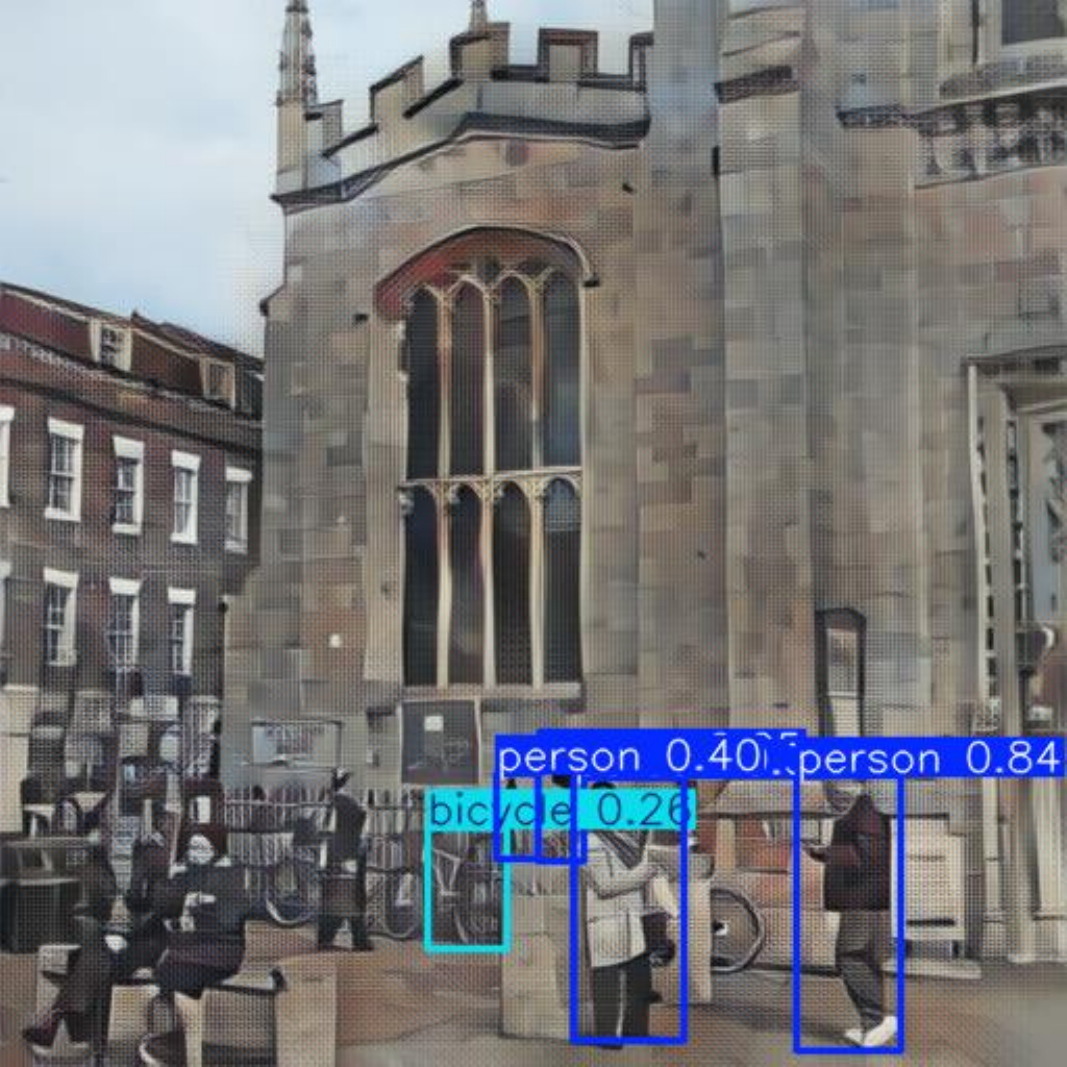}
    }
    \subfloat{
        \includegraphics[width=0.19\linewidth]{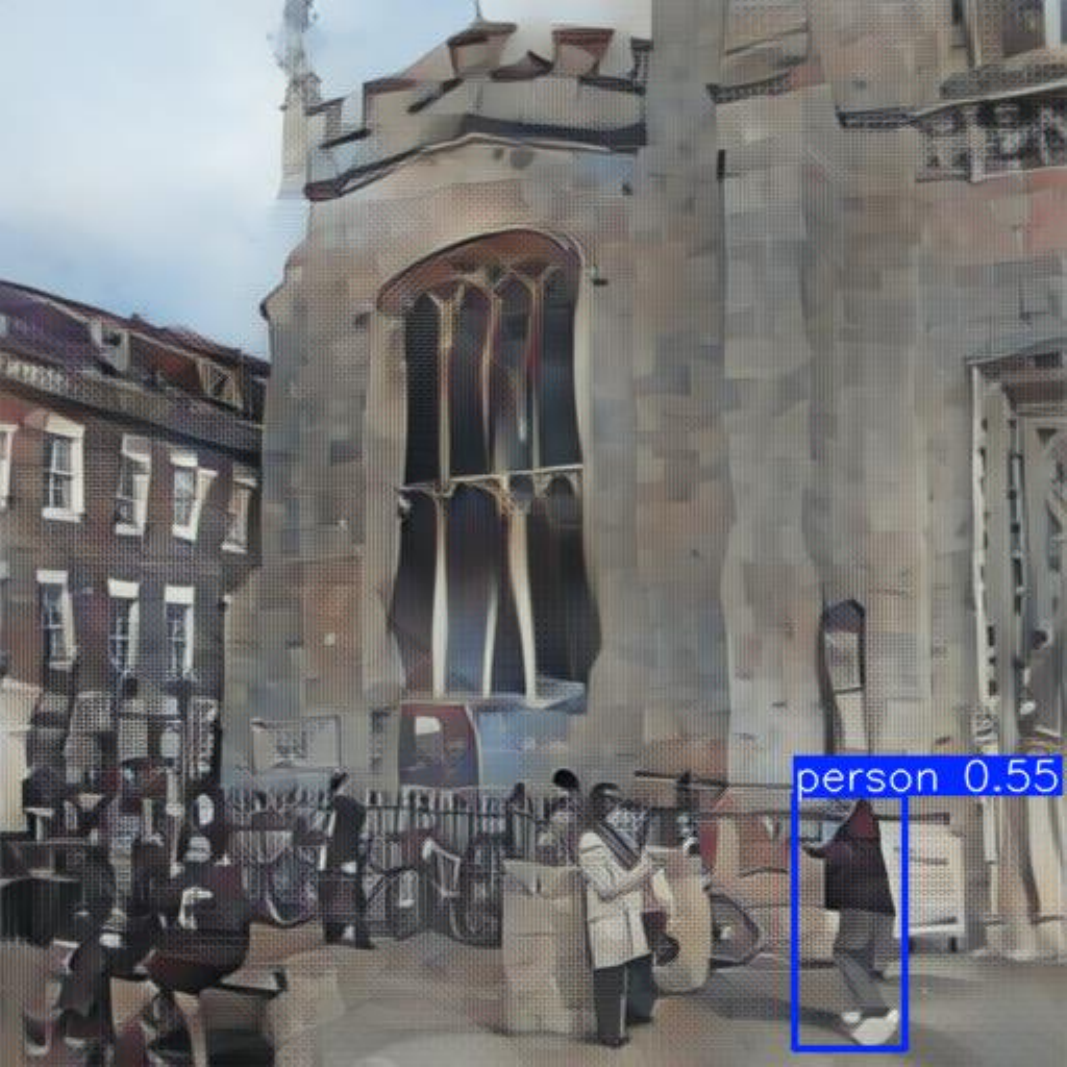}
    }
    \subfloat{
        \includegraphics[width=0.19\linewidth]{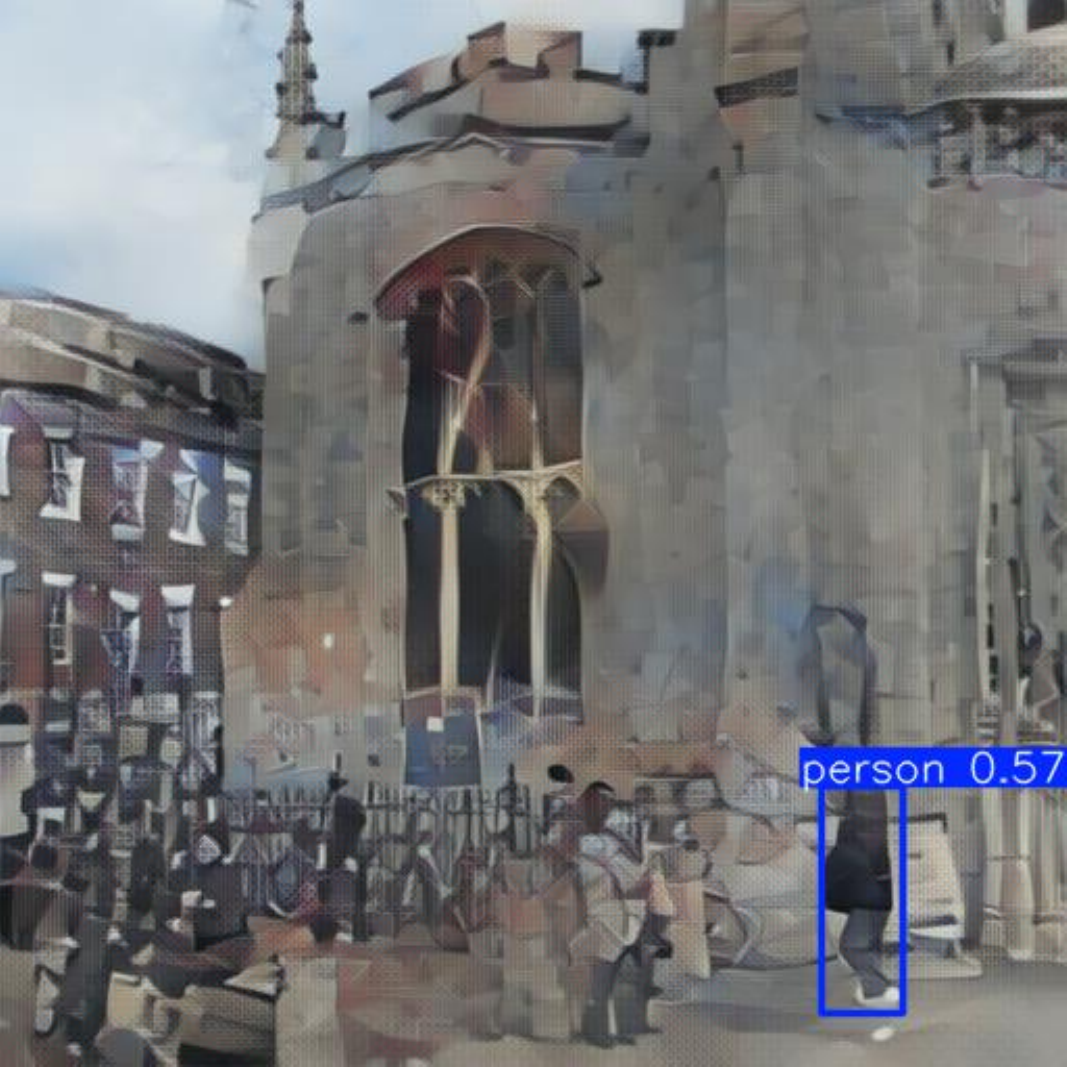}
    }
    \subfloat{
        \includegraphics[width=0.19\linewidth]{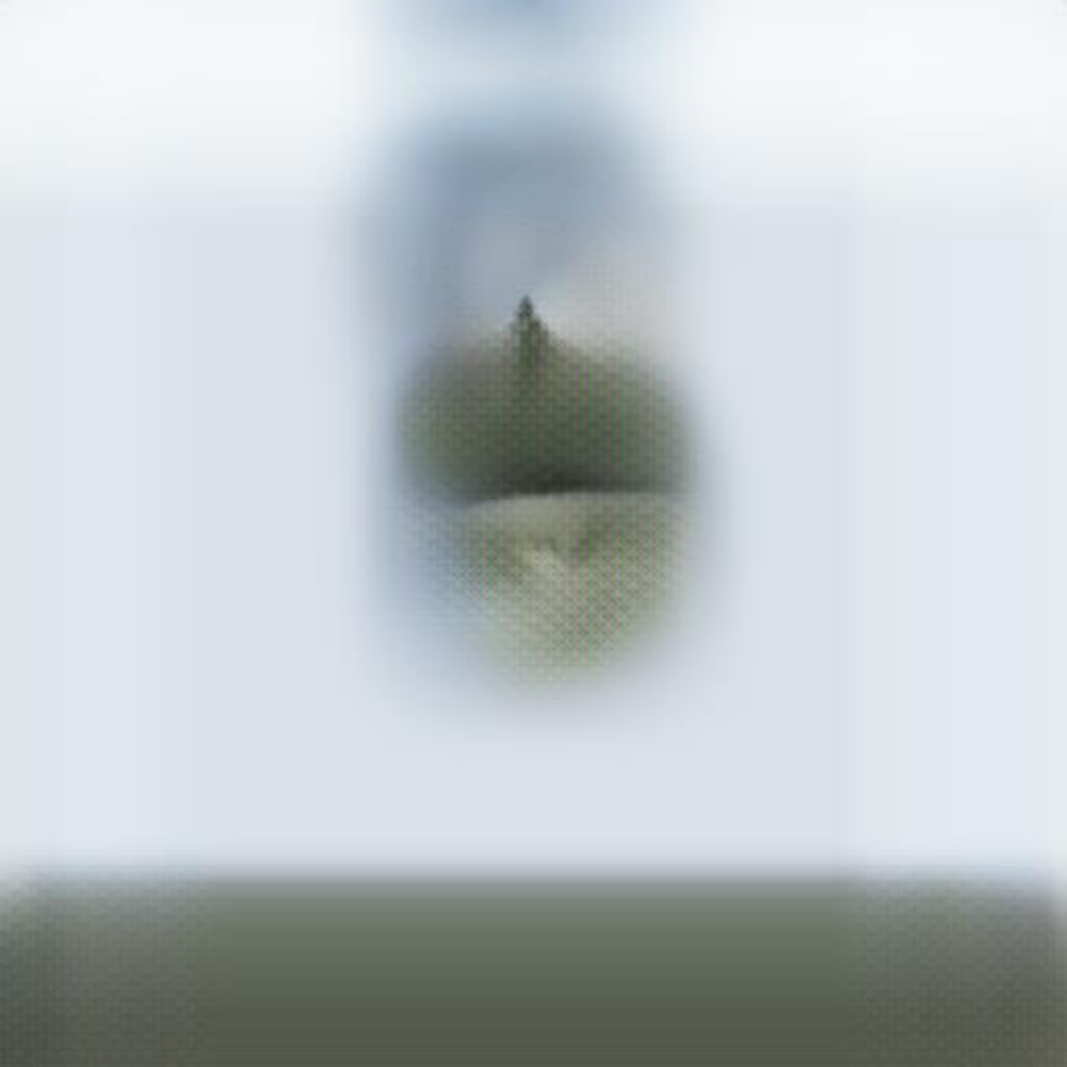}
    }
    \\
    \vspace{1mm}

    \setcounter{subfigure}{0}
    \subfloat[][Original Image]{
        \includegraphics[width=0.19\linewidth]{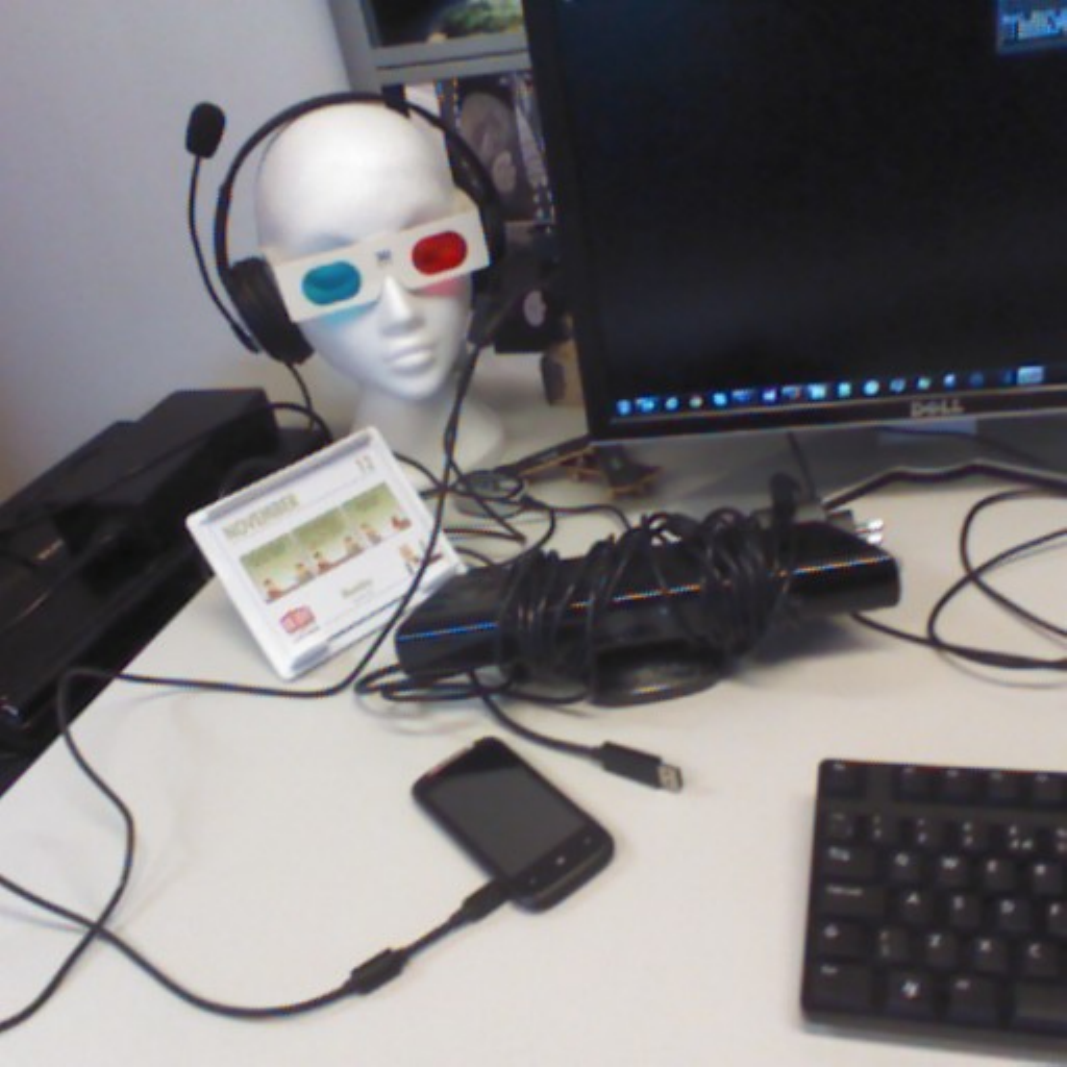}
    }
    \subfloat[][Feature Points~\cite{lowe2004sift}]{
        \includegraphics[width=0.19\linewidth]{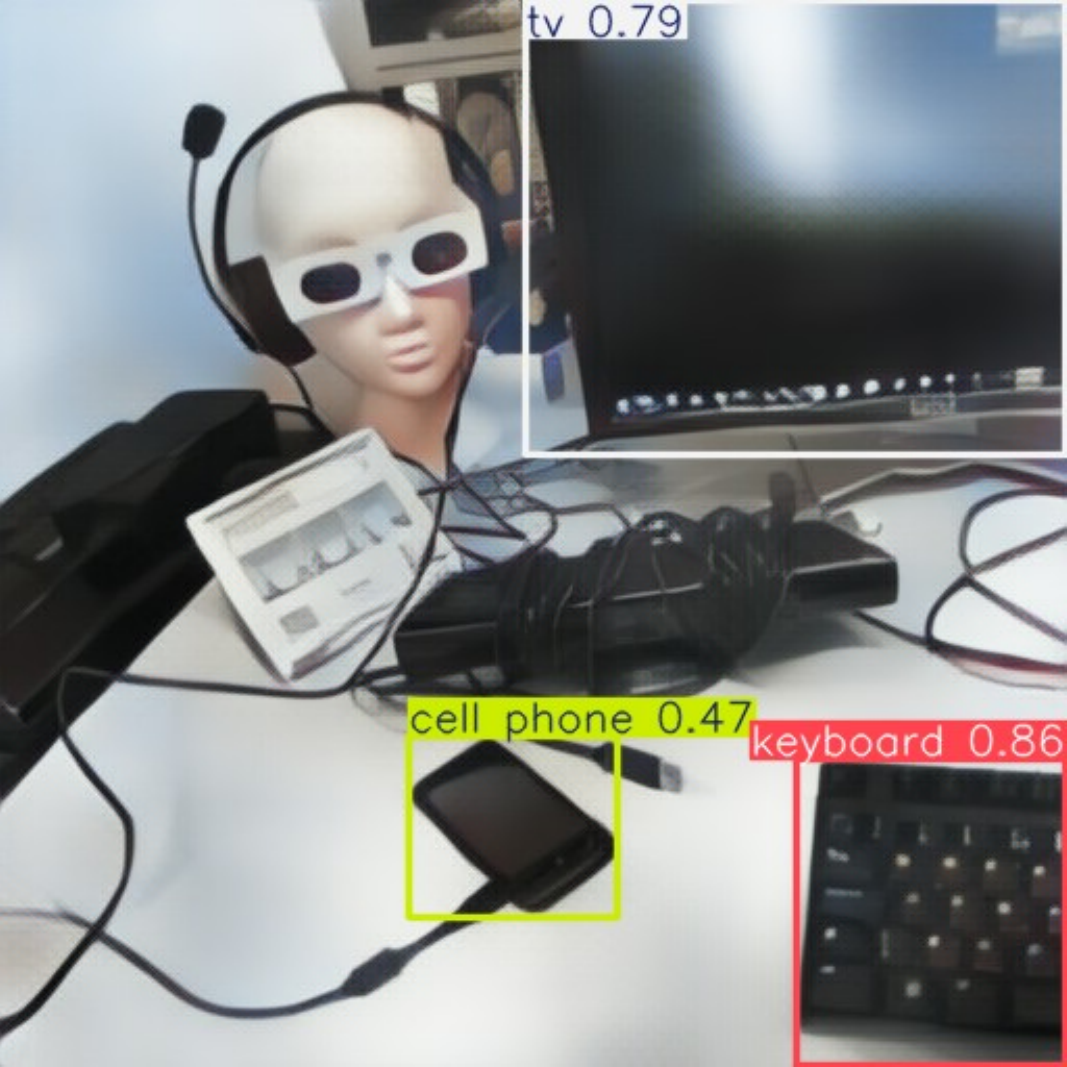}
    }
    \subfloat[][Random Lines~\cite{speciale2019queries}\figlabel{ulc_inv2d}]{
        \includegraphics[width=0.19\linewidth]{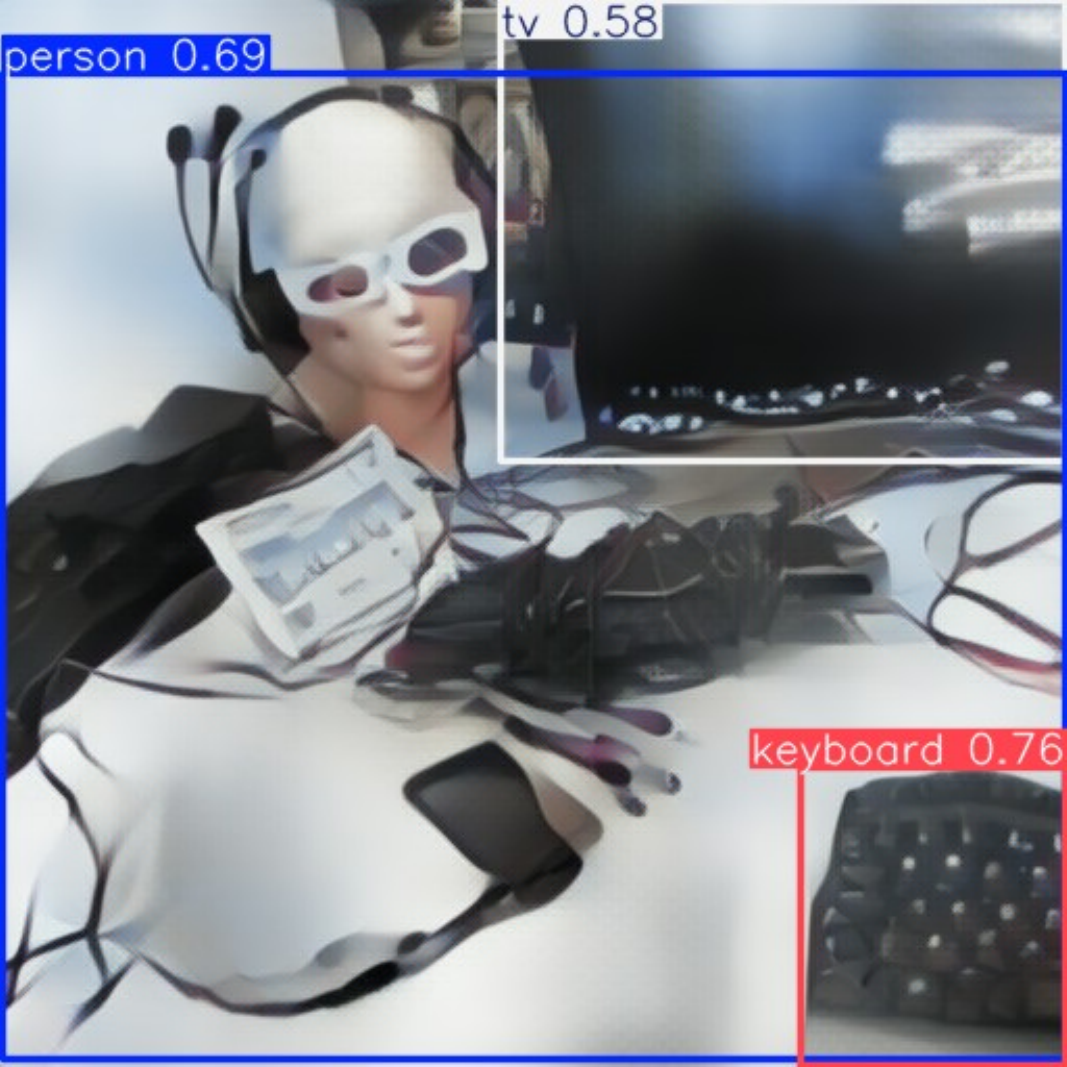}
    }
    \subfloat[][Coord. Perm.~\cite{pan2023permut}\figlabel{cp_inv2d}]{
        \includegraphics[width=0.19\linewidth]{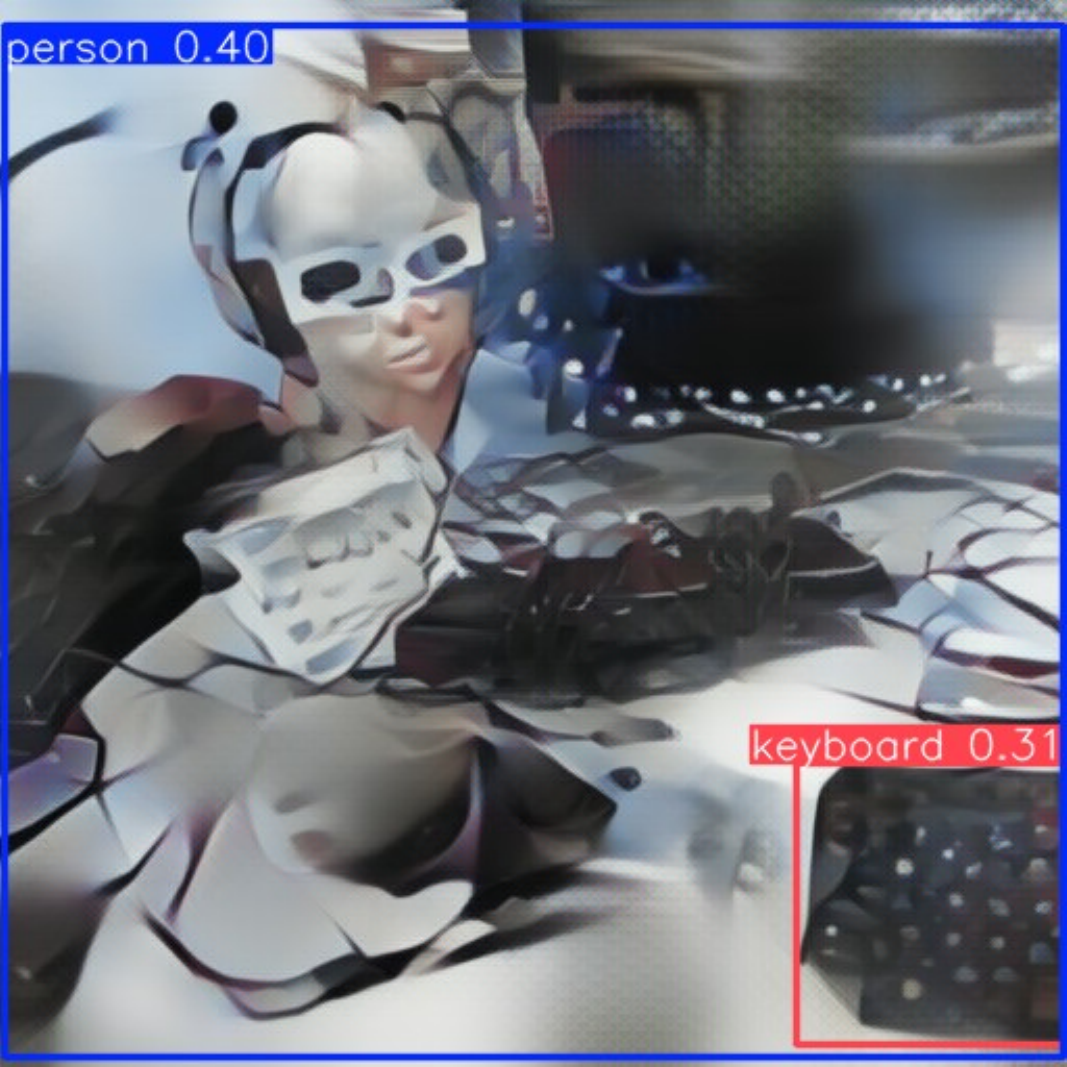}
    }
    \subfloat[][\textbf{DCL (ours)}\figlabel{dcl_inv2d}]{
        \includegraphics[width=0.19\linewidth]{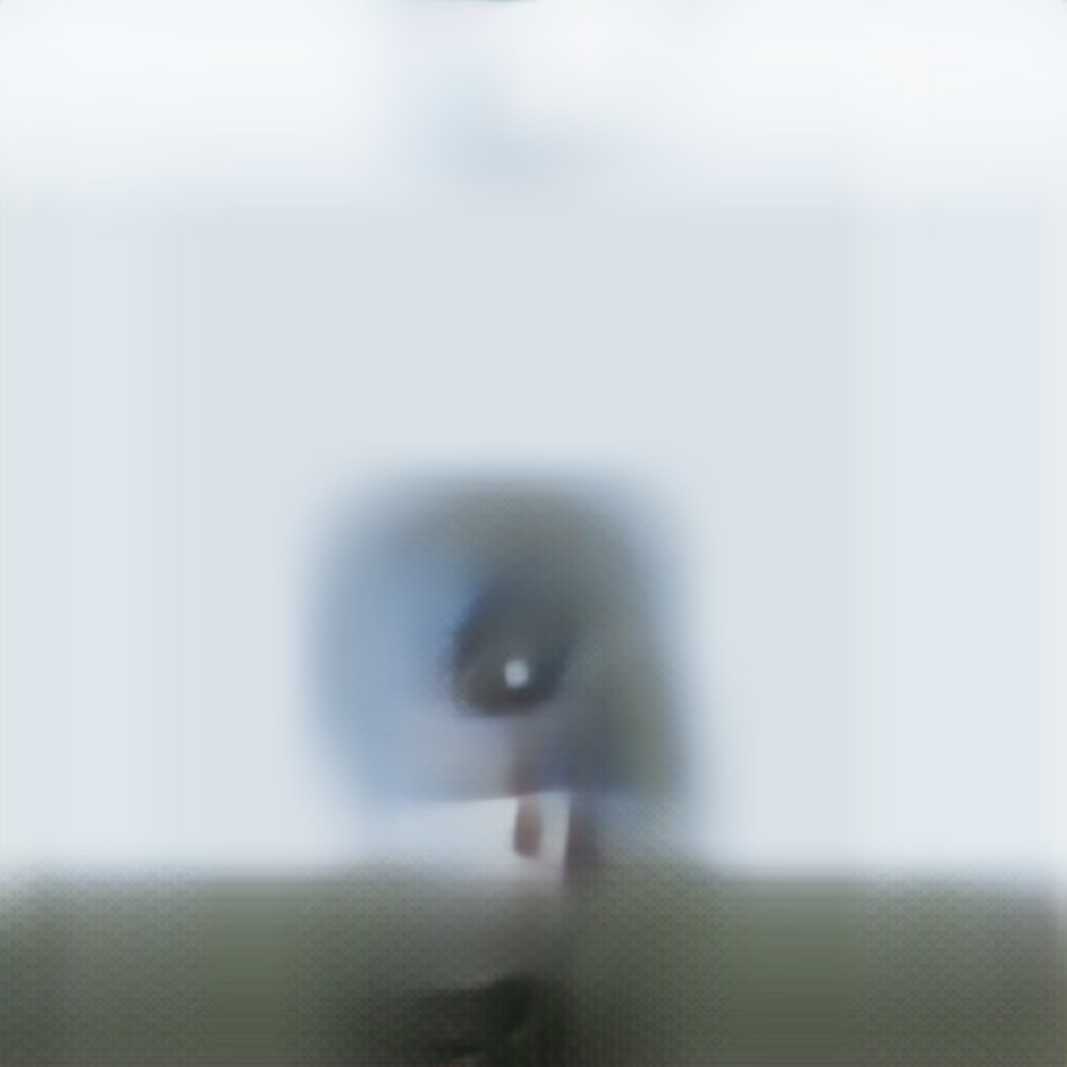}
    }

    \vspace{-2mm}
    \caption{Qualitative inversion results with the Superpoint~\cite{detone2018superpoint}, (top) Aachen (bottom) 7Scenes. Images are reconstructed after feature positions are recovered via geometry-recovery attack~\cite{chelani2024obfuscation} on various methods: (b) Feature Points~\cite{detone2018superpoint}, (c) Random Lines~\cite{speciale2019queries}, (d) Coordinate Permutation~\cite{pan2023permut}, and (e) Dual Convergent Lines (ours). We also visualize the object detection results with a pretrained YOLO-v11 model.
    Additional qualitative results and inversion results with SIFT~\cite{lowe2004sift} descriptors are in the supplementary material.
    }
    \vspace{-4mm}
    \figlabel{feat_inv}
    \label{fig:privacy_qualitative}
\end{figure*}

Since DCL is still based on 2D lines, we can still employ the \texttt{l6P} solver for performing RANSAC-based localization.
To understand a degenerate configuration when naively adopting this solver, we first review its internal linear system and illustrate how it can be practically avoided.


\vspace{-2mm}
\paragraph{Reviewing the linear system of equations in l6P solver.} 
To estimate the unknowns ($\mathbf{R}$, $\mathbf{t}$), the \texttt{l6P} solver first isolates the variables by separating the linear translation $\mathbf{t}$ from the non-linear rotation $\mathbf{R}$ in Eq.~\eqref{constraint} for each constraint:
\begin{equation}
    \mathbf{n}_i^\top \mathbf{t} = - \mathbf{n}_i^\top \mathbf{R} \mathbf{X}_i,~~ i \in \{1,...,6\}.
    \label{eq:separated_constraint}
\end{equation}
Let the non-linear rotational component be denoted by the scalar function $f_i(\mathbf{R}) = - \mathbf{n}_i^\top \mathbf{R} \mathbf{X}_i$.
The solver then partitions the six constraints into two systems of three equations each.
The first system is stacked into matrix form as $\mathbf{N}_1 \mathbf{t} = \mathbf{f}_1(\mathbf{R})$, where $\mathbf{N}_1 = [\mathbf{n}_1, \mathbf{n}_2, \mathbf{n}_3]^\top \in \mathbb{R}^{3 \times 3}$ and $\mathbf{f}_1(\mathbf{R})$ is the vector of non-linear functions. Similarly, the second system forms $\mathbf{N}_2 \mathbf{t} = \mathbf{f}_2(\mathbf{R})$, where $\mathbf{N}_2 = [\mathbf{n}_4, \mathbf{n}_5, \mathbf{n}_6]^\top$.

To solve for $\mathbf{R}$, the solver has to eliminate $\mathbf{t}$. By inverting $\mathbf{N}_1$ from the first system, the translation vector $\mathbf{t}$ is expressed in terms of $\mathbf{R}$:
\begin{equation}
    \mathbf{t} = \mathbf{N}_1^{-1} \mathbf{f}_1(\mathbf{R}).
    \label{eq:solve_for_t}
\end{equation}
Substituting this into the second system yields an equation purely in terms of $\mathbf{R}$:
\begin{equation}
    \mathbf{N}_2 \left( \mathbf{N}_1^{-1} \mathbf{f}_1(\mathbf{R}) \right) = \mathbf{f}_2(\mathbf{R}).
    \label{eq:final_system_R}
\end{equation}
Eq.~\eqref{final_system_R} consists of three nonlinear polynomial equations in $\mathbf{R}$, which the minimal solver solves. Note the overall solving process fails at Eq.~\eqref{solve_for_t} if the matrix $\mathbf{N}_1$ is non-invertible.

\vspace{-3mm}
\paragraph{Avoiding degenerate configuration.}
As noted in Sec.~\ref{sec:preliminaries}, each 2D line $\v l_i$ back-projects to a 3D plane $\Pi_i$ whose normal $\v n_i$ is used by the solver. 
Note, during RANSAC, there is a roughly $2\times0.5^3=0.25$ probability of sampling three obfuscated lines passing through the same 2D anchor point ($\v a_1$ or $\v a_2$). In such case, all their corresponding 3D planes $\{\Pi_i\}$ must intersect along a camera ray passing through the camera's optical center and the anchor point.

By definition, the normal $\v n_i$ of any plane $\Pi_i$ in this set must be orthogonal to this common ray, forcing all normal vectors to lie in the 2D subspace.
When these coplanar normals are stacked to form the matrix $\v N_1 = [\v n_1, \v n_2, \v n_3]\tr$, the matrix is rank-deficient and thus non-invertible. This makes $\v N_1$ non-invertible ($\mathrm{det}(\v N_1) = 0$), causing the solver to fail when attempting to isolate $t$ (Eq.~\eqref{solve_for_t}).

We mitigate this by enforcing that the minimal set comprises lines from both anchors (e.g., two from $a_1$, one from $a_2$).
This can be efficiently implemented by monitoring the intersection points of the three sampled lines, \ie evaluating whether the intersection point of pairs of lines is identical (degenerate) or not (non-degenerate).
This guarantees that at least one normal vector does not lie in the 2D subspace of other two normal vectors, effectively breaking the coplanarity and allowing the set of normals to span the full 3D space.
Since $N_1$ is now full-rank, the \texttt{l6P} solver can be proceeded without further numerical issues.

\section{Experimental Results}
\label{sec:experiments}



\noindent\textbf{Datasets.} We evaluate our method on three standard visual localization benchmarks: 7Scenes \cite{shotton2013scene}, Cambridge Landmarks \cite{kendall2015posenet}, and the original Aachen Day-Night dataset \cite{sattler2012image,sattler2018aachen}. 7Scenes provides dense indoor RGB-D sequences, Cambridge Landmarks offers outdoor scenes of varying scales, and Aachen Day-Night presents a challenging large-scale scenario with significant appearance changes. 
For all datasets, we build our 3D map using the triangulation pipeline from HLoc~\cite{sarlin2019coarse}.
We initialize the map by adopting the camera poses from the pre-existing SfM model~\cite{schoenberger2016sfm} provided with the dataset. We then extract local features using SuperPoint~\cite{detone2018superpoint} and match them with a nearest neighbor (NN) matcher. Finally, we triangulate a new 3D point cloud based on these features and the fixed camera poses.

\noindent\textbf{Evaluation metrics.} To quantify the resilience against geometry recovery attacks, we measure the geometric error of recovered keypoints in pixel scale and the quality of images reconstructed by an inversion network using PSNR~\cite{huynh2008scope}, SSIM~\cite{wang2004image}, and LPIPS~\cite{zhang2018unreasonable}. Lower PSNR/SSIM and higher LPIPS indicate better privacy protection.
We also visualize the object detection results with the reconstructed images using the pretrained YOLO-v11 model following~\cite{chelani2024obfuscation}.
For localization accuracy, we compute the rotational and translational errors as $\Delta R = \arccos\left(\frac{\text{Tr}(\mathbf{R}^T \hat{\mathbf{R}})-1}{2}\right)$ and $\Delta T = \|\mathbf{R}^T \mathbf{T} - \hat{\mathbf{R}}^T \hat{\mathbf{T}}\|_2$. 
We report the median rotation error (degrees, $\downarrow$) and median translation error ($\downarrow$), along with the recall (\% $\uparrow$) within specific accuracy threshold. 

\noindent\textbf{Implementation details.} Our localization pipeline uses SuperPoint \cite{detone2018superpoint} features extracted from the query image. We then perform image retrieval~\cite{arandjelovic2016netvlad, torii201524} to identify potential candidate database images.
Subsequently, considering privacy, we establish 2D-3D matches using a NN search on the feature descriptors instead of keypoint position-dependent methods like SuperGlue~\cite{sarlin2020superglue}.
For pose estimation, we utilized the $\texttt{l6P}$ minimal solver~\cite{speciale2019privacy} within a Lo-RANSAC~\cite{chum2003loransac} loop using PoseLib \cite{PoseLib}, followed by refinement via Levenberg-Marquardt optimization~\cite{levenberg1944method}.

For the geometry-inversion attack~\cite{chelani2024obfuscation}, we follow the same experimental setup as~\cite{chelani2024obfuscation}, using 20 neighborhood obfuscations per keypoint and assuming perfect identification of these neighbors to simulate the \emph{worst-case} (upper-bound) scenario.
In image inversion, we used the inversion network from~\cite{dangwal2021mitigating} with Superpoint features and InvSfM~\cite{pittaluga2019revealing} with SIFT features~\cite{lowe2004sift}. Further details on the localization and inversion model are in the supplementary material.
All experiments were run on a desktop PC with a AMD Ryzen9 7900X CPU, 64GB RAM, and an NVIDIA RTX 4090 GPU.

\begin{table}[t]
\centering
\scriptsize

\setlength{\tabcolsep}{5pt}
\renewcommand{\arraystretch}{1.2}
\begin{tabular}{c|c|c|c|c|c}
\hline
        &         & Feature  & Random & Coord. & DCL\\
Dataset & Metrics &   Points~\cite{detone2018superpoint} & Lines~\cite{speciale2019queries} & Perm.~\cite{pan2023permut} & (ours)\\
\hline

\multirow{4}{*}{7-scenes} 
& $e_{recon}$(\textcolor{darkgreen}{↑}) & - & 6.137 & 10.56 & \textbf{330.4} \\
& PSNR(\textcolor{blue}{↓}) & 15.886 & 13.899 & 13.358 & \textbf{7.040} \\
& SSIM(\textcolor{blue}{↓}) & 0.681 & 0.503 & \textbf{0.444} & \textbf{0.444} \\
& LPIPS(\textcolor{darkgreen}{↑})& 0.477 & 0.604 & 0.643 & \textbf{0.754} \\
\hline

\multirow{4}{*}{Cambridge} 
& $e_{recon}$(\textcolor{darkgreen}{↑}) & - & 6.386 & 11.81 & \textbf{800.2} \\
& PSNR(\textcolor{blue}{↓}) & 15.667 & 14.991 & 14.309 & \textbf{6.746} \\
& SSIM(\textcolor{blue}{↓}) & 0.433 & 0.376 & 0.335 & \textbf{0.276} \\
& LPIPS(\textcolor{darkgreen}{↑})& 0.495 & 0.517 & 0.558 & \textbf{0.740} \\
\hline

\multirow{4}{*}{Aachen} 
& $e_{recon}$(\textcolor{darkgreen}{↑}) & - & 5.381 & 9.855 & \textbf{713.0} \\
& PSNR(\textcolor{blue}{↓}) & 16.103 & 15.386 & 14.281 & \textbf{7.021} \\
& SSIM(\textcolor{blue}{↓}) & 0.430 & 0.358 & 0.291 & \textbf{0.257}\\
& LPIPS(\textcolor{darkgreen}{↑})& 0.433 & 0.476 & 0.539 & \textbf{0.736} \\
\hline

\end{tabular}
\vspace{-2mm}
\caption[Result-inversion-attacks]
{Mean geometric error ($e_{recon}$) of points recovered by the geometry-recovery attack~\cite{chelani2024obfuscation} (in pixels) and mean quality metrics for images reconstructed from those points with Superpoint~\cite{detone2018superpoint}.}
\label{tab:privacy_quantitative}
\vspace{-5mm}
\end{table}

\subsection{Robustness Against Privacy Attacks}
\label{sec:privacy_robustness}

We conducted both quantitative and qualitative evaluations of DCL's robustness against the neighborhood-based geometry-recovery attack~\cite{chelani2024obfuscation}, also comparing results against prior geometric-obfuscation schemes~\cite{speciale2019queries, pan2023permut}.


As shown in Fig.~\ref{fig:recovered_points}, DCL achieves the worst point recovery, with recovered points deviating significantly from the true positions.
In contrast, prior schemes like Random Lines \cite{speciale2019queries} and Coordinate Permutation \cite{pan2023permut} still resemble the ground truth, indicating vulnerability to the attack.
Consequently, DCL demonstrates degraded image reconstruction quality compared to prior methods, as shown in Fig.~\ref{fig:privacy_qualitative}.

The quantitative inversion results in Table~\ref{tab:privacy_quantitative} also support DCL's enhanced privacy capability, as it consistently shows the worst point and image recovery quality across all datasets and metrics.
Although the SSIM score for Coordinate Permutation~\cite{pan2023permut} on the 7Scenes is on par with DCL, DCL consistently achieves worse recovery quality, including point recovery error, PSNR/LPIPS, and qualitative results in Fig.~\ref{fig:privacy_qualitative}, SIFT-based inversion results in supplement.
\begin{table}[t!]
\centering
\renewcommand{\arraystretch}{1.2}
\setlength{\tabcolsep}{3pt}
\scriptsize
\begin{adjustbox}{max width=\textwidth}
\begin{tabular}{l||cc|cccc}
\textbf{Method} & \textbf{\makecell{P.V.}} &\textbf{\makecell{R.T.}}  & \textbf{King's} & \textbf{Hospital} & \textbf{Shop} & \textbf{St. Mary's} \\
\specialrule{.1em}{.05em}{.05em} 

HLoc(SP+SG)~\cite{sarlin2019coarse,sarlin2020superglue}  & N/A & \cmark & 12/0.2 & 15/0.3 & 4/0.2 & 7/0.2 \\
HLoc(SP+NN)~\cite{sarlin2019coarse} & N/A  & \cmark & 12/0.2 & 15/0.3 & 4/0.2 & 8/0.2 \\
DSAC*~\cite{brachmann2021dsacstar} & N/A & \cmark& 18/0.3 & 21/0.4 & 5/0.3 & 15/0.6 \\
ACE~\cite{brachmann2023ace} & N/A & \cmark& 28/0.4 & 31/0.6 & 5/0.3 & 18/0.6 \\
GLACE~\cite{GLACE2024CVPR} & N/A & \cmark & 19/0.3 & 17/0.4 & 4/0.2 & 9/0.3 \\
\midrule

GoMatch~\cite{zhou2022geometry} & \cmark & \cmark& 25/0.64 & 283/8.14 & 48/4.77 & 335/9.94 \\
DGC-GNN~\cite{wang2024dgc} &\cmark & \cmark& 18/0.47 & 75/2.83 & 15/1.57 & 106/4.03 \\
SegLoc~\cite{pietrantoni2023segloc} & \cmark & - & 24/0.26 & 36/0.52 & 11/0.34 & 17/0.46  \\
GSFF Privacy~\cite{pietrantoni2025gaussian} & \cmark & \xmark & 24/0.39 & 26/0.49 & 5/0.27 & 13/0.48 \\

Random Lines~\cite{speciale2019queries} & \cmark & \cmark& 11/0.2 & 16/0.3 & 4/0.2 & 7/0.2 \\
DCL (ours) & \xmark & \cmark&  25/0.4 & 38/0.7 & 10/0.4 & 16/0.5 \\

\bottomrule
\end{tabular}
\end{adjustbox}
\vspace{-2mm}
\caption{
Localization results on the Cambridge dataset \cite{kendall2015posenet}. 
\textbf{P.V.} stands for Privacy Vulnerability under recent attacks~\cite{chelani2024obfuscation, anonymous2025vulnerability, pittaluga2019revealing}, and \textbf{R.T.} denotes real-time performance.
Median position error (cm) ($\downarrow$) and median rotation error ($^\circ$) ($\downarrow$) are reported.
}
\label{tab:cambridge}
\vspace{-4mm}
\end{table}
\begin{table}[t!]
\centering
\renewcommand{\arraystretch}{1.2}
\setlength{\tabcolsep}{2pt}
\scriptsize
\begin{tabular}{l|c|ccc|ccc}
& & \multicolumn{3}{c|}{\textbf{Aachen Day}} & \multicolumn{3}{c}{\textbf{Aachen Night}} \\
\toprule
\textbf{Method} & \textbf{\makecell{P.V.}} & 0.25m,2° & 0.5m,5° & 5m,10° & 0.25m,2° & 0.5m,5° & 5m,10° \\
\midrule
HLoc(SP+NN)& N/A  & 86.5&93.7&96.8&70.4&86.7&94.9 \\
ACE x 50 & N/A & 6.9 & 17.2 & 50.0 & 0.0 & 1.0 & 5.1 \\
GLACE & N/A & 8.6 & 20.8 & 64.0 & 1.0 & 1.0 & 17.3 \\
\midrule
Random Lines & \cmark & 79.9&87.1&94.2&54.1&63.3&85.7 \\
DCL (ours) & \xmark & 41.0&57.9&72.7&13.3&21.4&38.8 \\
\bottomrule
\end{tabular}
\vspace{-2mm}
\caption{Localization performance on the Aachen Day \& Night dataset~\cite{sattler2018aachen}. 
Recall~(\%) at different thresholds is reported.}
\label{tab:aachen}
\vspace{-4mm}
\end{table}
\begin{table*}[h!]
\centering
\renewcommand{\arraystretch}{1.2}
\setlength{\tabcolsep}{3pt}
\scriptsize
\begin{adjustbox}{max width=\textwidth}
\begin{tabular}{ll|cc||cccccccc}
\toprule
 & \textbf{Method} & \textbf{\makecell{P.V.}} & \textbf{\makecell{R.T.}} & \textbf{Chess} & \textbf{Fire} & \textbf{Heads} & \textbf{Office} & \textbf{Pumpkin} & \textbf{Redkitchen} & \textbf{Stairs} \\
\specialrule{.1em}{.05em}{.05em}

\multirow{4}{*}{\parbox[c]{1.8cm}{\centering \textbf{SBM}}} 
& HLoc(SP+SG)\cite{sarlin2019coarse,detone2018superpoint,sarlin2020superglue} & N/A  & \cmark & 0.3/0.10/100 & 0.6/0.25/100 & 0.4/0.27/100 & 0.7/0.20/100 & 0.6/0.13/100 & 0.5/0.14/97 & 2.1/0.66/73 \\
& HLoc(SP+NN)\cite{sarlin2019coarse,detone2018superpoint} & N/A & \cmark & 0.3/0.11/100 & 0.6/0.25/100 & 0.5/0.27/97 & 0.8/0.21/99 & 0.7/0.15/98 & 0.5/0.14/95 & 2.6/0.74/66 \\
& DSAC* \cite{brachmann2017dsac}        & N/A & \cmark & 0.5/0.17/100 & 0.8/0.28/99  & 0.5/0.34/100 & 1.2/0.34/98  & 1.2/0.28/99  & 0.7/0.21/97  & 2.7/0.78/92 \\
& ACE \cite{brachmann2023ace}                & N/A & \cmark & 0.5/0.18/      -     & 0.8/0.33/      -     & 0.5/0.33/      -     & 1.0/0.29/      -     & 1.0/0.22/      -     & 0.8/0.2/      -      & 2.9/0.81/      - \\

\midrule

\multirow{5}{*}{\parbox[c]{1.8cm}{\centering \textbf{PPM}}}
& GoMatch ~\cite{zhou2022geometry} & \cmark & \cmark & 4/1.65/      - & 13/3.86/      - & 9/5.17/      - & 11/2.48/      - & 16/3.32/      - & 13/2.84/      - & 89/21.12/      - \\
& DGC-GNN ~\cite{wang2024dgc}& \cmark & \cmark & 3/1.41/      - & 5/1.81/      - & 4/3.13/      - & 7/1.66/      - & 8/2.03/      - & 8/2.14/      - & 83/21.53/      - \\
& GSFF Privacy~\cite{pietrantoni2025gaussian} & \cmark &\xmark & 0.8/0.28/96 & 0.8/0.33/94 & 1.0/0.67/90 & 1.5/0.51/90 & 2.0/0.50/78 & 1.2/0.33/85 & 28.2/0.98/29 \\
& Random Lines~\cite{speciale2019queries}& \cmark & \cmark & 0.5/0.15/100 & 0.9/0.35/99 & 0.7/0.40/96 & 1.0/0.29/98 & 1.0/0.22/96 & 0.7/0.20/94 & 3.7/1.06/58 \\
& DCL (ours) & \xmark & \cmark & 1.0/0.36/97 & 2.0/0.76/81 & 1.3/0.77/83 & 2.0/0.61/83 & 2.1/0.48/80 & 1.4/0.42/84 & 26.3/6.59/13 \\

\bottomrule
\end{tabular}
\end{adjustbox}
\vspace{-2mm}
\caption{Evaluation of localization methods on the 7Scenes dataset~\cite{shotton2013scene}. We report median position error (cm.) (↓)/ median rotation error (°) (↓)/ recall at 5cm/5° (\%) (↑). Results are presented for Structure-Based Methods (SBM) and Privacy-Preserving Methods (PPM). 
}
\vspace{-4mm}
\label{tab:7scenes}
\end{table*}

\subsection{Localization Performance}
\label{sec:localization_results}

We evaluated the localization performance in terms of accuracy and real-time speed as shown in Table~\ref{tab:cambridge},\ref{tab:aachen},\ref{tab:7scenes}.
For comparison, we categorized cloud-based localization methods into structure-based (SBM) \cite{sarlin2019coarse, brachmann2023ace, GLACE2024CVPR} and privacy-preserving (PPM) for image queries (e.g., descriptor-free \cite{zhou2022geometry, wang2024dgc} and segmentation-based \cite{pietrantoni2023segloc, pietrantoni2025gaussian} approaches). 
The method's vulnerability to \emph{cloud-based} privacy attacks is indicated by the \textbf{P.V.} status (Privacy Vulnerability), and further details are in the supplementary material.
    

As shown in Table~\ref{tab:cambridge},\ref{tab:7scenes}, DCL demonstrates a practical balance between localization accuracy and real-time performance across various environments. On the indoor (7Scenes) and outdoor (Cambridge) datasets, DCL consistently achieves better accuracy than descriptor-free approaches (GoMatch~\cite{zhou2022geometry}, DGC-GNN~\cite{wang2024dgc}). While its accuracy is slightly lower than the segmentation-based methods (SegLoc~\cite{pietrantoni2023segloc}, GSFF Privacy~\cite{pietrantoni2025gaussian}), DCL offers an advantage in speed, achieving real-time inference approximately 4ms on 7Scenes and 6ms on Cambridge, highlighting its practicality. In contrast, GSFF Privacy reports much slower runtime of up to 45 seconds~\cite{pietrantoni2025gaussian} on the Cambridge dataset.
On the large-scale Aachen Day-Night dataset (Table~\ref{tab:aachen}), while the recall of DCL is lower than HLoc \cite{sarlin2019coarse} or Random Lines \cite{speciale2019queries}, it maintains superior accuracy against the recent learning-based methods such as ACE \cite{brachmann2023ace} and GLACE \cite{GLACE2024CVPR} in both Day and challenging Night scenarios.

We note that DCL is the only method that demonstrates robust resilience against the recent privacy attacks~\cite{chelani2024obfuscation, anonymous2025vulnerability, pittaluga2019revealing}, while maintaining practical runtime and  localization performance across diverse environments.

\begin{table}[t]
\centering
\renewcommand{\arraystretch}{1.2}
\setlength{\tabcolsep}{6pt}
\footnotesize
\begin{tabular}{ccc}
\toprule
\textbf{Distance between anchors} & \textbf{7Scenes} & \textbf{Cambridge} \\
\midrule
$H$  & \textbf{5.14} / 1.41 & \textbf{22.50} / \textbf{0.50} \\
$2H$   & \textbf{5.14} / 1.42 & 27.75 / 0.73 \\
$3H$  & 6.66 / \textbf{1.37} & 32.50 / 0.75 \\
\bottomrule
\end{tabular}
\vspace{-2mm}
\caption{
Ablation study of anchor point distance on localization accuracy. We report the average of median position (cm) / rotation (°) errors across the 7Scenes and Cambridge datasets. $H$ denotes the height of the image query.
}
\vspace{-4mm}
\label{tab:ablation}
\end{table}

\subsection{Ablation Study and Discussion}
\label{sec:ablation}
\paragraph{Ablation study of anchor locations.}
We conducted an ablation study to analyze the effect of the anchor distance on localization performance.
We compared our default DCL configuration (\ie $\mathbf{a}_1$$=$$(W/2, 0)$, $\mathbf{a}_2$$=$$(W/2, H)$) against two variants on the same midline with increased distance (2H and 3H).
Table~\ref{tab:ablation} shows that the default setting demonstrated the best accuracy, with performance degrading as the anchor distance increased. This suggests that farther anchors make the obfuscated lines more parallel, which adversely affects pose estimation.
        
\noindent\textbf{Resilience against server-side attack.} 
Now, we discuss the resilience of DCL against a more challenging server-side adversary~\cite{speciale2019queries}.
In this scenario, we assume the malicious server has additional information of 2D-3D correspondences after pose estimation and attempts to recover private content by leveraging the projections of inlier 3D keypoints.
As the keypoints from the private content (\eg people) tend to appear in the query but are not part of the server's 3D map, we refer to these features as \emph{outliers}.


To simulate this, we performed the geometry-recovery attack~\cite{chelani2024obfuscation}.
As shown in Fig.~\ref{fig:deformable_matching}, the attack fails to reconstruct meaningful outlier information.
Furthermore, we evaluated a stronger \textit{iterative} attack that strictly filters outliers by incorporating only points with dense inlier neighborhoods.
Our supplementary material demonstrates that even under this filtering, DCL induces error accumulation rather than accurate recovery.
Detailed algorithm configurations for both attacks, along with visual results for the iterative attack, are provided in the supplementary material.

\noindent\textbf{Limitation and future work.} 
A potential limitation occurs in rare degenerate cases where all query keypoints fall into a single region (Sec.~\ref{sec:localization}).
While our choice of a vertical midline is intended to reduce this risk by providing a better keypoint distribution, we note that this scenario was exceedingly rare in real (only 4/17,000 images in 7Scenes). 
Our future work includes research into adaptive partitions based on keypoints from structures (e.g., buildings) that are highly likely to match with the map to prevent such cases.



    

\begin{figure}[t]
    \centering
    \includegraphics[width=0.35\linewidth]{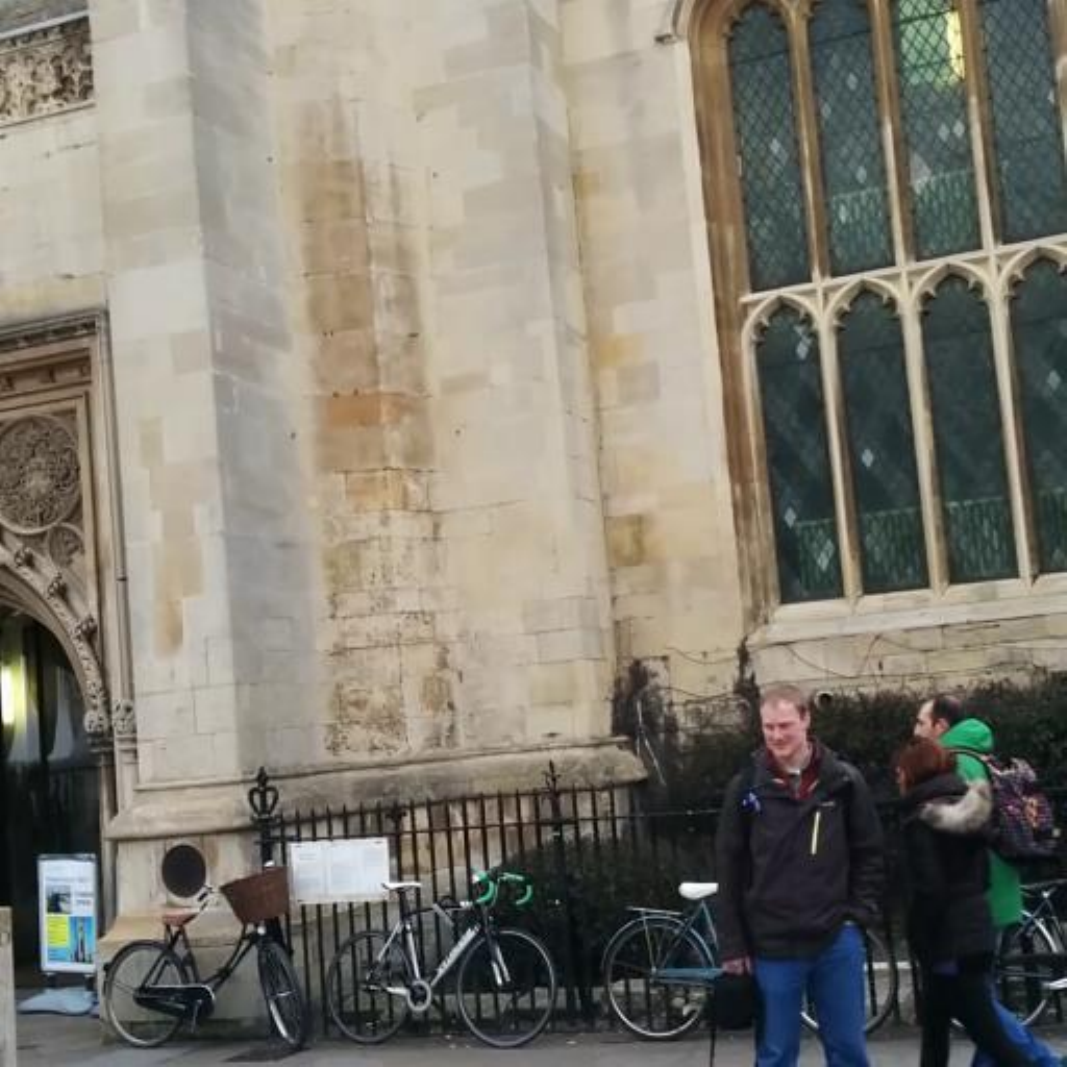}
    \includegraphics[width=0.35\linewidth]{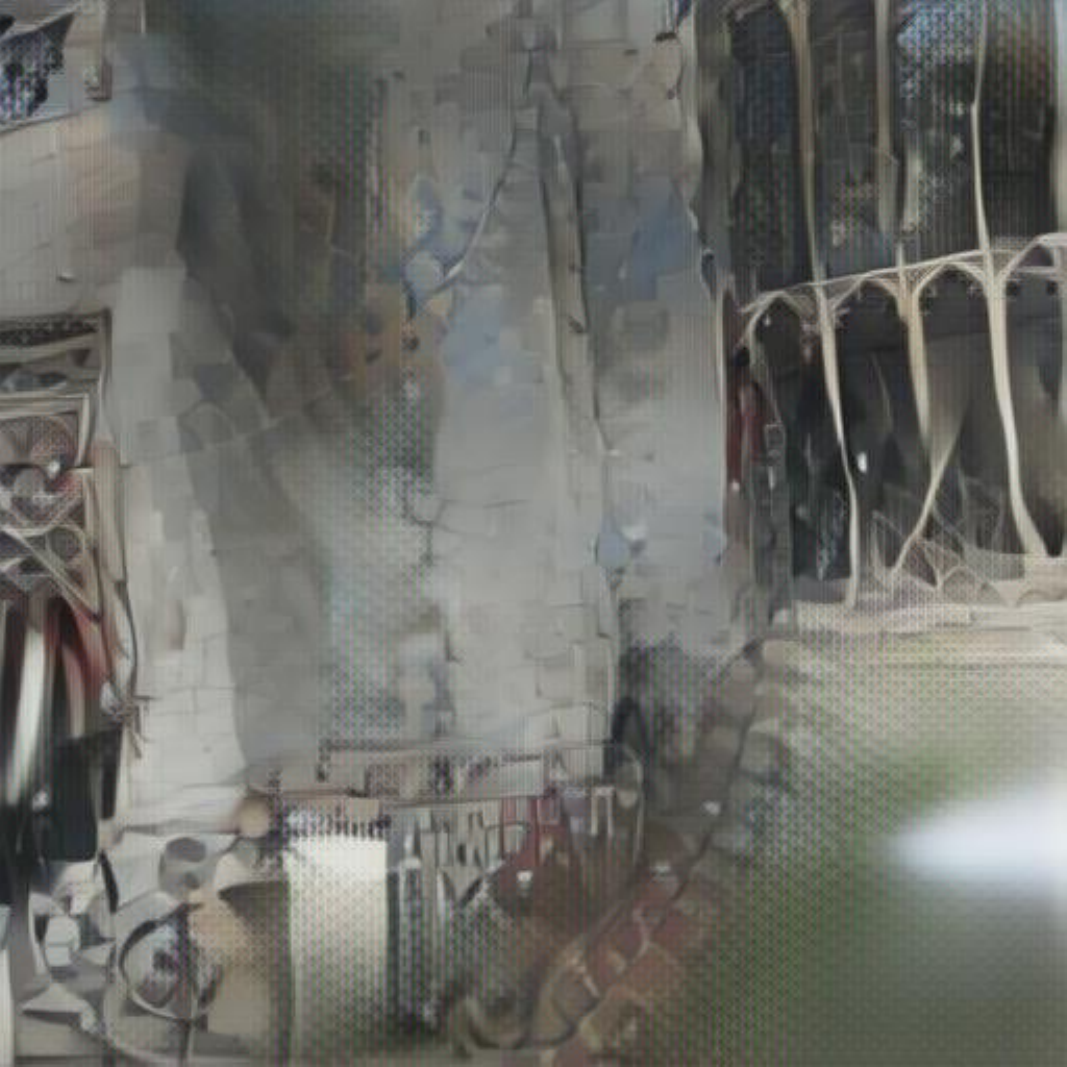}
    \vspace{-2mm}
    \caption{Example of server-side attack, where inlier feature positions are known. (Left) original image (Right) privacy-related contents (\eg people) are not revealed after the server-side attack.}
    \label{fig:deformable_matching}
    \vspace{-5mm}
\end{figure}

\vspace{-1mm}
\section{Conclusion}
\label{sec:conclusion}
We addressed a recent vulnerability in PPIQ where geometric obfuscation can be defeated by the neighborhood-based recovery attack. We proposed Dual Convergent Lines (DCL), a novel keypoint lifting strategy designed to be resilient to this class of attack. The core idea of our method is a dual-anchor geometry that makes the attacker's point-recovery optimization an ill-posed problem. We provided a theoretical analysis showing how this line distribution leads to two distinct failure modes.
Experiments demonstrate that DCL provides strong robustness against these privacy attacks, rendering recovered images unrecognizable. Since this scheme is still based on 2D lines, DCL achieves this while remaining compatible with efficient minimal solver, achieving practical localization performance.

\vspace{-4mm}
\paragraph{Acknowledgement} 
This work was supported by National Research Foundation of Korea(NRF) grants funded by the Korea government(MSIT) (No.RS-2025-16068784, 50\%; No.RS-2026-25486241, 25\%) and Institute of Information \& communications Technology Planning \& Evaluation(IITP) grant funded by the Korea government(MSIT) (No.RS-2020-II201373,  Artificial Intelligence Graduate School Program(Hanyang University), 25\%).

{
    \small
    \bibliographystyle{ieeenat_fullname}
    \bibliography{main}
}

\definecolor{cvprblue}{rgb}{0.21,0.49,0.74}


\def\httilde{\mbox{\tt\raisebox{-.5ex}{\symbol{126}}}}

\def\eqref#1{(\ref{eq:#1})}
\def\eqlabel#1{\label{eq:#1}}
\def\figref#1{\ref{fig:#1}}
\def\figlabel#1{\label{fig:#1}}
\def\pparagraph#1{\par{\bf #1}~~}

\def\x{{\mathbf x}}
\def\L{{\cal L}}

\newcommand{\hlc}[2][yellow]{{%
    \colorlet{foo}{#1}%
    \sethlcolor{foo}\hl{#2}}%
}
\def\xx#1{\textcolor{red}{xx: #1}}

\def\NoNumber#1{{\def\alglinenumber##1{}\State #1}\addtocounter{ALG@line}{-1}}

\definecolor{darkgreen}{RGB}{0,150,0}

\def\eqref#1{(\ref{eq:#1})}
\def\eqlabel#1{\label{eq:#1}}
\def\figref#1{\ref{fig:#1}}
\def\figlabel#1{\label{fig:#1}}
\def\pparagraph#1{\par{\bf #1}~~}
\def\httilde{\mbox{\tt\raisebox{-.5ex}{\symbol{126}}}}

\def\xcomment#1{\textcolor[rgb]{.3,.3,.1}{\text{$/\!\!/$ {\em #1}}}}
\def\comment#1{\kern-1cm\xcomment{#1}}
\def\eqcomment#1{\kern-1cm\xcomment{#1}}


\def\xx#1{\textcolor{red}{xx: #1}}

\def\m#1{\ensuremath{\mathtt{#1}}}
\def\mt#1{\ensuremath{\mathtt{\tilde{#1}}}}
\def\v#1{\ensuremath{\mathbf{#1}}}

\def\colspace#1{\mathrm{col}(#1)}
\def\nullspace#1{\mathrm{null}(#1)}
\def\localmin{\widehat{\min} }
\def\localargmin{{\arg\widehat{\min}}}

\def\symmx#1{{\mathbb S}^{#1}}
\def\Rmx#1#2{{\mathbb R}^{{#1}\times{#2}}}

\def\real{\mathbb{R}}
\def\tr{^\top}
\def\trinv{^{-\top}}
\def\inv#1{#1^\mathsf{-1}}
\def\pinv#1{#1^\mathsf{\dagger}}
\def\mupinvof#1{{{#1}^{-\lambda}}}
\def\expm{\mathrm{expm}}
\def\logm{\mathrm{logm}}

\def\hadamard{\odot}
\def\kron{\otimes}
\def\vec{\operatorname{vec}}
\def\unvec{\operatorname{unvec}}
\def\sym{\operatorname{sym}}

\def\norm#1{\left\lVert#1\right\rVert}
\def\fro#1{\norm{#1}_F}
\def\l2#1{\norm{#1}_2}
\def\nuclear#1{\norm{#1}_*}

\def\err{f}
\def\erru{f^*}

\def\vDu{{\v\Delta \vu}}
\def\vDv{{\v\Delta \vv}}
\def\vDx{{\v\Delta \v x}}
\def\vDy{{\v\Delta \v y}}

\def\vx{\v x}
\def\vhx{\hat{\v x}}
\def\vtx{\hat {\v x}}
\def\vy{\v y}
\def\vf{\v f}
\def\vp{\v p}
\def\vdx{\v {\Delta x}}
\def\vg{\v g}
\def\vgu{\vg^*}
\def\vgutr{\vg^{*\top}}
\def\vgup{\vgu_p}
\def\vgur{\vgu_r}
\def\vw{\v w}
\def\vm{\v m}
\def\vr{\v r}
\def\vru{\vr^*}
\def\vu{\v u}
\def\vup{\vu_p}
\def\vuperp{\vu_\perp}
\def\vuv{\v u^*}
\def\vv{\v v}
\def\vvu{\v v^*}
\def\hatvvu{{\hat{\v v}}^*}
\def\vx{\v x}
\def\vs{\v s}
\def\verr{{\boldsymbol\varepsilon}}
\def\verru{{\boldsymbol\varepsilon}^*}
\def\vdu{\partial \vu}
\def\vdv{\partial \vv}
\def\vdvu{\partial \vvu}
\def\dell{{\boldsymbol\delta}}

\def\set#1{\mathcal{#1}}

\def\mJu{\mJ_\vu}
\def\mJuu{\mJ_\vu^*}
\def\mJv{\m J_\vv}
\def\tmJv{\tilde{\m J}_\vv}
\def\mPv{\m P_\vv}
\def\mQv{\m Q_\vv}

\def\mA {\m A}
\def\tildemJu{\tilde\mJ^*}
\def\mJutr{\mJ^{*\top}}
\def\mU{\m U}
\def\mUperp{\mU_{\perp}}
\def\mUp{\mU_p}
\def\mUr{\mU_r}
\def\mUu{\mU^*}
\def\mVu{\mV^*}
\def\mVutr{\mV^{*\top}}
\def\mW{\m W}
\def\mM{\m M}
\def\mS{\m S}
\def\dvdu{\frac{d\vvu}{d\vu}}
\def\mdU{\partial\mU}
\def\mdV{\partial\mV}
\def\mdVu{\partial\mVu}

\def\cU{{\cal U}}
\def\cV{{\cal V}}

\def\twiddle#1{{\tilde{#1}}}

\maketitlesupplementary
\appendix

In this supplementary material, we first provide the detailed mathematical derivation for Proposition~\ref{prop:weighted_average_supp} in Sec.~\ref{sec:proof_prop1}. 
We then detail the architectures of the inversion models used in our experiments in Sec.~\ref{sec:inversion_impl}. 
Following this, we present extended qualitative evaluations demonstrating the robustness of DCL against inversion attacks in Sec.~\ref{sec:qual_results}. 
Subsequently, we clarify the system architecture of our privacy-preserving localization framework in Sec.~\ref{sec:loc_details}. 
In Sec.~\ref{sec:discussion}, we discuss the degree of privacy spectrum across various paradigms.
In Sec.~\ref{sec:impl_details} and Sec.~\ref{sec:add_results}, we provide implementation details and additional experimental analyses, including geometric degeneracy and ablation studies.
Finally, we detail the baseline configurations and practical limitations of iterative server-side attacks in Sec.~\ref{sec:supp_server_attack}.

\section{Mathematical Proofs of Proposition 1}
\label{sec:proof_prop1}

We provide the detailed mathematical proof for Proposition~\ref{prop:weighted_average_supp}, which establishes that the recovered keypoint parameter $t_i^*$ is a weighted average of individual intersection parameters $\{t^*_{i,j}\}$.

The attack's cost function (in the main paper Eq. (3)), seeks to find the point $\mathbf{x}_i^*=\mathbf{\hat x}_i$ on the target line $\mathbf{l}_i$ that minimizes the sum of squared point-to-line distances:
\begin{equation}
    f(\mathbf{\hat x}_i) = \sum_{\mathbf{l}_j \in  \mathcal{N}(\mathbf{l}_i)} d(\mathbf{l}_j, \mathbf{\hat{x}}_i)^2.
\end{equation}
We parameterize the point $\mathbf{\hat x}_i$ along the line $\mathbf{l}_i$ using the scalar parameter $t_i$, where the true keypoint $\mathbf{x}_i$ is assumed to be at $t_i=0$. We assume that $\mathbf{v}_i$ is a normalized direction vector (i.e., $\|\mathbf{v}_i\|=1$):
\begin{equation}
    \mathbf{\hat x}_i(t_i) = \mathbf{x}_i + t_i \mathbf{v}_i.
    \label{eq:line_param_supple}
\end{equation}
By substituting Eq.~\eqref{line_param_supple} into the cost function, $f(\mathbf{\hat x}_i)$ becomes a function of $t_i$, which we denote as $f(\mathbf{\hat x}_i(t_i))$.

\begin{proposition}\label{prop:weighted_average_supp}
For the inter-anchor scenario, the minimized cost function $f(\mathbf{\hat{x}}_i(t_i))$ is convex quadratic. The recovered point parameter $t_i^*$ is the weighted average of the individual parameters $\{t_{i,j}^*\}$ of the intersection points between the target line $\mathbf{l}_i$ and each neighbor line $\mathbf{l}_j$:
\begin{equation}
    t_i^* = \frac{\sum_j (w_{i,j} t_{i,j}^*)}{\sum_j w_{i,j}}.
    \label{eq:t_star_weighted_supple}
\end{equation}
\end{proposition}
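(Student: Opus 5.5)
We parameterize along $\mathbf{l}_i$ with Eq.~\eqref{line_param_supple} and write each neighbor line in point--normal form. For each neighbor $\mathbf{l}_j$ we take a unit direction $\mathbf{v}_j$, a point $\mathbf{p}_j$ on it (for DCL the natural choice is its anchor $\mathbf{a}_1$ or $\mathbf{a}_2$), and the unit normal $\mathbf{n}_j=\mathbf{v}_j^{\perp}$. The squared point-to-line distance is then $d(\mathbf{l}_j,\hat{\mathbf{x}})^2=(\mathbf{n}_j^\top(\hat{\mathbf{x}}-\mathbf{p}_j))^2$. Substituting $\hat{\mathbf{x}}_i(t_i)=\mathbf{x}_i+t_i\mathbf{v}_i$, each term becomes $(a_j t_i+b_j)^2$ with
\[
a_j=\mathbf{n}_j^\top\mathbf{v}_i,\qquad b_j=\mathbf{n}_j^\top(\mathbf{x}_i-\mathbf{p}_j).
\]
Hence
\[
f(\hat{\mathbf{x}}_i(t_i))=\Big(\sum_j a_j^2\Big)t_i^2+2\Big(\sum_j a_jb_j\Big)t_i+\sum_j b_j^2,
\]
a quadratic in the scalar $t_i$ whose leading coefficient $\sum_j a_j^2$ is nonnegative.

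Next we use the inter-anchor assumption for strict convexity. In that scenario at least one neighbor $\mathbf{l}_j$ comes from the opposite anchor and is not parallel to $\mathbf{l}_i$ (the nonparallel hypothesis already used when defining $t^*_{i,j}$). For such a neighbor $a_j\neq 0$, so $\sum_j a_j^2>0$. The cost is therefore strictly convex with a unique minimizer
\[
t_i^*=-\frac{\sum_j a_jb_j}{\sum_j a_j^2}.
\]

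The remaining step is to identify the pieces, which is the main bookkeeping point. Since $\mathbf{v}_i,\mathbf{v}_j$ are unit and $\mathbf{n}_j$ is $\mathbf{v}_j$ rotated by $90^\circ$, we have $a_j^2=(\mathbf{n}_j^\top\mathbf{v}_i)^2=\|\mathbf{v}_i\times\mathbf{v}_j\|^2=\sin^2\theta_{i,j}$. This is the weight $w_{i,j}$ of Corollary~\ref{cor:weights}. The single-neighbor intersection parameter solves $a_jt+b_j=0$, giving $t^*_{i,j}=-b_j/a_j$ whenever $a_j\neq0$. Thus $-a_jb_j=a_j^2t^*_{i,j}=w_{i,j}t^*_{i,j}$, and substituting into the formula for $t_i^*$ gives Eq.~\eqref{t_star_weighted_supple}.

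One subtlety needs care: a neighbor exactly parallel to $\mathbf{l}_i$ has $a_j=0$, so its $t^*_{i,j}$ is undefined. Its term $b_j^2$ is constant in $t_i$ and drops out of the minimization. Consistently, its weight $w_{i,j}=0$, so it can be omitted from both sums, which gives the intended reading of the weighted average. No result beyond the setup of Eq.~\eqref{cost_function} is needed.
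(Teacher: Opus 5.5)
Your proposal is correct and follows essentially the same route as the paper. The paper writes each squared distance as $\|(\hat{\mathbf{x}}_i(t_i)-\mathbf{a}_2)\times\mathbf{v}_j\|^2$, which is exactly your $(\mathbf{n}_j^\top(\hat{\mathbf{x}}_i(t_i)-\mathbf{p}_j))^2$. It expands this as $A_{i,j}t_i^2+B_{i,j}t_i+C_{i,j}$, where $A_{i,j}=a_j^2$ and $B_{i,j}=2a_jb_j$, and then substitutes $B_{i,j}=-2A_{i,j}t^*_{i,j}$ into $t_i^*=-\sum_j B_{i,j}/(2\sum_j A_{i,j})$. Your explicit treatment of strict convexity (at least one nonparallel neighbor) and of exactly parallel neighbors (zero weight, undefined $t^*_{i,j}$) fills in points the paper leaves implicit.
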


\begin{proof}
The point-to-line distance $d(\mathbf{l}_j, \mathbf{\hat{x}}_i)$ is calculated using the Euclidean distance in 2D. For convenience in the derivation below, we adopt an abuse of notation where the squared distance $d(\mathbf{l}_j, \mathbf{\hat{x}}_i)^2$ is expressed using the squared norm of a 3D cross product. This is valid by treating all 2D vectors ($\mathbf{x} \in \mathbb{R}^2$, including $\mathbf{v}_i$, $\mathbf{v}_j$, and $\mathbf{a}_2$) as 3D vectors with a zero z-component (i.e., $\mathbf{x} \rightarrow (\mathbf{x}, 0)$). The direction vector $\mathbf{v}_j$ is also assumed to be normalized ($\|\mathbf{v}_j\|=1$).

We consider the contribution of a single neighbor line $\mathbf{l}_j$, which passes through an anchor $\mathbf{a}_2$ with direction $\mathbf{v}_j$. The squared point-to-line distance, $f_j(\mathbf{\hat x}_i(t_i)) = d(\mathbf{l}_j,\mathbf{\hat x}_i(t_i) )^2$, is calculated using the cross product:
\begin{align}
    f_j(\mathbf{\hat x}_i(t_i)) &= \| (\mathbf{\hat x}_i(t_i) - \mathbf{a}_2) \times \mathbf{v}_j \|^2 \nonumber \\
    \intertext{Substituting $\mathbf{\hat x}_i(t_i) = \mathbf{x}_i + t_i \mathbf{v}_i$:}
    f_j(\mathbf{\hat x}_i(t_i)) &= \| ( (\mathbf{x}_i - \mathbf{a}_2) + t_i \mathbf{v}_i ) \times \mathbf{v}_j \|^2 \nonumber \\
    \intertext{Using the distributive property of the cross product $\mathbf{v}_i \times \mathbf{v}_j$ and expanding the squared norm, we obtain the quadratic form $f_j(\mathbf{\hat x}_i(t_i)) = A_{i,j} t_i^2 + B_{i,j} t_i + C_{i,j}$, where:}
    A_{i,j} &= \| \mathbf{v}_i \times \mathbf{v}_j \|^2 \label{eq:A_ij_supple} \\
    B_{i,j} &= 2 \left( (\mathbf{x}_i - \mathbf{a}_2) \times \mathbf{v}_j \right) \cdot \left( \mathbf{v}_i \times \mathbf{v}_j \right) \\
    C_{i,j} &= \| (\mathbf{x}_i - \mathbf{a}_2) \times \mathbf{v}_j \|^2.
\end{align}
The total cost $f(\mathbf{\hat x}_i(t_i)) = \sum_j f_j(\mathbf{\hat x}_i(t_i)) = (\sum_j A_{i,j}) t_i^2 + (\sum_j B_{i,j}) t_i + (\sum_j C_{i,j})$ is also a quadratic and convex function. Since $A_{i,j} = \| \mathbf{v}_i \times \mathbf{v}_j \|^2 \geq 0$ for all $j$, the coefficient of the quadratic term, $\sum_j A_{i,j}$, is non-negative. 
The parameter $t_i^*$ that minimizes $f(\mathbf{\hat x}_i(t_i))$ satisfies $\frac{df}{dt_i}(t_i^*) = 0$:

\begin{equation}
    t_i^* = -\frac{\sum_j B_{i,j}}{2 \sum_j A_{i,j}}.
\end{equation}
Since the individual optimum $t^*_{i,j}$ minimizes $f_j(\mathbf{\hat x}_i(t_i))$, we have $t^*_{i,j} = -B_{i,j} / (2A_{i,j})$, which implies $B_{i,j} = -2 A_{i,j} t^*_{i,j}$. Substituting this relationship yields the weighted average form:
\begin{equation}
    t_i^* = \frac{\sum_j (A_{i,j} t^*_{i,j})}{\sum_j A_{i,j}}.
\end{equation}
By setting the weights $w_{i,j} = A_{i,j}$ (Eq.~\eqref{A_ij_supple}), we complete the proof of the proposition.
\end{proof}

\begin{corollary}[\textbf{Corollary 1.1 from Main Paper}]
The weight $w_{i,j}$, which is defined as $A_{i,j}$ in Proposition~\ref{prop:weighted_average_supp} (Eq.~\eqref{A_ij_supple}), for each offset contribution $t_{i,j}^*$ is determined solely by the angle $\theta_{i,j}$ between the target line $\mathbf{l}_i$ and its neighboring line $\mathbf{l}_j$. This relationship is established by applying the geometric definition of the cross product to the term $A_{i,j}$:
\begin{equation}
    w_{i,j} = A_{i,j} = \| \mathbf{v}_i \times \mathbf{v}_j \|^2 = \sin^2(\theta_{i,j}).
\end{equation}
\end{corollary}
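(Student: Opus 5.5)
We start from the quadratic decomposition already obtained in the proof of Proposition~\ref{prop:weighted_average_supp}. Each neighbor contributes $f_j(\hat{\mathbf{x}}_i(t_i)) = A_{i,j}t_i^2 + B_{i,j}t_i + C_{i,j}$, and from this we read off $A_{i,j} = \|\mathbf{v}_i\times\mathbf{v}_j\|^2$. The minimizer was shown to be $t_i^* = \sum_j A_{i,j}t^*_{i,j}/\sum_j A_{i,j}$, so we take $w_{i,j} := A_{i,j}$. The corollary then reduces to two checks: (i) $A_{i,j}$ equals $\sin^2\theta_{i,j}$, and (ii) it does not depend on $\mathbf{x}_i$, $\mathbf{a}_2$, or any other data besides the two directions.

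For (i), we use the embedding convention from the proof, in which 2D vectors are padded with a zero third coordinate. With $\mathbf{v}_i=(v_{i1},v_{i2},0)$ and $\mathbf{v}_j=(v_{j1},v_{j2},0)$, the cross product is $(0,0,v_{i1}v_{j2}-v_{i2}v_{j1})$. We then apply the geometric identity $\|\mathbf{u}\times\mathbf{w}\| = \|\mathbf{u}\|\|\mathbf{w}\|\sin\theta$, or equivalently Lagrange's identity
\begin{equation*}
\|\mathbf{u}\times\mathbf{w}\|^2=\|\mathbf{u}\|^2\|\mathbf{w}\|^2-(\mathbf{u}\cdot\mathbf{w})^2 .
\end{equation*}
Both direction vectors are assumed normalized, and $\mathbf{v}_i\cdot\mathbf{v}_j=\cos\theta_{i,j}$. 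Substituting gives $A_{i,j}=1-\cos^2\theta_{i,j}=\sin^2\theta_{i,j}$.

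For (ii), we note that $A_{i,j}$ comes only from the $t_i^2$ coefficient of $\|((\mathbf{x}_i-\mathbf{a}_2)+t_i\mathbf{v}_i)\times\mathbf{v}_j\|^2$. That coefficient involves only $\mathbf{v}_i\times\mathbf{v}_j$. All positional data, namely $\mathbf{x}_i$ and the anchor, enter only through $B_{i,j}$ and $C_{i,j}$, and hence only through $t^*_{i,j}$. This confirms that the weight depends \emph{only} on $\theta_{i,j}$.

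The one genuine subtlety is well-definedness. The relation $t^*_{i,j}=-B_{i,j}/(2A_{i,j})$ requires $A_{i,j}>0$, that is, non-parallel lines. We will state the non-parallel assumption already made in the main text. We will also remark that $\sin^2$ is invariant under the sign choice of the unnormalized direction ($\theta\mapsto\pi-\theta$), so the weight is an intrinsic property of the pair of lines. Finally, in the parallel limit $w_{i,j}\to 0$ while $B_{i,j}$ stays bounded, so the term $w_{i,j}t^*_{i,j}=-B_{i,j}/2$ remains finite. This fits the instability discussion in the main text.
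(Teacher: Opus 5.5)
Your proposal is correct and follows essentially the paper's route. Like the paper, you read off $A_{i,j}=\|\mathbf{v}_i\times\mathbf{v}_j\|^2$ as the $t_i^2$ coefficient of $f_j$ and apply $\|\mathbf{u}\times\mathbf{w}\|=\|\mathbf{u}\|\|\mathbf{w}\|\sin\theta$ (equivalently Lagrange's identity) with unit directions to get $w_{i,j}=\sin^2(\theta_{i,j})$. Your additional remarks are sound points the paper leaves implicit: independence of $A_{i,j}$ from $\mathbf{x}_i$ and $\mathbf{a}_2$, the requirement $A_{i,j}>0$ for $t^*_{i,j}$ to be defined, and invariance of $\sin^2$ under flipping a direction vector.
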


\section{Inversion Model Configuration and Setup}
\label{sec:inversion_impl}
We present inversion experiments for two representative descriptors: SIFT~\cite{lowe2004sift} and SuperPoint~\cite{detone2018superpoint}. We selectively applied an inversion model for each descriptor. 
For SIFT, we utilized InvSfM~\cite{pittaluga2019revealing}, a widely adopted baseline in Privacy-Preserving Visual Localization with publicly available pre-trained weights. Designed to reconstruct images from 3D SfM point clouds, InvSfM employs a three-stage cascaded architecture: VisibNet for visibility estimation, CoarseNet for initial reconstruction, and RefineNet for final enhancement. Although originally trained on projected 3D point clouds, the model is capable of performing inversion using 2D query feature maps extracted from single images.

For SuperPoint, since no compatible pre-trained weights exist, we trained a new model using the U-Net-based inversion architecture adopted in~\cite{ng2022ninjadesc,dangwal2021analysis}. Unlike InvSfM, which utilizes a complex cascaded structure often aided by auxiliary depth or RGB inputs, we employ a simplified 2D U-Net trained on SuperPoint keypoints and descriptors~\cite{detone2018superpoint}. This strictly adheres to the cloud-based localization scenario where raw images are not transmitted to the server.


\section{Extended Evaluation}
\label{sec:qual_results}
\paragraph{Robustness against SuperPoint inversion.}
We provide uncurated qualitative results for a stress-test using SuperPoint features with an expanded neighborhood (K=100), where DCL's robustness continues to hold (see Fig.~\ref{fig:SP_K100_privacy_qualitative}). Additionally, Fig.~\ref{fig:SP_zoom_out_qualitative_results} visualizes the inversion network's output. This image shows the result of feeding the recovered keypoint locations, which are captured from a ×2 zoomed-out, at the K=20 neighborhood setting, into the network. This zoomed-out view is necessary as most points of DCL are recovered far outside the original image boundary. This visualization provides clear empirical evidence of our method's robustness against the inversion attack.

\paragraph{Robustness against SIFT inversion.}
We also provide qualitative results to visually validate DCL's robustness against the geometry-recovery attack~\cite{chelani2024obfuscation}, conducted in the oracle setting. Fig.~\ref{fig:sift_privacy_qualitative} shows the inversion results for SIFT features (K=20), corresponding to the quantitative metrics (Table~\ref{tab:sift_privacy_quantitative}). These results confirm that while Random Lines~\cite{speciale2019queries} and Coordinate Permutation~\cite{pan2023permut} yield recognizable inversions, DCL demonstrates consistent robustness against the attack by rendering the output unrecognizable.

\renewcommand{\arraystretch}{1.2}

%





\begin{table}[t]
\centering
\scriptsize

\begin{tabular}{c|c|c|c|c|c}
\hline
        &         & Feature  & Random & Coordinate & DCL\\
Dataset & Metrics &   points & lines~\cite{speciale2019queries} & permut.~\cite{pan2023permut} & (ours)\\
\hline

\multirow{3}{*}{7-scenes} 
& PSNR(\textcolor{blue}{↓}) & 15.38 & 14.30 & 13.83 & \textbf{11.02} \\
& SSIM(\textcolor{blue}{↓}) & 0.604 & 0.531 & 0.509 & \textbf{0.481} \\
& LPIPS(\textcolor{darkgreen}{↑})& 0.540 & 0.595 & 0.614 & \textbf{0.720} \\
\hline

\multirow{3}{*}{Cambridge} 
& PSNR(\textcolor{blue}{↓}) & 18.59 & 17.00 & 16.29 & \textbf{9.765} \\
& SSIM(\textcolor{blue}{↓}) & 0.674 & 0.556 & 0.505 & \textbf{0.308} \\
& LPIPS(\textcolor{darkgreen}{↑})& 0.445 & 0.512 & 0.549 & \textbf{0.801} \\
\hline

\multirow{3}{*}{Aachen} 
& PSNR(\textcolor{blue}{↓}) & 18.43 & 17.16 & 16.44 & \textbf{9.73} \\
& SSIM(\textcolor{blue}{↓}) & 0.602 & 0.488 & 0.425 & \textbf{0.248}\\
& LPIPS(\textcolor{darkgreen}{↑})& 0.406 & 0.470 & 0.520 & \textbf{0.806} \\
\hline

\end{tabular}
\vspace{-2mm}
\caption[Result-inversion-attacks]
{Mean quality metrics for images reconstructed from SIFT~\cite{lowe2004sift} features using the InvSfM~\cite{pittaluga2019revealing} network.}
\label{tab:sift_privacy_quantitative}
\vspace{-5mm}
\end{table}

\section{Privacy Preserving Localization Framework}
\label{sec:loc_details}
To mitigate potential privacy risks, DCL adopts a privacy-preserving(P.P.) architecture where the raw query image is never transmitted to the server. Consequently, the extraction of all visual features is performed on the client-side. The client computes the image retrieval features, keypoint descriptors, and DCL features, and transmits only these compact representations to the server.

Upon receiving the features, the server executes the standard localization pipeline following HLoc~\cite{sarlin2019coarse}. Specifically, the server performs image retrieval against the database using the received global features, followed by feature matching and 6-DoF pose estimation.

Importantly, we employ this hierarchical pipeline, which includes an image retrieval step, primarily to ensure a fair comparison with the non-privacy-preserving baseline~\cite{sarlin2019coarse}. Furthermore, our proposed geometry obfuscation method is agnostic to the localization pipeline. It can be readily applied to architectures that omit the image retrieval stage, such as direct 2D-3D matching against a global map.

\section{Discussions on Privacy Preserving Paradigms}
\label{sec:discussion}
Privacy in image-query obfuscation is not binary, but depends on the adversary model and the amount of recoverable information. 

\paragraph{Learning-based:} Recently, learning-based localization methods (\eg Scene Coordinate Regression (SCR)~\cite{brachmann2023ace, GLACE2024CVPR, brachmann2017dsac, brachmann2021dsacstar}), which directly map image pixels to 3D world coordinates,  have achieved impressive localization performance.
While this implicit map representation could offer privacy in a client-based (map distribution) setting, it poses a high risk for cloud-based visual localization. 
This approach typically requires uploading the raw query image (or query features) for inference\begingroup\renewcommand\thefootnote{\fnsymbol{footnote}}\footnotemark[2]\endgroup, 
making it unsuitable as a Privacy-Preserving Image Query (PPIQ) solution.

\paragraph{Descriptor-free:} A common limitation of descriptor-free approaches is their inability to conceal keypoint positions, leaving the 2D layout vulnerable to image content inference.
As shown by the original InvSfM work (cf. Fig.6 in~\cite{pittaluga2019revealing}), a fair-well image can be reconstructed from keypoint locations and color information even without descriptors, which mainly add textural details.

\paragraph{Segmentation-based obfuscation:} A recent diffusion-based attack~\cite{anonymous2025vulnerability} shows that segmentation-based methods are not immune to leakage. However, they still provide a higher degree of privacy than geometric obfuscation methods~\cite{speciale2019queries,pan2023permut} by more effectively obscuring fine details.

\paragraph{Prior geometry-based obfuscation:} Prior geometry-based schemes~\cite{speciale2019queries,pan2023permut} may still retain a degree of privacy depending on the recovery conditions, although the evaluation in our main paper focuses on the worst-case upper-bound setting of~\cite{chelani2024obfuscation}.

\begingroup
\renewcommand\thefootnote{\fnsymbol{footnote}}
\footnotetext[2]{A notable example is Niantic's Lightship VPS, a cloud-based service that sends raw camera frames to the cloud~\cite{niantic_vps_docs}. It uses ACE~\cite{brachmann2023ace}, a production-scale SCR relocalizer, in its operations~\cite{niantic_ace_blog}.}
\endgroup    

\begin{table}[t!]
\centering
\renewcommand{\arraystretch}{1.2}
\setlength{\tabcolsep}{1pt}
\scriptsize
\begin{tabular}{c|ccc|ccc}
\toprule
Distance between anchors & \multicolumn{3}{c|}{\textbf{Aachen Day}} 
& \multicolumn{3}{c}{\textbf{Aachen Night}} \\
\cmidrule(lr){2-4} \cmidrule(lr){5-7}
& 0.25m,2° & 0.5m,5° & 5m,10° 
  & 0.25m,2° & 0.5m,5° & 5m,10° \\
\midrule

$H$  & \textbf{41.0}&\textbf{57.9}&\textbf{72.7}&13.3&\textbf{21.4}&\textbf{38.8} \\
$2H$ & 34.1&53.2&71.5&\textbf{14.3}&20.4&37.8 \\
$3H$ & 26.8&49.5&70.3&12.2&24.5&37.8 \\
\bottomrule
\end{tabular}
\caption{Ablation study of anchor point distance on localization accuracy on the Aachen Day \& Night dataset. 
We report the recall at thresholds 0.25m/2°, 0.5m/5°, and 5m/10° (\%). $H$ denotes the height of the image query.}
\label{tab:aachen_supp}
\end{table}

\section{Implementation Details}
\label{sec:impl_details}
\paragraph{Dataset and scene selection.}
Consistent with prior works~\cite{zhou2022geometry, pietrantoni2025gaussian}, our evaluation excludes the Street and Great Court scenes from the Cambridge dataset~\cite{kendall2015posenet} because of their excessive geometric outliers.

\paragraph{Pose estimation parameters.}
For the RANSAC-based pose estimation, we configure the inlier reprojection error thresholds on a per-dataset basis. The 2D keypoint thresholds ($\epsilon_{pt}$) are set to 4$px$ for 7Scenes, 18$px$ for the Cambridge dataset, and 8$px$ for Aachen. Consequently, the inlier thresholds for 2D lines ($\epsilon_{line}$) are defined as their corresponding point-based values divided by $\sqrt{2}$ (i.e., $\epsilon_{pt}/\sqrt{2}$), resulting in $4/\sqrt{2}$$px$, $18/\sqrt{2}$$px$, and $8/\sqrt{2}$$px$, respectively.


\paragraph{Baseline framework and fair comparison.}To ensure a rigorous and fair evaluation, we re-implemented the baseline methods, including HLoc~\cite{sarlin2019coarse} and Random Lines~\cite{speciale2019queries}, within our unified testing framework using PoseLib~\cite{PoseLib}.
Applying the same Lo-RANSAC loop across all methods, including DCL and the baselines~\cite{sarlin2019coarse,speciale2019queries}, guarantees an evaluation under identical conditions, enabling a direct and equitable comparison of accuracy and recall performance.

Consistent with prior works, all experiments reported on 7Scenes, Cambridge Landmarks, and Aachen utilize Structure-from-Motion (SfM) based pseudo ground truth (pGT) poses~\cite{brachmann2021limits} to ensure fair comparison across methods.

\paragraph{Exclusion of baselines.}
The pose estimation results for the coordinate permutation~\cite{pan2023permut} are omitted from our comparison. The official implementation for this method is not publicly available, and our own efforts to replicate the results reported in their paper were unsuccessful.

\section{Additional Experiments and Analysis}
\label{sec:add_results}
\paragraph{Partition boundary analysis.}
Note that while we use a vertical partition for simplicity, the scheme is fundamentally applicable to any dividing line (e.g., horizontal). 
We conducted an empirical analysis by setting the boundary as a horizontal midline that crosses the center of the image.
The experiment revealed a significant increase in degenerate cases, which grew from 4 to 39 in the 7Scenes dataset. 
This increased degeneracy stems from the inherent nature of indoor environments, which frequently possess textureless ceilings and floors. A horizontal boundary risks isolating these sparse areas into a single region, leading to geometric instability. Conversely, our vertical midline intersects the feature-rich central band, ensuring a more balanced keypoint distribution.


\paragraph{Effect of anchor length on Aachen dataset.}
Due to space constraints in the main manuscript, the ablation study on the impact of anchor length for the Aachen Day-Night dataset is presented here. Consistent with the observations on the 7Scenes and Cambridge Landmarks datasets (in the main paper), the Aachen dataset achieves the best localization performance when the anchor length is set to the full image height ($H$) as shown in Table.~\ref{tab:aachen_supp}.

\begin{figure}[t]
\tiny
\centering 
\subfloat{\includegraphics[width=1.01\linewidth]{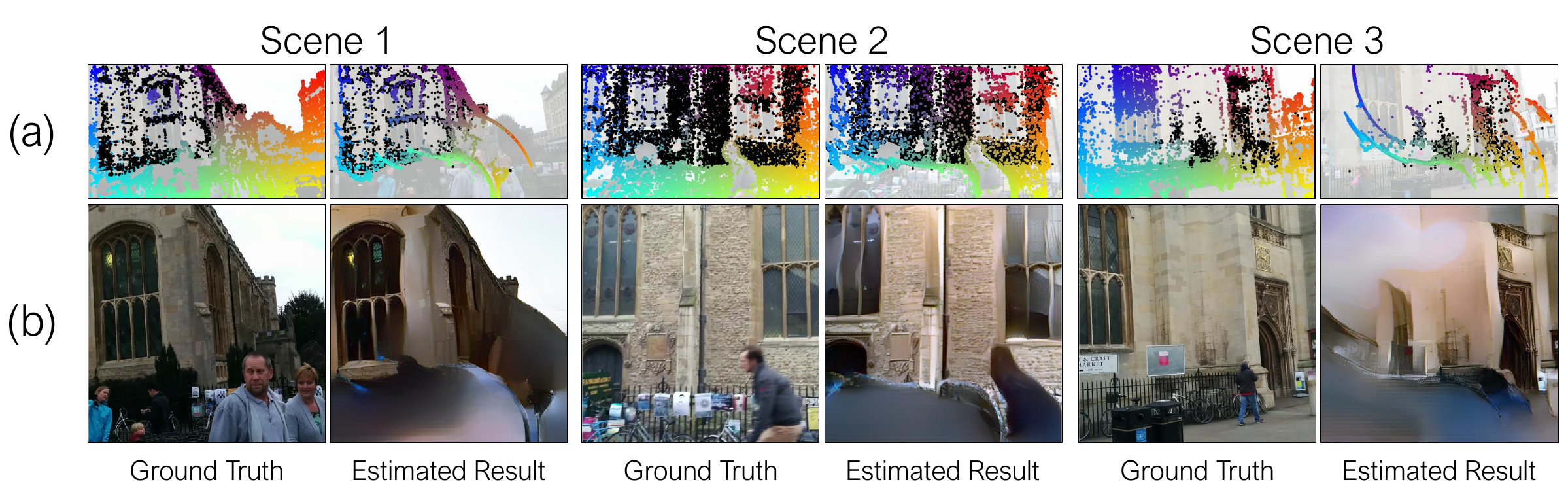}}

\vspace{-3mm}
\caption{Visualization of the iterative server-side attack on three different scenes.
In (a), black points indicate inlier keypoints from the server map, while colored points indicate ground truth (user's) query keypoints (left) or their estimated locations after running the iterative attack on DCL (right). For each scene, same colored points in (left) and (right) indicate the same keypoint.
In (b),  the original query image (left) and the image revealed from the estimated points (right) are shown for each scene. 
}
\label{fig:iterative_server_side}
\vspace{-3mm}
\end{figure}

\section{Details on Server-Side Attacks}
\label{sec:supp_server_attack}

\paragraph{Baseline server-side attack.}
As introduced in the main paper, we consider a challenging server-side adversary~\cite{speciale2019queries} that leverages 2D-3D correspondences.
To simulate the server-side attack, we first performed the geometry-recovery attack~\cite{chelani2024obfuscation}, setting K=20 nearest neighbors for each outlier. 
Note, we assumed perfect identification of these neighbors. 
Then, if the neighborhood contains projected inlier points, we used a point-to-point distance instead of point-to-line distance in Eq.~(3) in the main paper~\cite{authors26main}.
If the inilers are not in the neighborhood, the server can still proceed with recovery using only the obfuscated line neighbors as in Sect.~4.2 of the main text~\cite{authors26main}.

\paragraph{Iterative server-side attack and its practical limitations.}
To evaluate robustness against stronger adversaries, we implemented an iterative server-side attack. 
In each step, we estimate the 2D points from the 2D lines which are anticipated to comprise the biggest number of neighboring (server-side) inliers, and add them to the inlier set for the next iteration.

Since nearby obfuscated 2D lines exhibit similar directions, their estimated keypoints are usually found at similar locations provided they share a large overlap of server-side inliers.
We observe this effect as well as accumulation of errors lead to a \textit{spiral-like} drift as shown in  Fig.~\ref{fig:iterative_server_side}, preventing accurate recovery.
Quantitatively, the mean point estimation errors for the iterative attack in Scenes 1, 2, and 3 are 163, 55, and 122 pixels, respectively.
This demonstrates the structural advantage of DCL rather than an outlier artifact of the recovery algorithm~\cite{chelani2024obfuscation}. 

Furthermore, it is important to note that this attack assumes 100\% accurate neighborhood estimation between server-side inliers and \textit{outliers} defined as keypoints from dynamic, privacy-sensitive objects.
Since \cite{chelani2024obfuscation} relies on local pattern matching, forming accurate inlier-outlier neighborhoods can be inherently challenging due to several factors, including i) inliers being potentially geometrically far from outliers, ii) the transient nature of privacy-sensitive objects (\eg motion or occlusions) changing local patterns and the matching result, and 
iii) variations in SfM hyperparameters that significantly affect inlier keypoint distribution, potentially leading to poor generalization of attack models similar to~\cite{chelani2024obfuscation}.




\begin{figure*}[t]
\centering

    \setcounter{subfigure}{0}
    \subfloat{
        \includegraphics[width=0.19\linewidth]{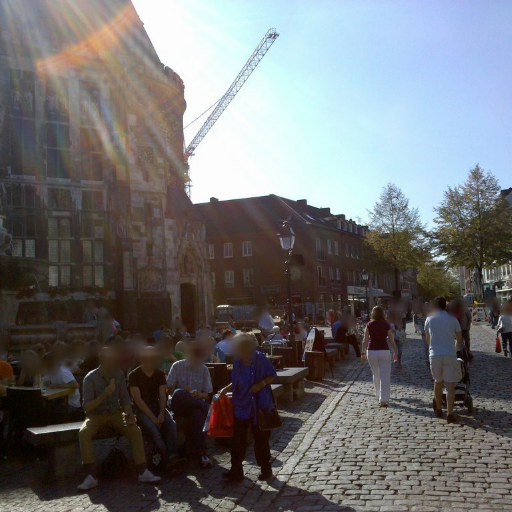}
    }
    \subfloat{
        \includegraphics[width=0.19\linewidth]{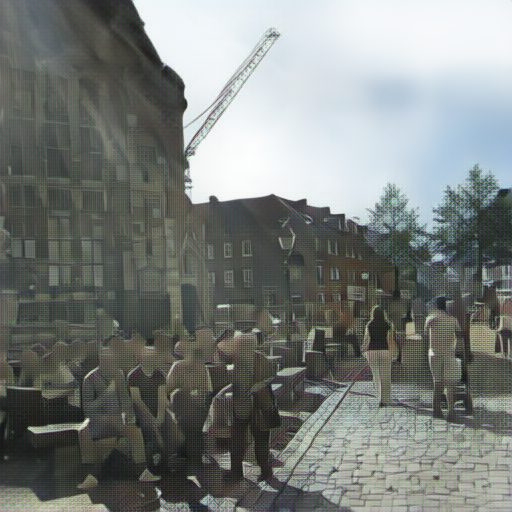}
    }
    \subfloat{
        \includegraphics[width=0.19\linewidth]{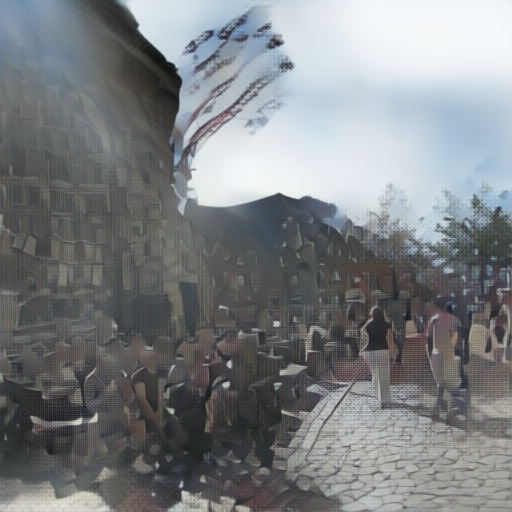}
    }
    \subfloat{
        \includegraphics[width=0.19\linewidth]{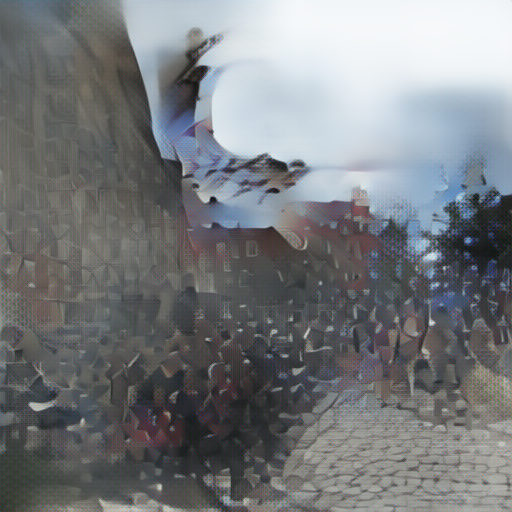}
    }
    \subfloat{
        \includegraphics[width=0.19\linewidth]{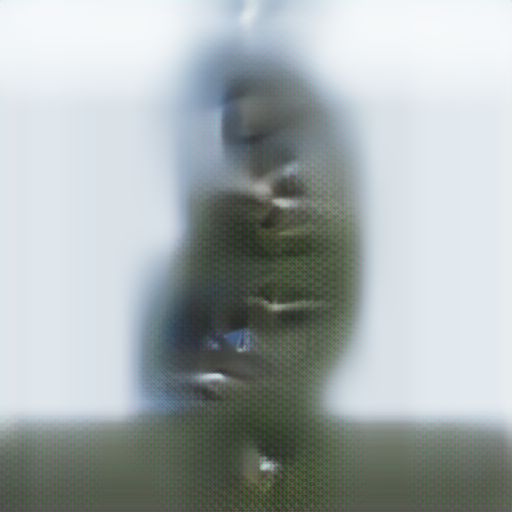}
    }
    \\
    \vspace{1mm}
    
    \setcounter{subfigure}{0}
    \subfloat{
        \includegraphics[width=0.19\linewidth]{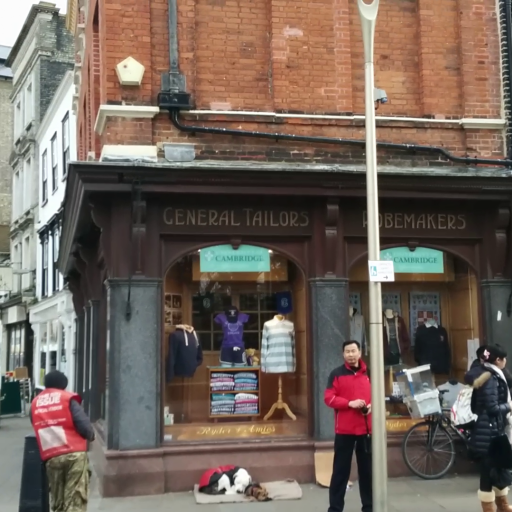}
    }
    \subfloat{
        \includegraphics[width=0.19\linewidth]{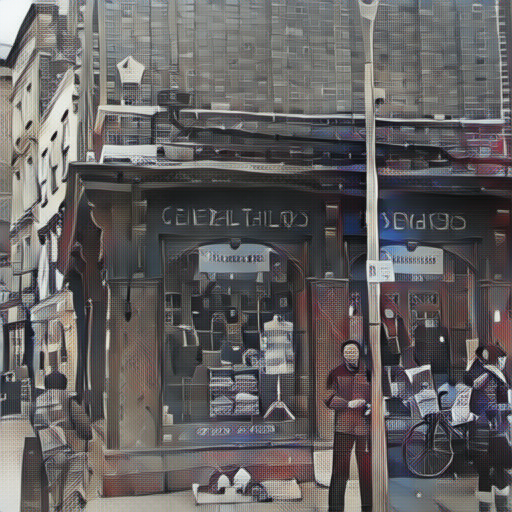}
    }
    \subfloat{
        \includegraphics[width=0.19\linewidth]{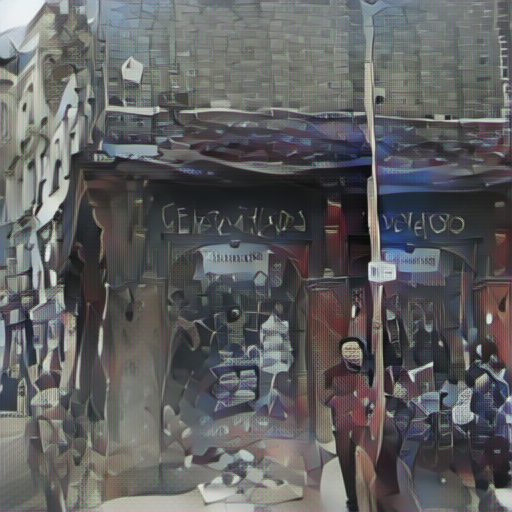}
    }
    \subfloat{
        \includegraphics[width=0.19\linewidth]{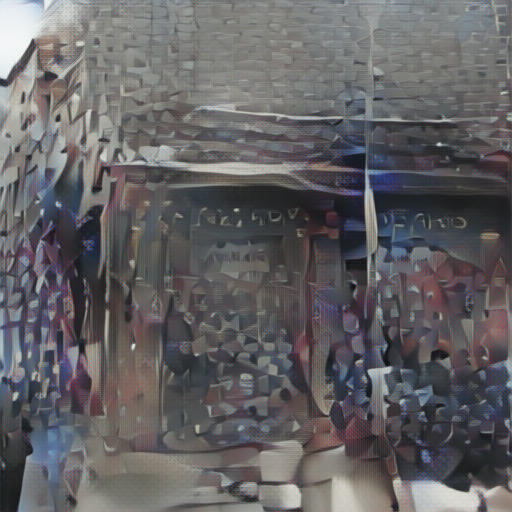}
    }
    \subfloat{
        \includegraphics[width=0.19\linewidth]{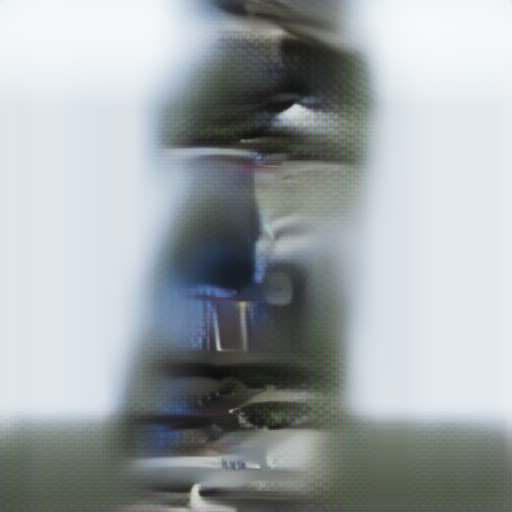}
    }
    \\
    \vspace{1mm}
    
    \setcounter{subfigure}{0}
    \subfloat{
        \includegraphics[width=0.19\linewidth]{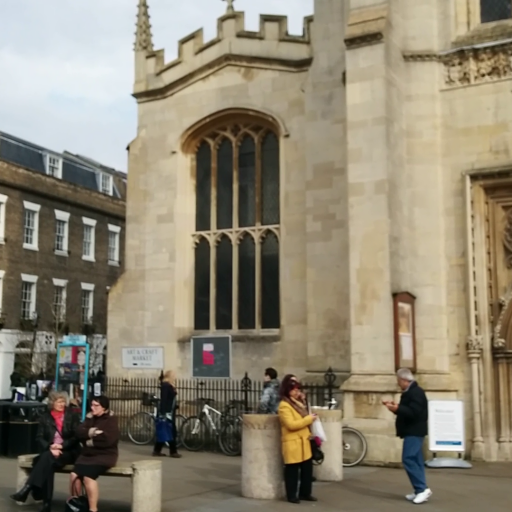}
    }
    \subfloat{
        \includegraphics[width=0.19\linewidth]{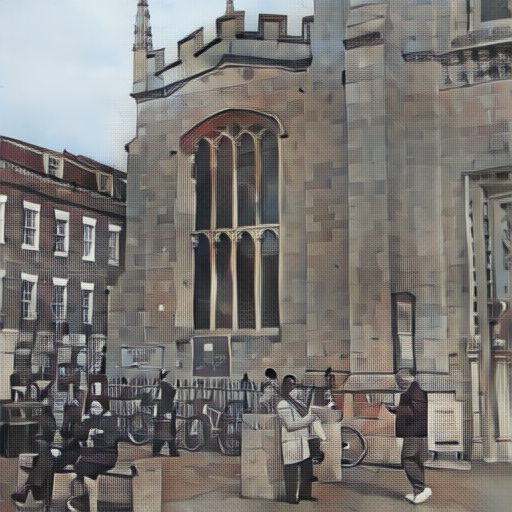}
    }
    \subfloat{
        \includegraphics[width=0.19\linewidth]{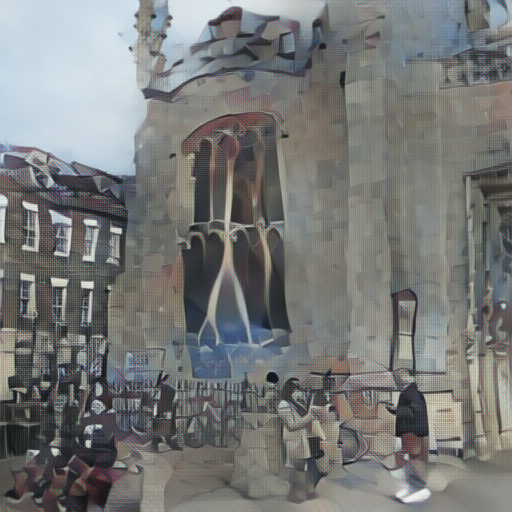}
    }
    \subfloat{
        \includegraphics[width=0.19\linewidth]{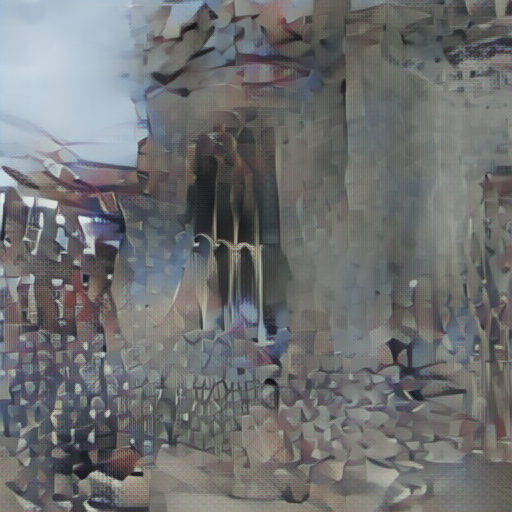}
    }
    \subfloat{
        \includegraphics[width=0.19\linewidth]{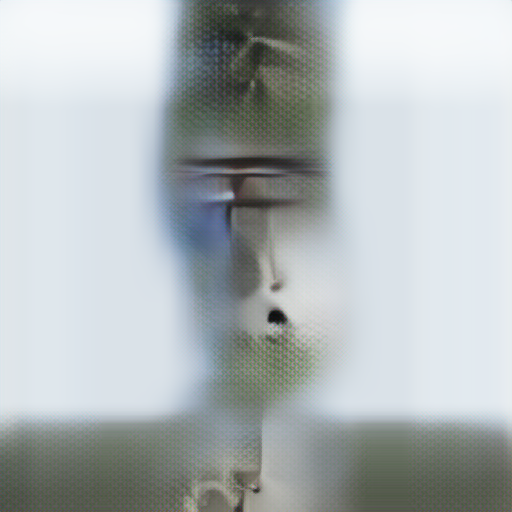}
    }
    \\
    \vspace{1mm}
    
    \setcounter{subfigure}{0}
    \subfloat{
        \includegraphics[width=0.19\linewidth]{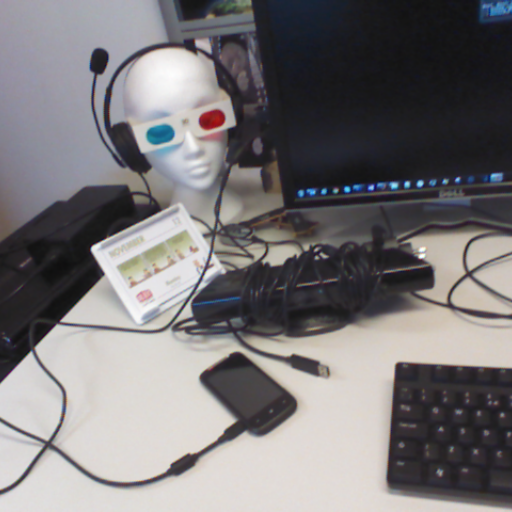}
    }
    \subfloat{
        \includegraphics[width=0.19\linewidth]{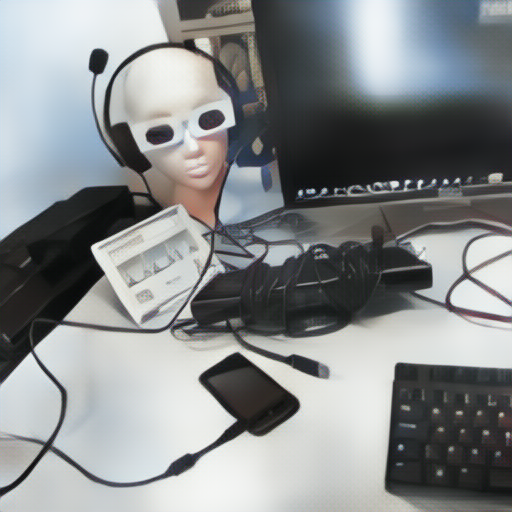}
    }
    \subfloat{
        \includegraphics[width=0.19\linewidth]{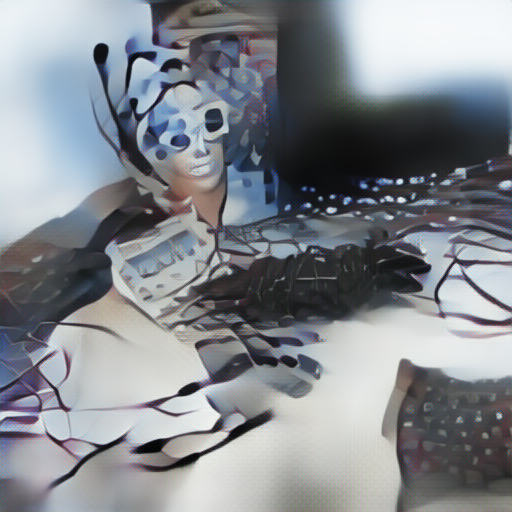}
    }
    \subfloat{
        \includegraphics[width=0.19\linewidth]{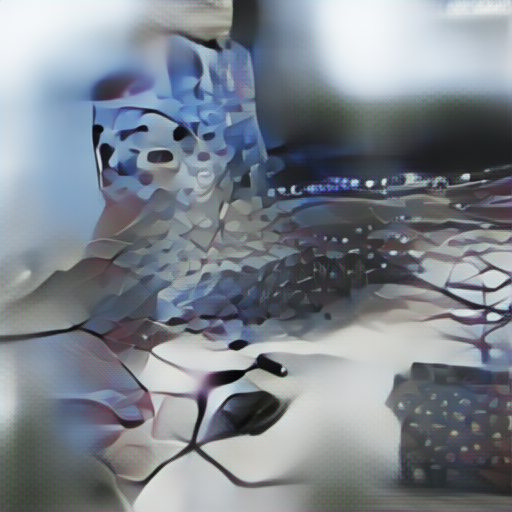}
    }
    \subfloat{
        \includegraphics[width=0.19\linewidth]{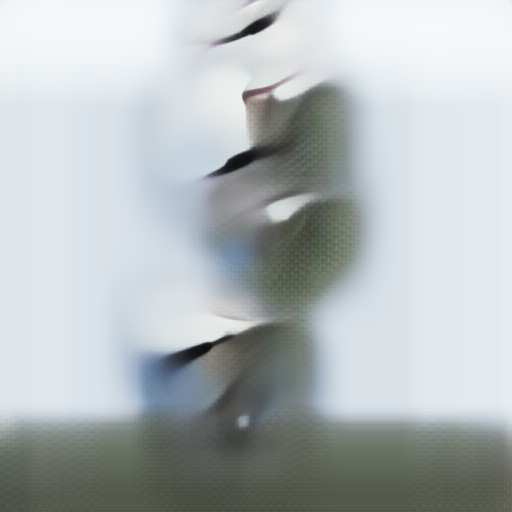}
    }
    \\
    \vspace{1mm}

    \setcounter{subfigure}{0}
    \subfloat[][Original Image]{
        \includegraphics[width=0.19\linewidth]{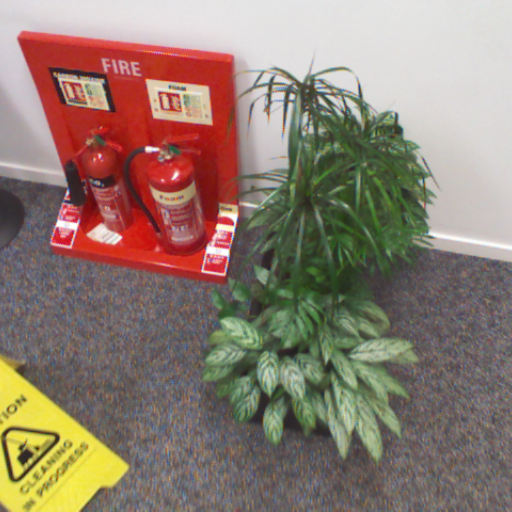}
    }
    \subfloat[][Feature Points~\cite{lowe2004sift}]{
        \includegraphics[width=0.19\linewidth]{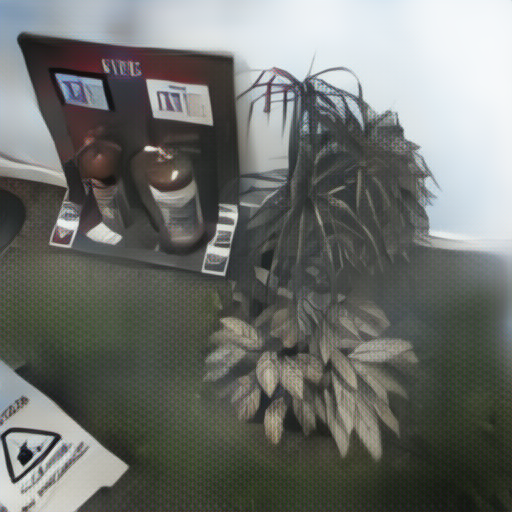}
    }
    \subfloat[][Random Lines~\cite{speciale2019queries}\figlabel{SP_k100_ulc_inv2d}]{
        \includegraphics[width=0.19\linewidth]{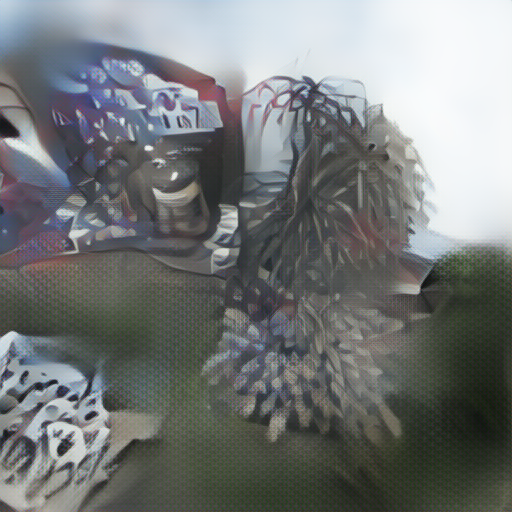}
    }
    \subfloat[][Coord. Perm.~\cite{pan2023permut}\figlabel{SP_k100_cp_inv2d}]{
        \includegraphics[width=0.19\linewidth]{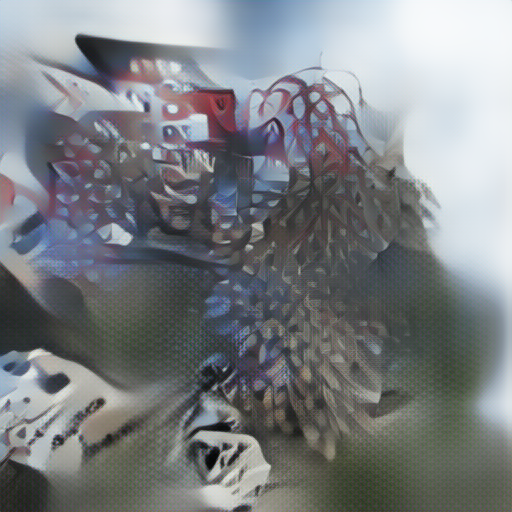}
    }
    \subfloat[][\textbf{DCL (ours)}\figlabel{SP_k100_dcl_inv2d}]{
        \includegraphics[width=0.19\linewidth]{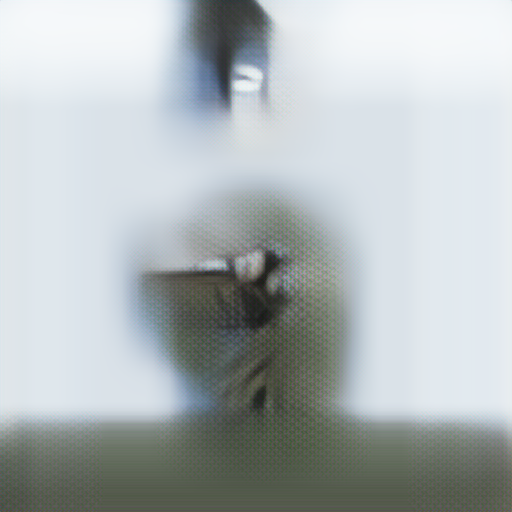}
    }

    \vspace{-2mm}
    \caption{Qualitative inversion results using \textbf{SuperPoint} features against the geometry-recovery attack~\cite{chelani2024obfuscation} in an \textbf{expanded neighborhood} ($K=100$) setting. The reconstructed images are organized by dataset: Row 1 (Aachen), Rows 2--3 (Cambridge), and Rows 4--5 (7Scenes). Feature positions are recovered via various methods: (a) Feature Points, (b) Random Lines~\cite{speciale2019queries}, (c) Coordinate Permutation~\cite{pan2023permut}, and (d) Dual Convergent Lines (ours).}

    \figlabel{SP_k100_feat_inv}
    \label{fig:SP_K100_privacy_qualitative}
\end{figure*}


\begin{figure*}[t]
\centering

    \setcounter{subfigure}{0}
    \subfloat{
        \includegraphics[width=0.19\linewidth]{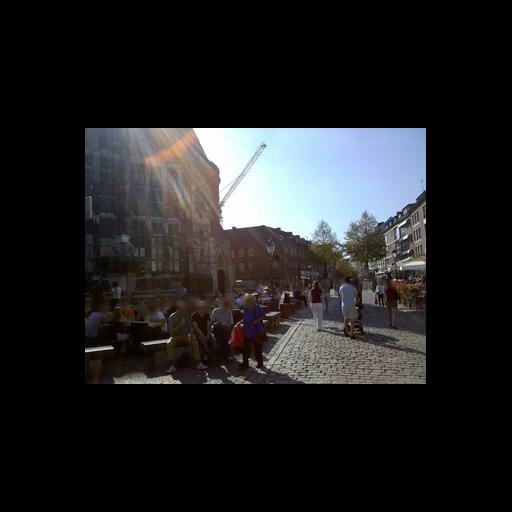}
    }
    \subfloat{
        \includegraphics[width=0.19\linewidth]{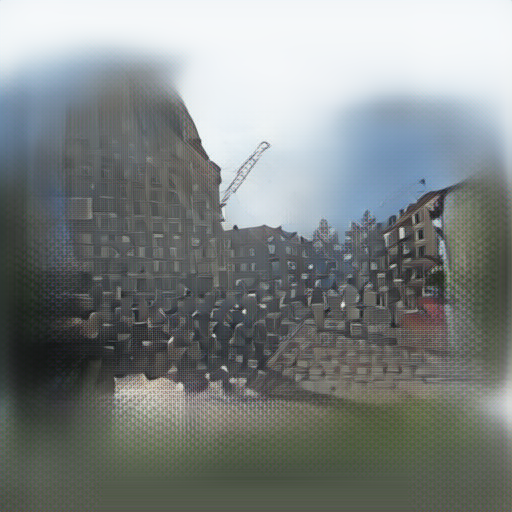}
    }
    \subfloat{
        \includegraphics[width=0.19\linewidth]{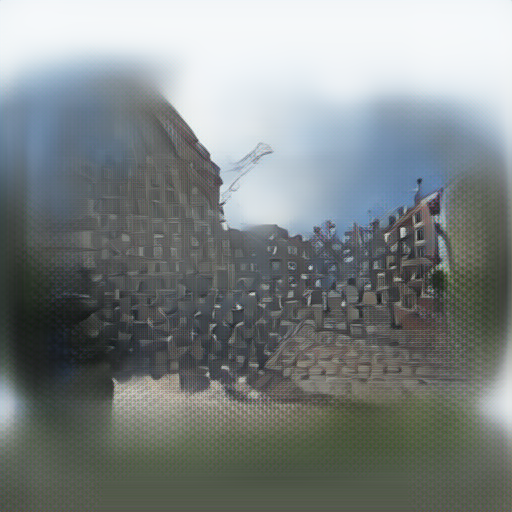}
    }
    \subfloat{
        \includegraphics[width=0.19\linewidth]{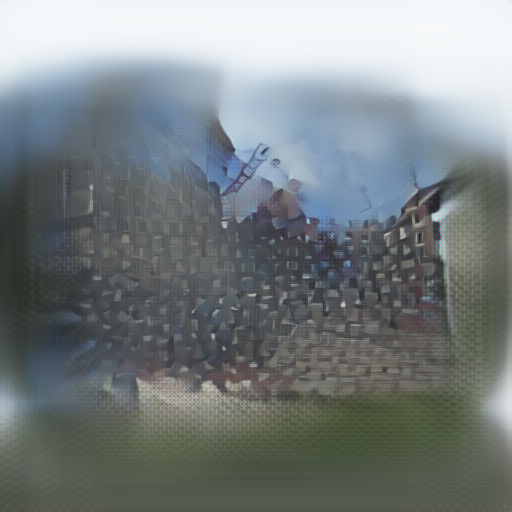}
    }
    \subfloat{
        \includegraphics[width=0.19\linewidth]{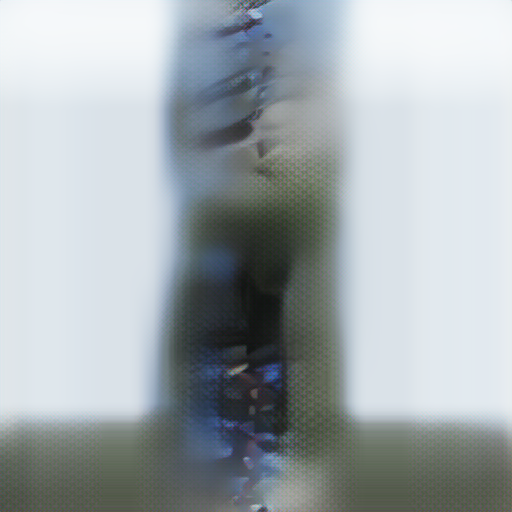}
    }
    \\
    \vspace{1mm}
    
    \setcounter{subfigure}{0}
    \subfloat{
        \includegraphics[width=0.19\linewidth]{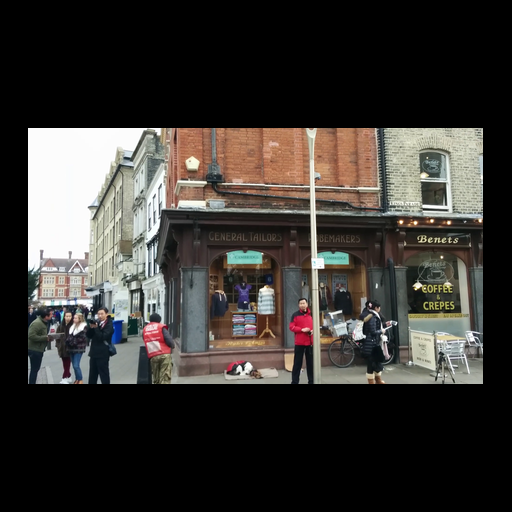}
    }
    \subfloat{
        \includegraphics[width=0.19\linewidth]{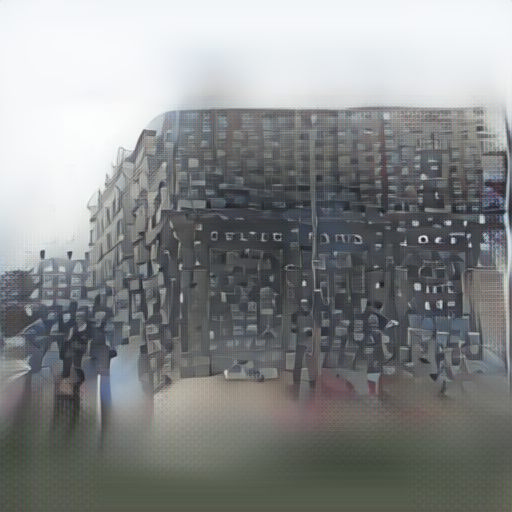}
    }
    \subfloat{
        \includegraphics[width=0.19\linewidth]{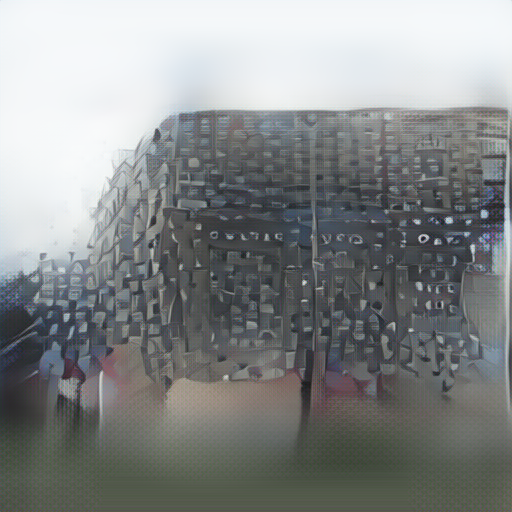}
    }
    \subfloat{
        \includegraphics[width=0.19\linewidth]{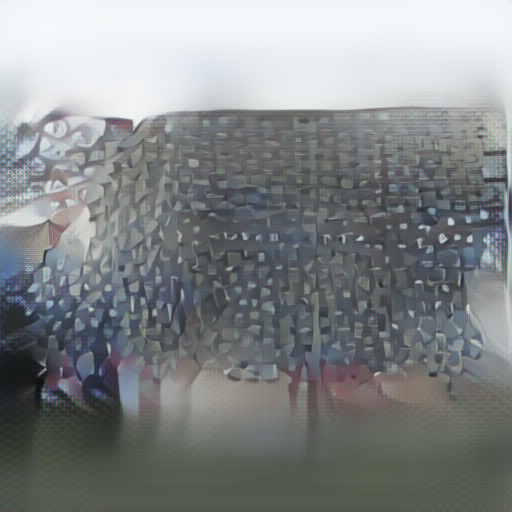}
    }
    \subfloat{
        \includegraphics[width=0.19\linewidth]{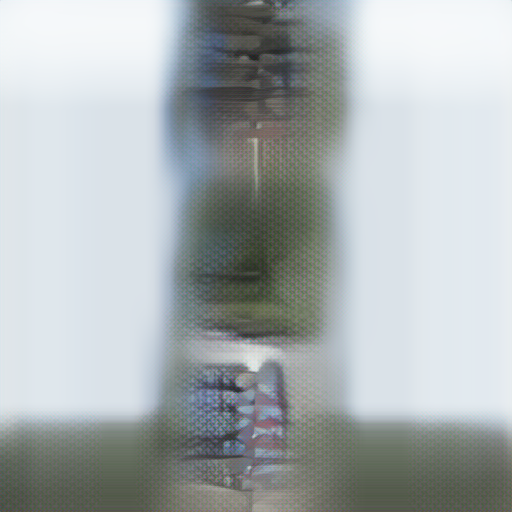}
    }
    \\
    \vspace{1mm}
    
    \setcounter{subfigure}{0}
    \subfloat{
        \includegraphics[width=0.19\linewidth]{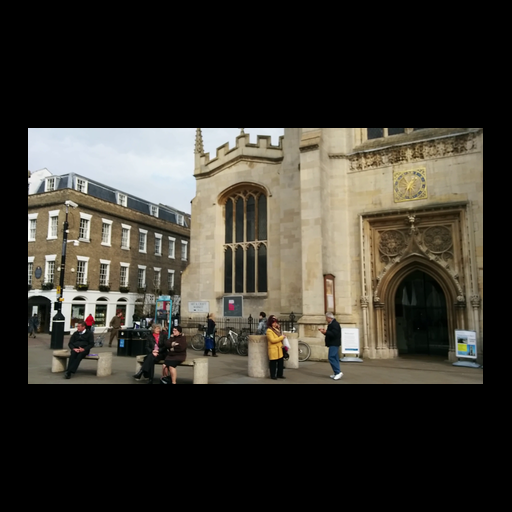}
    }
    \subfloat{
        \includegraphics[width=0.19\linewidth]{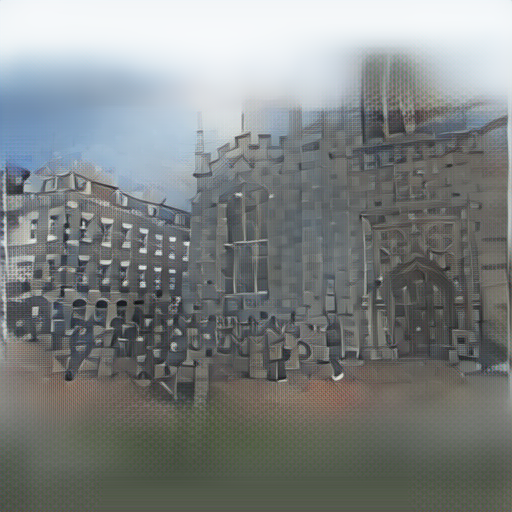}
    }
    \subfloat{
        \includegraphics[width=0.19\linewidth]{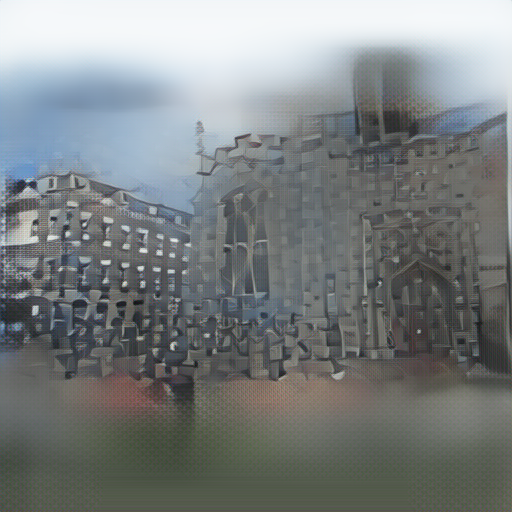}
    }
    \subfloat{
        \includegraphics[width=0.19\linewidth]{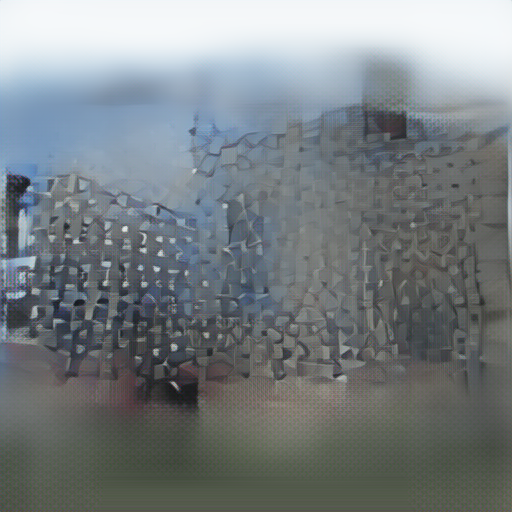}
    }
    \subfloat{
        \includegraphics[width=0.19\linewidth]{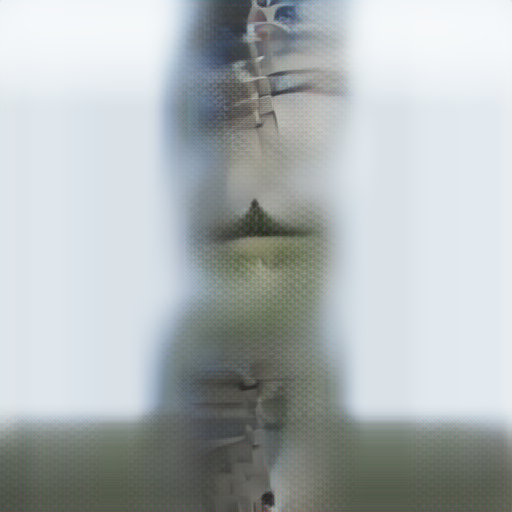}
    }
    \\
    \vspace{1mm}
    
    \setcounter{subfigure}{0}
    \subfloat{
        \includegraphics[width=0.19\linewidth]{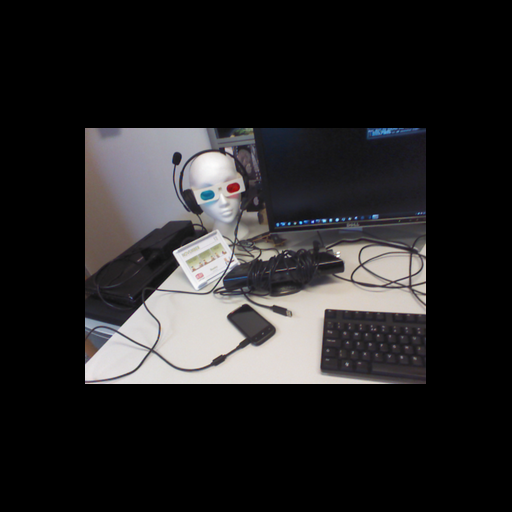}
    }
    \subfloat{
        \includegraphics[width=0.19\linewidth]{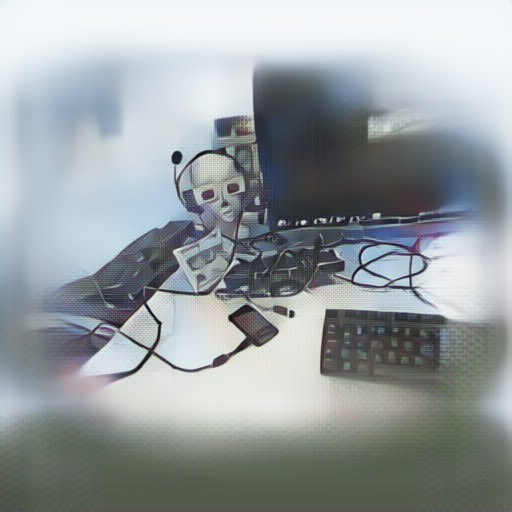}
    }
    \subfloat{
        \includegraphics[width=0.19\linewidth]{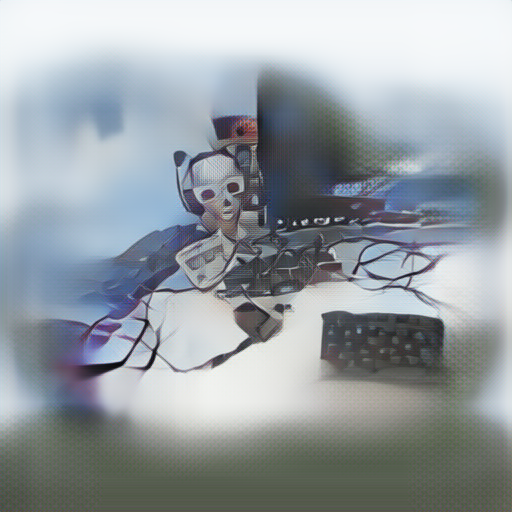}
    }
    \subfloat{
        \includegraphics[width=0.19\linewidth]{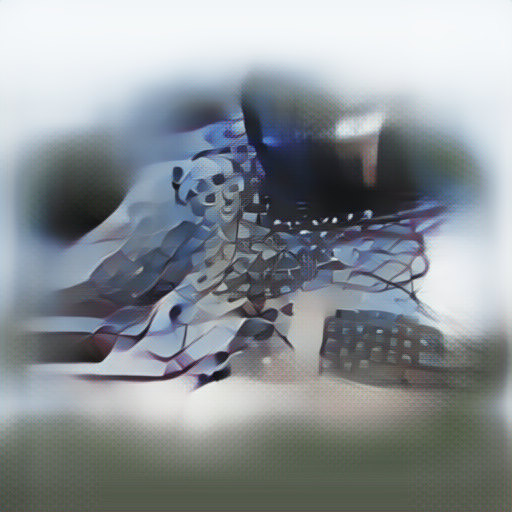}
    }
    \subfloat{
        \includegraphics[width=0.19\linewidth]{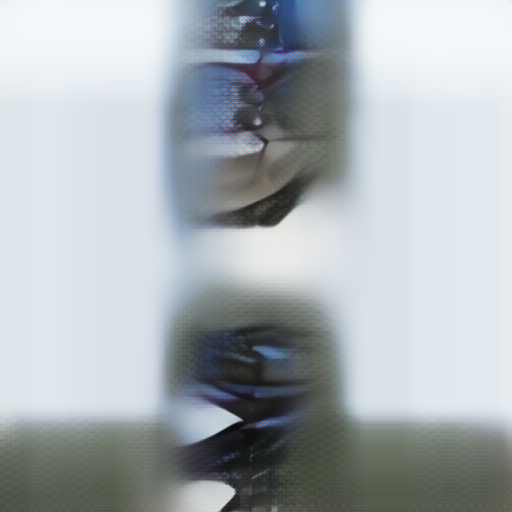}
    }
    \\
    \vspace{1mm}

    \setcounter{subfigure}{0}
    \subfloat[][Original Image]{
        \includegraphics[width=0.19\linewidth]{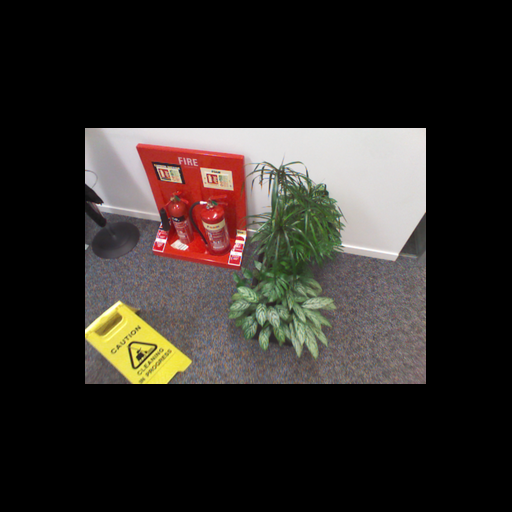}
    }
    \subfloat[][Feature Points~\cite{lowe2004sift}]{
        \includegraphics[width=0.19\linewidth]{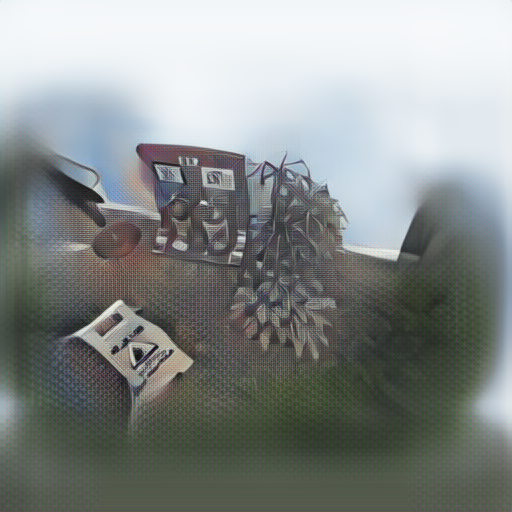}
    }
    \subfloat[][Random Lines~\cite{speciale2019queries}\figlabel{SP_zoomout_ulc_inv2d}]{
        \includegraphics[width=0.19\linewidth]{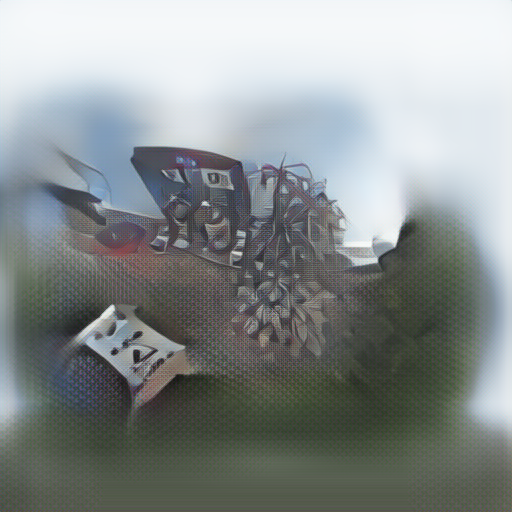}
    }
    \subfloat[][Coord. Perm.~\cite{pan2023permut}\figlabel{SP_zoomout_cp_inv2d}]{
        \includegraphics[width=0.19\linewidth]{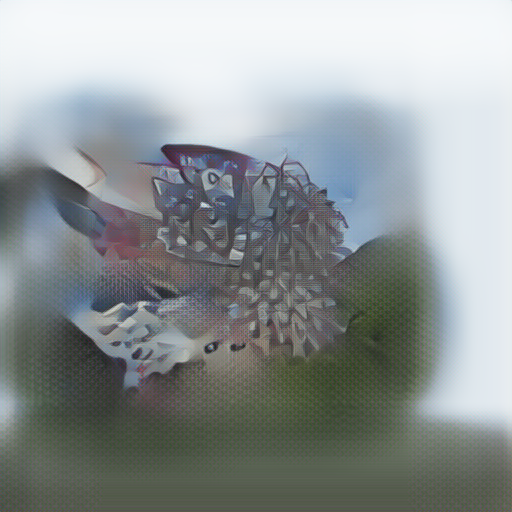}
    }
    \subfloat[][\textbf{DCL (ours)}\figlabel{SP_zoomout_dcl_inv2d}]{
        \includegraphics[width=0.19\linewidth]{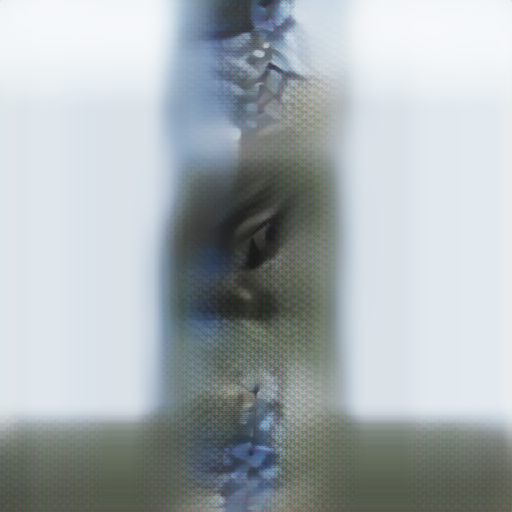}
    }

    \vspace{-2mm}
    \caption{Visualization of the inversion network's output using \textbf{SuperPoint} features after the geometry-recovery attack~\cite{chelani2024obfuscation}. The results are shown at the K=20 neighborhood setting in a \textbf{$\times 2$ zoomed-out view} because most recovered points fall outside the original image boundary. The reconstructed images are organized by dataset: Row 1 (Aachen), Rows 2--3 (Cambridge), and Rows 4--5 (7Scenes). This visualization provides clear empirical evidence of the effect of recovered keypoint displacement.}

    \figlabel{SP_zoomout_feat_inv}
    \label{fig:SP_zoom_out_qualitative_results}
\end{figure*}


\begin{figure*}[t]
\centering

    \setcounter{subfigure}{0}
    \subfloat{
        \includegraphics[width=0.19\linewidth]{figs/supple/inv_src/day_milestone_2011-10-01_14-01-18_320.png}
    }
    \subfloat{
        \includegraphics[width=0.19\linewidth]{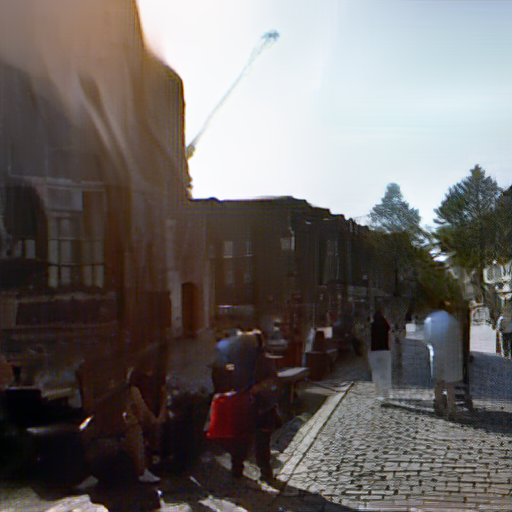}
    }
    \subfloat{
        \includegraphics[width=0.19\linewidth]{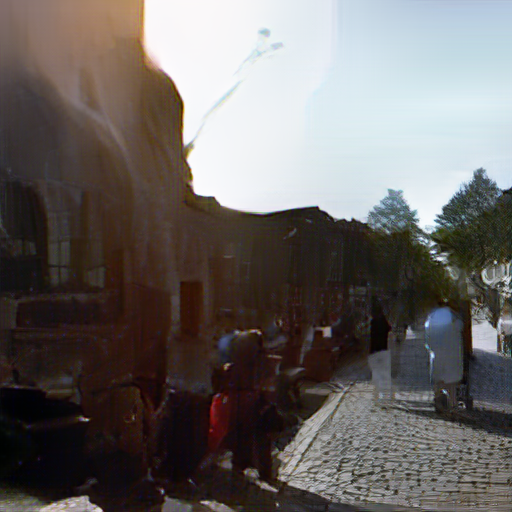}
    }
    \subfloat{
        \includegraphics[width=0.19\linewidth]{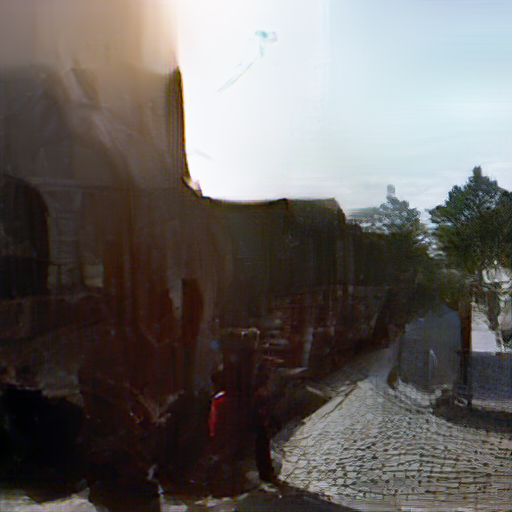}
    }
    \subfloat{
        \includegraphics[width=0.19\linewidth]{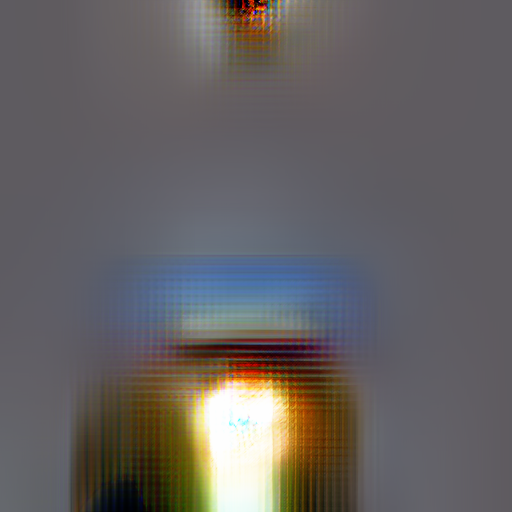}
    }
    \\
    \vspace{1mm}
    
    \setcounter{subfigure}{0}
    \subfloat{
        \includegraphics[width=0.19\linewidth]{figs/supple/inv_src/seq1_frame00005.png}
    }
    \subfloat{
        \includegraphics[width=0.19\linewidth]{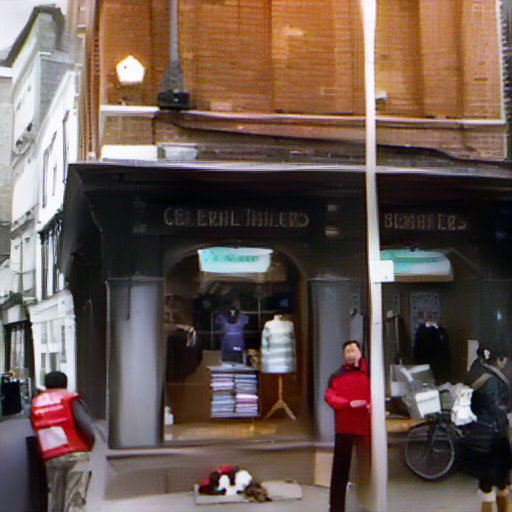}
    }
    \subfloat{
        \includegraphics[width=0.19\linewidth]{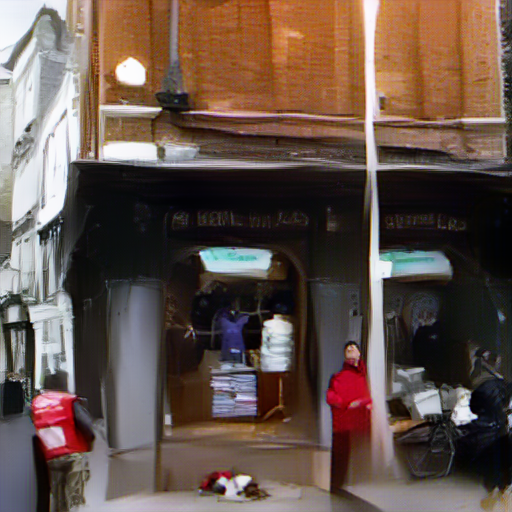}
    }
    \subfloat{
        \includegraphics[width=0.19\linewidth]{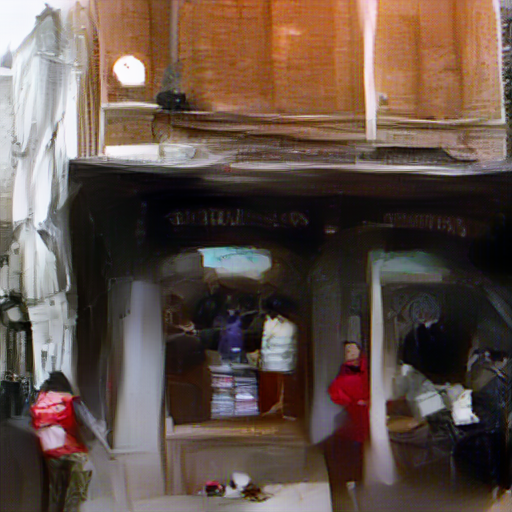}
    }
    \subfloat{
        \includegraphics[width=0.19\linewidth]{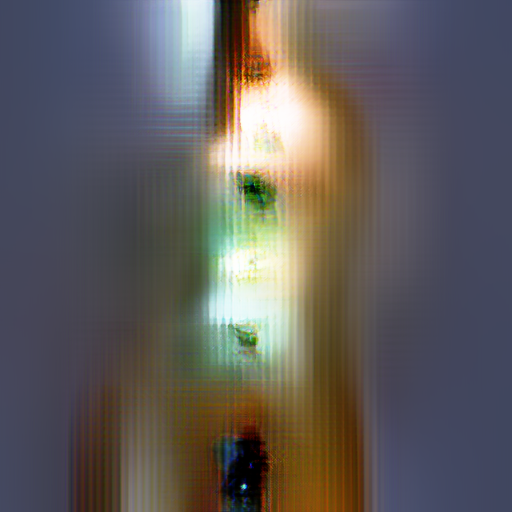}
    }
    \\
    \vspace{1mm}
    
    \setcounter{subfigure}{0}
    \subfloat{
        \includegraphics[width=0.19\linewidth]{figs/supple/inv_src/seq3_frame00050.png}
    }
    \subfloat{
        \includegraphics[width=0.19\linewidth]{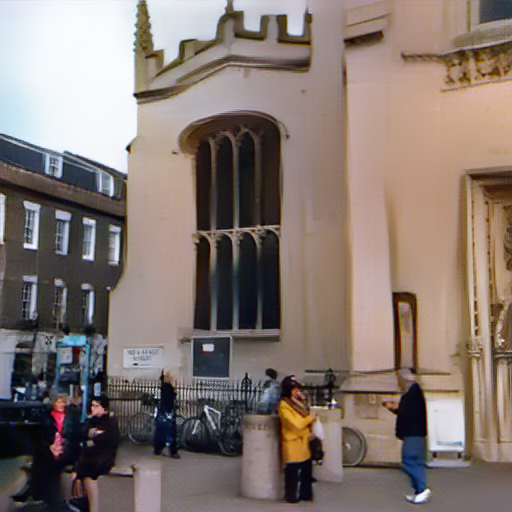}
    }
    \subfloat{
        \includegraphics[width=0.19\linewidth]{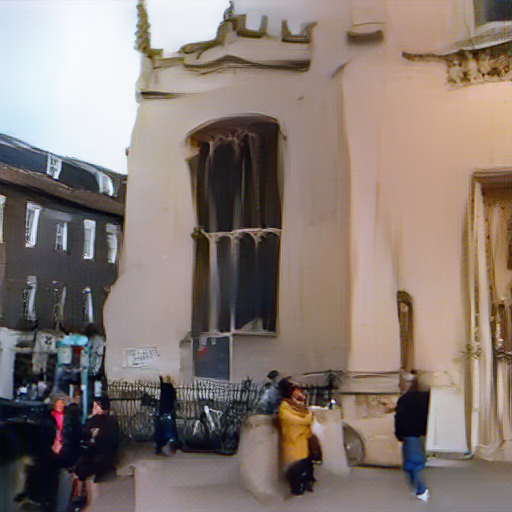}
    }
    \subfloat{
        \includegraphics[width=0.19\linewidth]{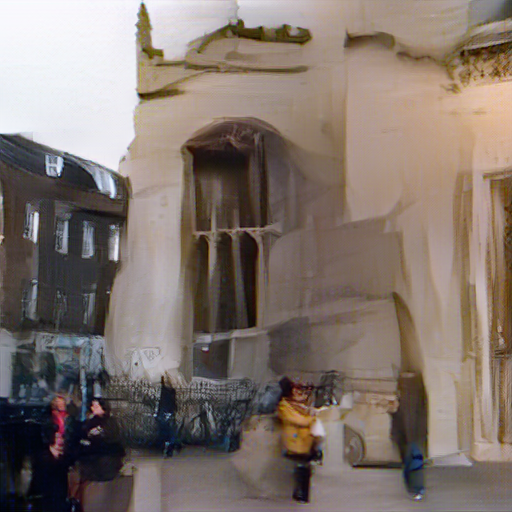}
    }
    \subfloat{
        \includegraphics[width=0.19\linewidth]{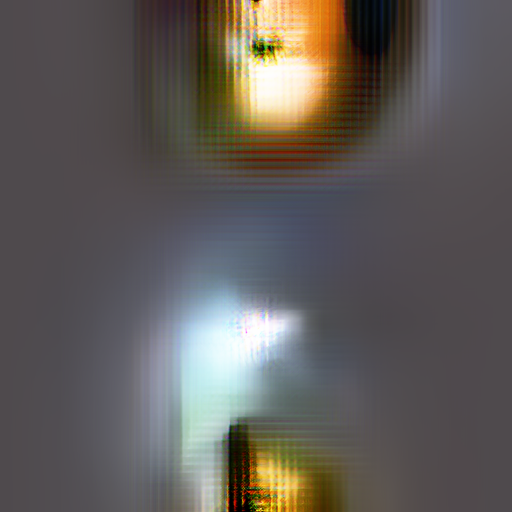}
    }
    \\
    \vspace{1mm}
    
    \setcounter{subfigure}{0}
    \subfloat{
        \includegraphics[width=0.19\linewidth]{figs/supple/inv_src/seq-01_frame-000003.color.png}
    }
    \subfloat{
        \includegraphics[width=0.19\linewidth]{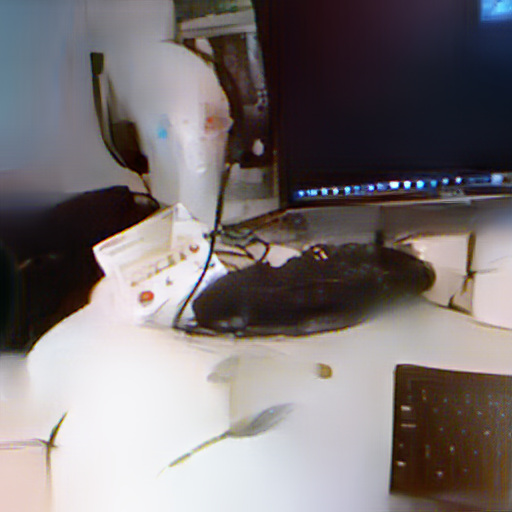}
    }
    \subfloat{
        \includegraphics[width=0.19\linewidth]{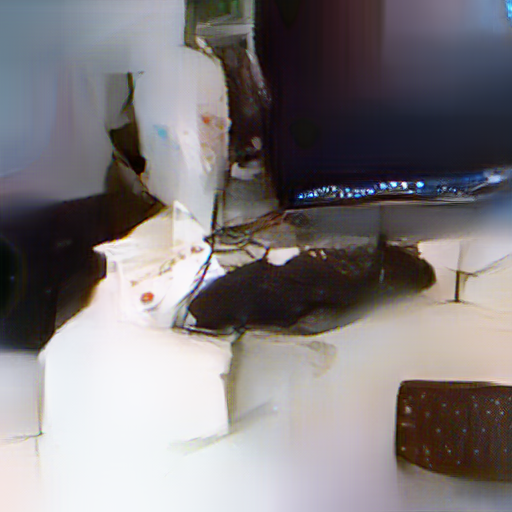}
    }
    \subfloat{
        \includegraphics[width=0.19\linewidth]{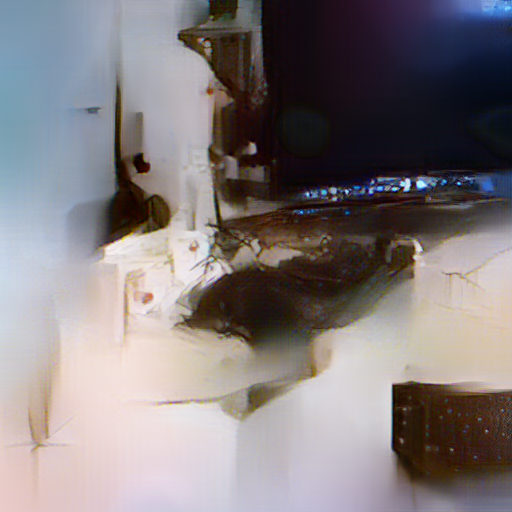}
    }
    \subfloat{
        \includegraphics[width=0.19\linewidth]{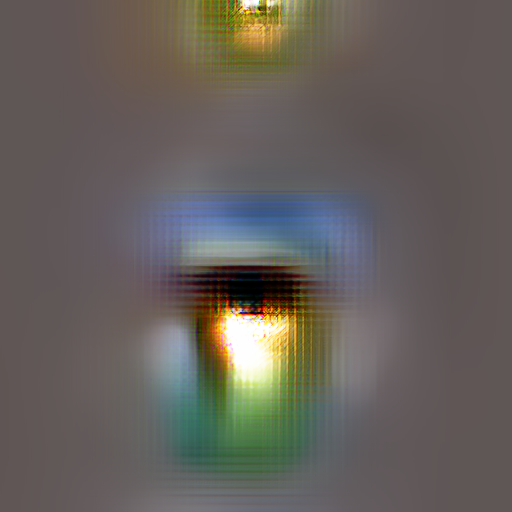}
    }
    \\
    \vspace{1mm}

    \setcounter{subfigure}{0}
    \subfloat[][Original Image]{
        \includegraphics[width=0.19\linewidth]{figs/supple/inv_src/seq-03_frame-000573.color.png}
    }
    \subfloat[][Feature Points~\cite{lowe2004sift}]{
        \includegraphics[width=0.19\linewidth]{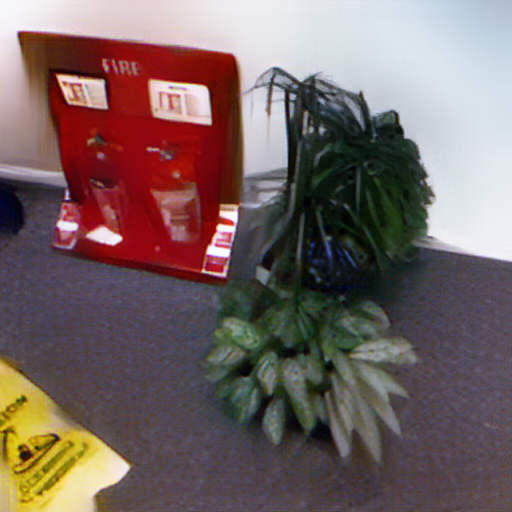}
    }
    \subfloat[][Random Lines~\cite{speciale2019queries}\figlabel{sift_ulc_inv2d}]{
        \includegraphics[width=0.19\linewidth]{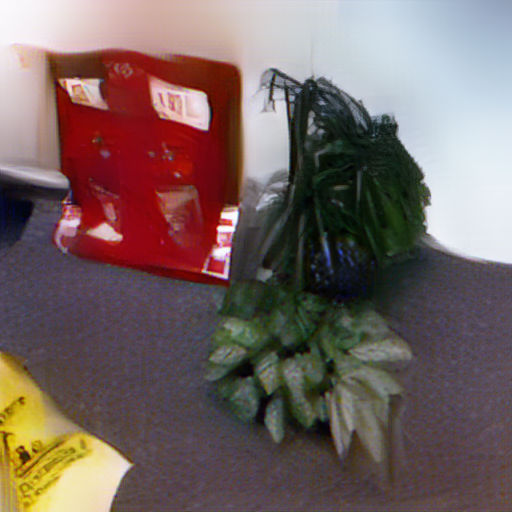}
    }
    \subfloat[][Coord. Perm.~\cite{pan2023permut}\figlabel{sift_cp_inv2d}]{
        \includegraphics[width=0.19\linewidth]{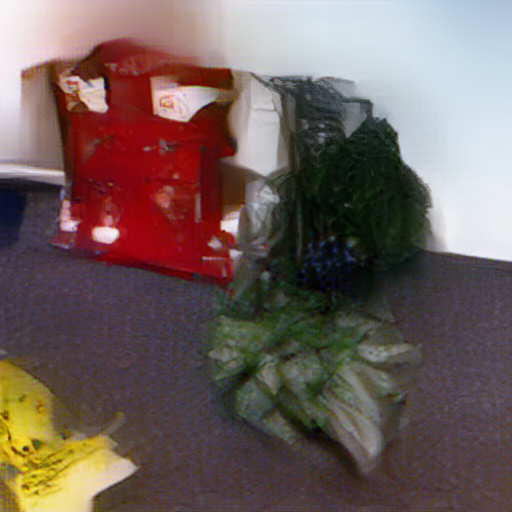}
    }
    \subfloat[][\textbf{DCL (ours)}\figlabel{sift_dcl_inv2d}]{
        \includegraphics[width=0.19\linewidth]{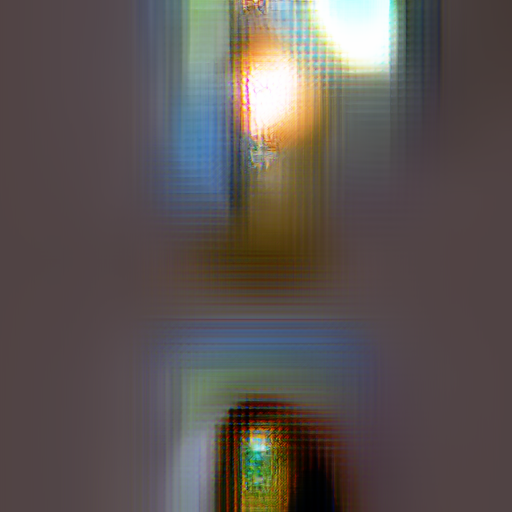}
    }

    \vspace{-2mm}
    \caption{Inversion results generated by the \textbf{InvSfM}~\cite{pittaluga2019revealing} model using \textbf{SIFT} features, conducted in the \textbf{oracle setting} with $K=20$. The reconstructed images are organized by dataset: Row 1 (Aachen), Rows 2--3 (Cambridge), and Rows 4--5 (7Scenes). Feature positions are recovered via various methods: (a) Feature Points, (b) Random Lines~\cite{speciale2019queries}, (c) Coordinate Permutation~\cite{pan2023permut}, and (d) Dual Convergent Lines (ours).}
    
    \figlabel{sift_feat_inv}
    \label{fig:sift_privacy_qualitative}
\end{figure*}


\end{document}